\author{\sc Qing-xin Meng
\orcid{0000-0003-4014-7405}
\\\small\email{qingxin6174@gmail.com}
\and\sc Jian-wei Liu
\\\small\email{liujw@cup.edu.cn}
}
\newcommand{\si}{S\textup{-I} }
\newcommand{\sii}{S\textup{-II} }
\DeclareMathOperator*{\epi}{\operatorname{epi}}
\DeclareMathOperator*{\gra}{\operatorname{gra}}
\DeclareMathOperator*{\dom}{\operatorname{dom}}
\DeclareMathOperator*{\regret}{\operatorname{regret}}
\newcommand{\regretn}[1]{\sideset{}{^{#1}}\regret}
\begin{document}
\maketitle\thispagestyle{firstpage}

\begin{abstract}\noindent
This paper studies online optimization from a high-level unified theoretical perspective. 
We not only generalize both Optimistic-DA and Optimistic-MD in normed vector space, but also unify their analysis methods for dynamic regret. 
Regret bounds are the tightest possible due to the introduction of $\phi$-convex. 
As instantiations, regret bounds of normalized exponentiated subgradient and greedy/lazy projection are better than the currently known optimal results. 
By replacing losses of online game with monotone operators, and extending the definition of regret, namely $\regretn{n}$, we extend online convex optimization to online monotone optimization. 
\end{abstract}


\section{Introduction}

The online convex optimization problem introduced by \cite{zinkevich2003online} can be regarded as a repeated game between the learner and the adversary (environment). 
At round $t$, the learner chooses a map $x_t$ from a hypothesis class $C$ as prediction, and the adversary feeds back a convex loss function $\varphi_t$, then the learner suffers an instantaneous loss $\varphi_t\left(x_t\right)$. 
The goal is to minimize 
\begin{equation}
\label{dynamic-regret}
\regret \left(z_1, z_2, \cdots, z_T\right)\coloneqq\sum_{t=1}^T \varphi_t\left(x_t\right)-\sum_{t=1}^T \varphi_t\left(z_t\right),
\end{equation}
where $z_t\in C$ represents an arbitrary reference strategy in round $t$, and $T$ is the number of rounds. 
\cref{dynamic-regret} is called the dynamic regret. 
For static regret, which often appears in the literature \citep[e.g.,][]{bianchi2006prediction, shwartz2012online, mcmahan2017survey, hazan2019introduction, orabona2020modern}, simply let $z_t\equiv z$. 
Although dynamic regret has attracted widespread attention recently \citep[e.g.,][]{hall2013dynamical, jadbabaie2015online, mokhtari2016online, zhang2018adaptive,  campolongo2021closer, kalhan2021dynamic}, it still lacks systematic research.

Generally, strategies for online optimization fall into two major families,  Mirror Descent~(MD) and Follow The Regularized Leader~(FTRL). 
MD was introduced by \cite{beck2003mirror}, and a special form of MD can be traced back to  \cite{nemirovskii1983problem}. 
FTRL was introduced by \cite{abernethy2008competing}, and its core ideas can be traced back to \cite{shwartz2006Online}. 
FTRL with surrogate linearized losses is also called Dual Averaging \citep[DA][]{nesterov2009primal}. 
An algorithm is optimistic, if the prediction of the impending loss is added to the update rule. 
The Optimistic-MD was proposed by \cite{chiang2012online}, and extended by \cite{rakhlin2013online}, who also proposed the Optimistic-FTRL. 
%
Both DA and MD can be unified via the Unified Mirror Descent (UMD) according to \cite{juditsky2019unifying}. 
However, the level of abstraction and generality of UMD is not enough. 

For strategies based on surrogate linearization losses, online convex optimization naturally extends to online monotone optimization. 
This idea was first proposed by \cite{gemp2016online} based on earlier research. 
After digging deeper, we found that the novelty of online monotone optimization is to allow the online game to abandon the concept of loss function. 
Therefore, we argue that the concept of online monotone optimization needs to be rigorously reformulated. 

This paper addresses the online optimization problem from a high-level unified theoretical perspective. 
We not only generalize both Optimistic-DA and Optimistic-MD in normed vector space, but also unify their analysis methods for dynamic regret. 
We focus on optimistic online learning since optimism is at the hub. 
For a non-optimistic version, it suffices to set the estimated term to be null, and for learning with delay, it suffices to modify the estimated term to delete the unobserved loss subgradients \citep{flaspohler2021online}.
Dynamic regret is chosen as the performance metric due to its generality. 
For static regret, it suffices to fix the reference strategies to be constant over time. 
Normed vector space is chosen over $\mathbb{R}^n$ because it shows more essential details. 
The contributions of this paper are as follows. 

\begin{itemize}
\item 
We present a unified analysis method for online optimization in normed vector space using dynamic regret as the the performance metric. 
Our analysis is based on two relaxation strategies, namely $\si$ and $\sii$ (Optimistic-MD), which are obtained through the relaxation of $S$, a two-parameter variant strategy covering Optimistic-DA. 
The analysis process relies on the generalized cosine rule and $\phi$-convex, both of which depend on the generalized Bregman divergence. 
\item 
The regret bounds are the tightest possible. 
Our analysis shows that the last term of all upper bounds is an extra subtraction term, and $\phi$-convex further tightens the upper bounds. 
Instantiations in \cref{sec:instantiations} show that the regret bounds for normalized exponentiated subgradient and greedy/lazy projection are better than the currently known optimal results. 
\item 
All strategies are not only suitable for online convex optimization, but also for online monotone optimization. 
We formalize the online monotone optimization problem, 
and propose the definition of $\regretn{n}$, the generalized version of regret, which allows the absence of losses in online game. 
This is natural and mathematically rigorous. 
\end{itemize}

\section{Preliminaries}


Let $E$ be a normed vector space over $\mathbb{R}$ and let $E^*$ be the dual space of $E$. 
We denote by $\left\lVert\cdot\right\rVert_E$ the norm of $E$, and denote by $\left\lVert\cdot\right\rVert_{E^*}$ the norm of $E^*$. 
Without causing ambiguity, the subscript of the norm is usually omitted, for example, the space in which the element is located is known. 
We denote by $\left\langle\,\cdot\,, \cdot\,\right\rangle$ the scalar product for the duality $E^*$ and $E$. 
Usually the elements with superscript~$*$ are points in $E^*$.  

We use superscript~$\star$ to denote the Fenchel conjugate of a function. 
Throughout this paper, we introduce $Q_\rho\left(x\right)=\frac{1}{2}x^2+\chi_{\left[-\rho,\rho\right]}\left(x\right)$, where $\forall \rho\in\left(0, +\infty\right]$ and $\forall x\in\mathbb{R}$, and its Fenchel conjugate is $Q_\rho^\star\left(\varkappa\right)=\frac{1}{2}\varkappa^2-\frac{1}{2}\left(\left\lvert\varkappa\right\rvert-\rho\right)_+^2$, where $\forall \varkappa\in\mathbb{R}$, $x_+\coloneqq\max\left\{x, 0\right\}$. 

See \cref{BoCA} for basis of convex analysis. 

Next, we present definitions of generalized Bregman divergence and subdifferential with symmetrical beauty. 

\begin{definition}[Generalized Bregman Divergence]
\label{def:Bregman}
The generalized Bregman divergence w.r.t. a proper function $\varphi$ is defined as 
\begin{equation*}
B_{\varphi}\left(x, y^*\right)\coloneqq\varphi\left(x\right)+\varphi^\star\left(y^*\right)-\left\langle y^*, x\right\rangle,\quad\forall\left(x, y^*\right)\in E\times E^*.
\end{equation*}
\end{definition}

\begin{definition}[Subdifferential]
\label{def:Subdifferential}
The subdifferential of a proper function $\varphi$ at $x$ is 
\begin{equation*}
\begin{aligned}
\partial\varphi\left(x\right)
\coloneqq\left\{x^*\in E^*\left|\,B_{\varphi}\left(x, x^*\right)=0\right.\right\}.
\end{aligned}
\end{equation*}
Any element in $\partial\varphi\left(x\right)$ is a subgradient of $\varphi$ at $x$, denoted by $x^\varphi$. 
\end{definition}

\begin{remark}
This remark illustrates the motivation for the above two definitions. 
The subdifferential of $\varphi$ at $x$ is usually defined as follows \citep[Section~2.4 of][]{zalinescu2002convex}, 
\begin{equation*}
\begin{aligned}
\left\{x^*\in E^*\left|\,\varphi\left(y\right)-\varphi\left(x\right)\geqslant\left\langle x^*,y-x\right\rangle,\,\forall y\in E\right.\right\},
\end{aligned}
\end{equation*}
which is equivalent to $\left\{x^*\in E^*\left|\,\varphi^\star\left(x^*\right)+\varphi\left(x\right)=\left\langle x^*,x\right\rangle\right.\right\}$ according to the definition of Fenchel conjugate. 
Generally, the definition of Bregman divergence is associated with a continuously differentiable function. 
Definition~1 of \cite{pooria2020modular} extends the definition of Bregman divergence by replacing continuously differentiable with directionally differentiable. 
We first consider utilizing subdifferentiable to define Bregman divergence, and then we notice that $\forall y^*\in\partial\varphi\left(y\right)$, 
\[\varphi\left(x\right)-\varphi\left(y\right)-\left\langle y^*, x-y\right\rangle=\varphi\left(x\right)+\varphi^\star\left(y^*\right)-\left\langle y^*, x\right\rangle,\]
which implies that a symmetrical aesthetic expression can be used to extend the definition of Bregman divergence and equivalently describe the definition of subdifferential.  
\end{remark}

Similar to Lemma~3.1 of \cite{chen1993convergence}, we have the following generalized cosine rule, which plays a key role in the derivation of regret upper bounds. 

\begin{lemma}[Generalized Cosine Rule]
\label{Generalized-cosine-law}
If $\left(y, y^*\right)\in\gra\partial\varphi$, then 
\[B_{\varphi}\left(x, y^*\right)+B_{\varphi}\left(y, z^*\right)-B_{\varphi}\left(x, z^*\right)=\left\langle z^*-y^*, x-y\right\rangle.\]
\end{lemma}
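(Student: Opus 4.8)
The plan is to verify the identity by direct expansion, since everything reduces to the definitions of $B_{\varphi}$ and $\partial\varphi$ together with the bilinearity of the pairing $\left\langle\,\cdot\,,\cdot\,\right\rangle$. First I would substitute \cref{def:Bregman} into each of the three terms on the left-hand side, writing out
\[
B_{\varphi}\left(x, y^*\right)=\varphi\left(x\right)+\varphi^\star\left(y^*\right)-\left\langle y^*, x\right\rangle,\quad
B_{\varphi}\left(y, z^*\right)=\varphi\left(y\right)+\varphi^\star\left(z^*\right)-\left\langle z^*, y\right\rangle,
\]
and similarly $B_{\varphi}\left(x, z^*\right)=\varphi\left(x\right)+\varphi^\star\left(z^*\right)-\left\langle z^*, x\right\rangle$. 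Forming $B_{\varphi}\left(x, y^*\right)+B_{\varphi}\left(y, z^*\right)-B_{\varphi}\left(x, z^*\right)$, the paired terms $\varphi\left(x\right)$ and $\varphi^\star\left(z^*\right)$ cancel by sign, leaving a residual that contains the two function values $\varphi^\star\left(y^*\right)$ and $\varphi\left(y\right)$ together with the four scalar products $-\left\langle y^*, x\right\rangle$, $-\left\langle z^*, y\right\rangle$, and $+\left\langle z^*, x\right\rangle$.

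The crucial step is to consume the hypothesis $\left(y, y^*\right)\in\gra\partial\varphi$. By \cref{def:Subdifferential} this is exactly the statement $B_{\varphi}\left(y, y^*\right)=0$, i.e.\ $\varphi\left(y\right)+\varphi^\star\left(y^*\right)=\left\langle y^*, y\right\rangle$. I would use this to eliminate $\varphi^\star\left(y^*\right)$, replacing it by $\left\langle y^*, y\right\rangle-\varphi\left(y\right)$; the $\varphi\left(y\right)$ contributed by $B_{\varphi}\left(y, z^*\right)$ then cancels the $-\varphi\left(y\right)$ just introduced, so \emph{all} function values disappear and only scalar products survive. What remains is $\left\langle y^*, y\right\rangle-\left\langle y^*, x\right\rangle-\left\langle z^*, y\right\rangle+\left\langle z^*, x\right\rangle$, which regroups by bilinearity into $\left\langle z^*-y^*, x-y\right\rangle$, the claimed right-hand side.

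There is no genuine analytic obstacle here; the content is purely algebraic. The one point deserving attention is that, after cancellation, the term $\varphi^\star\left(y^*\right)$ is the sole asymmetric residual that does not vanish on its own, and it is precisely this term that the membership $\left(y, y^*\right)\in\gra\partial\varphi$ is designed to resolve. This clarifies \emph{why} the hypothesis is imposed only on the pair $\left(y, y^*\right)$ and not on $z^*$: the conjugate value $\varphi^\star\left(z^*\right)$ cancels automatically, whereas $\varphi^\star\left(y^*\right)$ must be converted into an inner product via the subdifferential identity. This role mirrors the classical cosine rule, where the analogous hypothesis that $y$ be the Bregman projection supplies the orthogonality relation that linearizes the divergence difference.
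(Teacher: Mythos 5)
Your proposal is correct and follows essentially the same route as the paper's proof: expand all three divergences via \cref{def:Bregman}, cancel $\varphi\left(x\right)$ and $\varphi^\star\left(z^*\right)$, and then use the identity $\varphi\left(y\right)+\varphi^\star\left(y^*\right)=\left\langle y^*, y\right\rangle$ supplied by $\left(y, y^*\right)\in\gra\partial\varphi$ to reduce everything to scalar products. Your closing observation about why the hypothesis is needed only for the pair $\left(y, y^*\right)$ is a nice addition, but the argument itself matches the paper's.
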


\begin{remark}
We call \cref{Generalized-cosine-law} the generalized cosine rule, because if $E$ is a Hilbert space and $\varphi=\frac{1}{2}\left\lVert \cdot\right\rVert^2$, \cref{Generalized-cosine-law} is instantiated as the ordinary cosine rule.  
Indeed, under the above settings, $B_{\frac{1}{2}\left\lVert \cdot\right\rVert^2}\left(x, y^*\right)=\frac{1}{2}\left\lVert  x-y^*\right\rVert^2$, and $y^*=y$ since $\partial\left(\frac{1}{2}\left\lVert \cdot\right\rVert^2\right)$ is the identity map \citep[Proposition~3.6 of][]{chidume2009geometric}. 
Thus, we have 
\[\left\lVert  x-y\right\rVert^2+\left\lVert  y-z^*\right\rVert^2-\left\lVert  x-z^*\right\rVert^2=2\left\langle z^*-y, x-y\right\rangle.\]
\end{remark}

The following lemma plays a pivot role in the equivalent transformation of update rules. 

\begin{lemma}
\label{Symmetry-of-subdifferential}
If $\varphi$ is proper, convex and lower semicontinuous, then
\[x^*\in\partial\varphi\left(x\right)\Longleftrightarrow x\in\partial\varphi^\star\left(x^*\right),\]
and 
\begin{equation*}
\begin{aligned}
\partial\varphi\left(x\right)=\arg\max_{x^*\in E^*}\left\langle x^*,x\right\rangle-\varphi^\star\left(x^*\right),\quad
\partial\varphi^\star\left(x^*\right)=\arg\max_{x\in E}\left\langle x^*,x\right\rangle-\varphi\left(x\right).
\end{aligned}
\end{equation*}
\end{lemma}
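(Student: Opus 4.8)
The plan is to reduce the entire statement to the Fenchel--Young inequality together with the Fenchel--Moreau biconjugation theorem, using nothing beyond the definitions supplied by \cref{def:Bregman} and \cref{def:Subdifferential}. First I would unfold the subdifferential: $x^*\in\partial\varphi(x)$ means $B_\varphi(x,x^*)=\varphi(x)+\varphi^\star(x^*)-\langle x^*,x\rangle=0$, i.e. the Fenchel equality $\varphi(x)+\varphi^\star(x^*)=\langle x^*,x\rangle$. Since $\varphi^\star(x^*)=\sup_{y}\bigl(\langle x^*,y\rangle-\varphi(y)\bigr)\geqslant\langle x^*,x\rangle-\varphi(x)$, the Fenchel--Young inequality $B_\varphi\geqslant 0$ holds for every pair, so this Fenchel equality is equivalently the assertion that Fenchel--Young is tight. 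This observation is the common hinge for all three claims.

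For the equivalence $x^*\in\partial\varphi(x)\Leftrightarrow x\in\partial\varphi^\star(x^*)$, I would write the right-hand condition the same way: regarding $x$ as an element of the bidual $E^{**}$ via the canonical embedding, $x\in\partial\varphi^\star(x^*)$ reads $B_{\varphi^\star}(x^*,x)=\varphi^\star(x^*)+\varphi^{\star\star}(x)-\langle x^*,x\rangle=0$. The two conditions differ only in that the first contains $\varphi(x)$ and the second contains $\varphi^{\star\star}(x)$. Because $\varphi$ is proper, convex and lower semicontinuous, the Fenchel--Moreau theorem gives $\varphi^{\star\star}=\varphi$, so the two conditions coincide and the equivalence follows.

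For the two $\arg\max$ identities I would argue that a maximizer is exactly a point of tightness in the relevant Fenchel--Young inequality. By Fenchel--Moreau, $\sup_{x^*}\bigl(\langle x^*,x\rangle-\varphi^\star(x^*)\bigr)=\varphi^{\star\star}(x)=\varphi(x)$, so $x^*$ attains this supremum iff $\langle x^*,x\rangle-\varphi^\star(x^*)=\varphi(x)$, which is precisely the Fenchel equality, i.e. $x^*\in\partial\varphi(x)$. Symmetrically, by the definition of the conjugate $\sup_{x}\bigl(\langle x^*,x\rangle-\varphi(x)\bigr)=\varphi^\star(x^*)$, hence $x$ attains it iff $\langle x^*,x\rangle-\varphi(x)=\varphi^\star(x^*)$, again the Fenchel equality, which by the equivalence just established means $x\in\partial\varphi^\star(x^*)$. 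Edge cases such as an empty subdifferential, an unattained supremum, or $\varphi(x)=+\infty$ cause no trouble, since both sides are then empty simultaneously.

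The main obstacle I anticipate is not any single estimate but the correct handling of the bidual bookkeeping in the symmetry statement: $\partial\varphi^\star(x^*)$ a priori lives in $E^{**}$, whereas the lemma asserts membership of $x\in E$, so I must verify that the canonical pairing of $x\in E\hookrightarrow E^{**}$ with $x^*$ agrees with $\langle x^*,x\rangle$ and that $\varphi^{\star\star}$ is evaluated at the image of $x$. Making this identification rigorous is exactly where lower semicontinuity enters through Fenchel--Moreau, and it is the only place the full hypothesis on $\varphi$ is used.
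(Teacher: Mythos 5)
Your proposal is correct and follows essentially the same route as the paper's proof: both reduce subdifferential membership to the Fenchel equality $\varphi\left(x\right)+\varphi^\star\left(x^*\right)=\left\langle x^*,x\right\rangle$, invoke Fenchel--Moreau ($\varphi^{\star\star}=\varphi$) to obtain the symmetry $x^*\in\partial\varphi\left(x\right)\Longleftrightarrow x\in\partial\varphi^\star\left(x^*\right)$, and characterize the $\arg\max$ sets as exactly the points where the Fenchel--Young inequality is tight. The only cosmetic difference is that the paper spells out the empty/nonempty case analysis for the $\arg\max$ identities at length, which your pointwise tightness argument handles in one line.
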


Now we define a generalized version of strongly convex --- $\phi$-convex. 

\begin{definition}[$\phi$-Convex]
\label{phi-convex}
A function $\varphi$ is $\phi$-convex if   
\begin{equation*}
B_{\varphi}\left(x, y^*\right)\geqslant\phi\left(\left\lVert x-y\right\rVert\right),\quad\forall x\in E,\quad\forall \left(y, y^*\right)\in\gra\partial\varphi,
\end{equation*}
where $\phi$ is convex and $\phi\geqslant 0$, $\phi\left(0\right)=0$. 
\end{definition} 

\begin{lemma}
\label{affine-convex}
Let $\alpha$ be an affine function. If $\varphi$ is $\phi$-convex, then $\varphi+\alpha$ is $\phi$-convex.
\end{lemma}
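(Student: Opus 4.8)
The plan is to exploit the fact that adding an affine function merely translates the dual argument of the generalized Bregman divergence while leaving the displacement $x-y$ untouched. First I would write the affine function in canonical form $\alpha\left(x\right)=\left\langle a^*, x\right\rangle+b$ for a fixed $a^*\in E^*$ and $b\in\mathbb{R}$, and abbreviate $\psi\coloneqq\varphi+\alpha$. The whole argument then reduces to a single transformation identity for $B_\psi$.

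Next I would compute the Fenchel conjugate of $\psi$ directly from its definition: absorbing the linear term into the supremum variable and factoring out the constant gives $\psi^\star\left(y^*\right)=\varphi^\star\left(y^*-a^*\right)-b$. Substituting this into \cref{def:Bregman} and cancelling both the constant $b$ and the term $\left\langle a^*, x\right\rangle$ against $-\left\langle y^*, x\right\rangle$ yields the key identity
\[
B_\psi\left(x, y^*\right)=B_\varphi\left(x, y^*-a^*\right),\quad\forall\left(x, y^*\right)\in E\times E^*,
\]
which is the heart of the proof.

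This identity immediately settles the behaviour of the subdifferential graph. By \cref{def:Subdifferential}, $\left(y, y^*\right)\in\gra\partial\psi$ iff $B_\psi\left(y, y^*\right)=0$, which by the identity is equivalent to $B_\varphi\left(y, y^*-a^*\right)=0$, i.e.\ $\left(y, y^*-a^*\right)\in\gra\partial\varphi$. Hence, fixing any $\left(y, y^*\right)\in\gra\partial\psi$ and any $x\in E$, I would apply the $\phi$-convexity of $\varphi$ to the pair $\left(y, y^*-a^*\right)\in\gra\partial\varphi$ to obtain $B_\varphi\left(x, y^*-a^*\right)\geqslant\phi\left(\left\lVert x-y\right\rVert\right)$. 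Combining this with the identity gives $B_\psi\left(x, y^*\right)=B_\varphi\left(x, y^*-a^*\right)\geqslant\phi\left(\left\lVert x-y\right\rVert\right)$, which is exactly the defining inequality for $\psi$ to be $\phi$-convex, with the very same $\phi$ (so its required properties---convexity, nonnegativity, and $\phi\left(0\right)=0$---carry over unchanged).

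I do not anticipate a genuine obstacle: every step is a short conjugate or Bregman computation, and no regularity beyond what $\phi$-convexity already presupposes is invoked. The only point demanding mild care is the conjugate formula, where one must track the constant $b$ and verify that it cancels against the constant appearing in $\psi\left(x\right)$; this cancellation is precisely what guarantees that the argument of $\phi$ remains $\left\lVert x-y\right\rVert$ and is not shifted by the affine perturbation.
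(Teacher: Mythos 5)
Your proposal is correct and follows essentially the same route as the paper's proof: both write the affine perturbation as a linear functional plus a constant, establish the identity $B_{\varphi+\alpha}\left(x, y^*\right)=B_{\varphi}\left(x, y^*-a^*\right)$ via the conjugate formula, translate the subdifferential graph accordingly, and then invoke the $\phi$-convexity of $\varphi$. The only cosmetic difference is that you derive the graph translation from the Bregman identity together with \cref{def:Subdifferential}, whereas the paper directly uses the sum rule $\partial\left(\varphi+\alpha\right)=\partial\varphi+\ell$; the content is identical.
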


\section{Online Convex Optimization}

Let $E$ be a normed vector space over $\mathbb{R}$ and let $C\neq\varnothing$ be a closed convex subset of $E$. 
The online convex optimization problem can be formalized as follows. At round $t$, 
\begin{equation*}
\begin{aligned}
&\text{the player chooses }x_t\in C\text{ according to some algorithm, }\\ 
&\text{the adversary (environment) feeds back a loss function }\varphi_t \text{ with }\dom\partial\varphi_t\supset C. 
\end{aligned}
\end{equation*}
Although we did not specify that $\varphi_t$ is convex, according to \cref{subdifferential-convexity}, $\varphi_t+\chi_C$ is convex and lower semicontinuous, where $\chi_C$ represents the $0 / +\infty$ indicator w.r.t. $\chi_C\left(x\right)=0$ iff $x\in C$. 
We choose the dynamic regret (Equation~\ref{dynamic-regret}) as the performance metric. 
For static regret, it suffices to fix the reference strategies to be constant over time. 

\section{Strategies}

In this section, we derive two relaxation variant forms, namely $\si$ and $\sii$, by relaxing the strategy $S$, a two-parameter variant covering Optimistic-DA. 
The inclusion relationship between strategies in this section is as follows, 
\begin{equation*}
\begin{aligned}
\si\supset S\supset\begin{cases}
S\big|_{\theta_t\equiv 1} = \text{Optimistic-DA,} \\
S\big|_{\eta_t\equiv 1}\subset\sii = \text{Optimistic-MD.}
\end{cases}
\end{aligned}
\end{equation*}

Next, we elaborate on their relationship. 
We start with the definition of $S$. 

The strategy $S$ can be formalized as the following two-parameter update rule, 
\begin{equation*}
\begin{aligned}
x_t\in\arg\min_{x\in E}\left\langle \sum_{i=1}^{t-1}\theta_i x_i^* +\theta_t \widehat{x}_t^* ,\, x\right\rangle+\frac{1}{\eta_t}B_{\psi}\left(x, a^\psi\right)
=\partial\psi^\star\left(a^\psi-\eta_t\sum_{i=1}^{t-1}\theta_i x_i^* -\eta_t\theta_t \widehat{x}_t^*\right),
\end{aligned}
\end{equation*}
where $\widehat{x}_t^*\in E^*$ is the estimated linear loss function corresponding to $x_t^*$, 
$\eta_t>0$ is the cumulative parameter since it acts on the cumulative quantity $\sum_{i=1}^{t-1}\theta_i x_i^*$, 
$\theta_t>0$ is the instantaneous parameter since it acts on the instantaneous quantity $x_t^*$, 
$\psi\colon\! C\rightarrow\mathbb{R}$ is $
\phi$-convex, 
$\left(a,a^\psi\right)\in\gra\partial\psi$, 
and the equality follows from \cref{Symmetry-of-subdifferential}.  
The equivalent two-step iterative form can be rearranged as follows, 
\begin{equation*}
\begin{aligned}
\widetilde{x}_{t}^\psi=a^\psi-\eta_t\sum_{i=1}^{t-1}\theta_i x_i^*, \quad
x_{t}\in\partial\psi^\star\left(\widetilde{x}_{t}^\psi-\eta_t\theta_{t} \widehat{x}_{t}^*\right),
\end{aligned}
\end{equation*}
where $\widetilde{x}_{t}^\psi$ is the intermediate variable. The corresponding function form of $S$ can be written as 
$\left(x_{t}, \widetilde{x}_{t}^\psi\right)=S\left(\widehat{x}_{t}^*, x_{t-1}^*; \eta_t, \theta_{t}\right)$, $\widetilde{x}_{1}^\psi=a^\psi$. 

Most of the literature explores strategies in the form of $\arg\min$, this paper elaborates from the perspective of multivalued maps, which makes the analysis simpler. 

\begin{remark}
If $\theta_t\equiv\theta$, then the strategy $S$ degenerates into Optimistic-DA, abbreviated as $S\big|_{\theta_t\equiv 1}$. 
Indeed, it suffices to consider the case $\theta_t\equiv 1$, that is,  
\begin{equation*}
\begin{aligned}
x_t\in\arg\min_{x\in E} \left\langle\sum_{i=1}^{t-1}x_i^* + \widehat{x}_t^*, x\right\rangle+\frac{1}{\eta_t}B_{\psi}\left(x, a^\psi\right). 
\end{aligned}
\end{equation*}
\end{remark}

The type-I relaxation variant form of $S$ (named as $\si$) is formalized as 
\begin{equation*}
\begin{aligned}
\widecheck{x}_{t}^\psi&=a^\psi-\eta_t\sum_{i=1}^{t-1}\theta_i x_i^*, 
\quad\widetilde{x}_{t}\in\partial\psi^\star\left(\widecheck{x}_{t}^\psi\right), 
\quad\widetilde{x}_{t}^\psi\in\partial\psi\left(\widetilde{x}_{t}\right), \\
x_{t}&\in\partial\psi^\star\left(\widetilde{x}_{t}^\psi-\eta_t\theta_{t} \widehat{x}_{t}^*\right).
\end{aligned}
\end{equation*}
The function form of $\si$ can be written as 
$\left(x_{t}, \widecheck{x}_{t}^\psi, \widetilde{x}_{t}^\psi\right)=\si\left(\widehat{x}_{t}^*, x_{t-1}^*; \eta_t, \theta_{t}\right)$, $\widetilde{x}_{1}^\psi=a^\psi$. 

\begin{remark}
$\si$ allows $\widetilde{x}_{t}^\psi\neq \widecheck{x}_{t}^\psi$, which makes $S$ a special case of $\si$. 
Indeed, $\widetilde{x}_{t}^\psi$ and $\widecheck{x}_{t}^\psi$ are both elements of $\partial\psi\left(\widetilde{x}_{t}\right)$ according to \cref{Symmetry-of-subdifferential}. 
\end{remark}

If $\eta_t\equiv\eta$, then the strategy $S$ degenerates into the following form  (without loss of generality, one can set $\eta_t\equiv 1$, and denote it as $S\big|_{\eta_t\equiv 1}$), 
\begin{equation*}
\begin{aligned}
\widetilde{x}_{t}^\psi=\widetilde{x}_{t-1}^\psi-\theta_{t-1} x_{t-1}^*, \quad 
\quad x_{t}\in\partial\psi^\star\left(\widetilde{x}_{t}^\psi-\theta_{t} \widehat{x}_{t}^*\right).
\end{aligned}
\end{equation*}
The type-II relaxation variant form of $S$ (named as $\sii$) can be formalized as 
\begin{equation*}
\begin{aligned}
\widecheck{x}_{t}^\psi&=\widetilde{x}_{t-1}^\psi-\theta_{t-1} x_{t-1}^*, 
\quad\widetilde{x}_{t}\in\partial\psi^\star\left(\widecheck{x}_{t}^\psi\right), 
\quad\widetilde{x}_{t}^\psi\in\partial\psi\left(\widetilde{x}_{t}\right), \quad \widetilde{x}_{1}^\psi=a^\psi, \\
x_{t}&\in\partial\psi^\star\left(\widetilde{x}_{t}^\psi-\theta_{t} \widehat{x}_{t}^*\right).
\end{aligned}
\end{equation*}
The function form of $\sii$ can be written as 
$\left(x_{t}, \widecheck{x}_{t}^\psi, \widetilde{x}_{t}^\psi\right)=\sii\left(\widehat{x}_{t}^*, x_{t-1}^*; \theta_{t}\right)$, $\widetilde{x}_{1}^\psi=a^\psi$. 

\begin{remark}
$\sii$ allows $\widetilde{x}_{t}^\psi\neq \widecheck{x}_{t}^\psi$, which makes $S\big|_{\eta_t\equiv 1}$ a special case of $\sii$. 
In fact, $\sii$ is Optimistic-MD in normed vector space. 
\end{remark}

\section{Regret Analysis}

In this section, we use a unified analysis method to prove the regret upper bounds for $S$, $\si$ and $\sii$. 
We start with the generalized strategy $\si$. 

\begin{theorem}[Dynamic Regret for $\si$]
\label{S-I}
If $\partial\psi\left(C\right)=E^*$ or $C$ is compact, then $\forall a\in C$,  $\si$ enjoys the following dynamic regret upper bound, 
\begin{equation*}
\begin{aligned}
\regret\left(z_1,z_2,\cdots,z_T\right)
\leqslant\ &\sum_{t=1}^{T}\frac{1}{\eta_t\theta_t}\left[B_{\psi}\left(z_t, \widecheck{x}_t^\psi\right)-B_{\psi}\left(z_t, a^\psi+\frac{\eta_t}{\eta_{t+1}}\left(\widecheck{x}_{t+1}^\psi-a^\psi\right)\right)\right] \\
&+\sum_{t=1}^{T}\frac{1}{\eta_t\theta_t}\left[B_{\psi}\left(X_{t+1}, \widetilde{x}_t^\psi\right)-B_{\psi}\left(X_{t+1}, \widecheck{x}_t^\psi\right)\right] \\
&+\sum_{t=1}^{T}\frac{1}{\eta_t\theta_t}\phi^\star\left(\eta_t\theta_t\left\lVert x_t^*-\widehat{x}_t^*\right\rVert\right)
-\sum_{t=1}^{T}\frac{1}{\eta_t\theta_t}B_{\psi}\left(x_{t}, \widetilde{x}_{t}^\psi\right), \quad\forall z_t\in C,
\end{aligned}
\end{equation*}
where $\left(X_{t+1}, \frac{1}{\eta_{t+1}}\left(\widetilde{x}_{t+1}^{\psi}-a^{\psi}\right)\right)\in\gra\partial\frac{1}{\eta_t}\left(\psi-a^\psi\right)$. 
\end{theorem}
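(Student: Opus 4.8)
The plan is to reduce the dynamic regret to a sum of linearized instantaneous terms and then to account for each linearized term by two applications of the generalized cosine rule (\cref{Generalized-cosine-law}), one for the rescaled potential and one for $\psi$ itself, followed by the duality between $\phi$-convexity and a $\phi^\star$-type smoothness of the conjugate. First I would linearize: for each $t$ choose $x_t^*\in\partial\varphi_t\left(x_t\right)$, so that $B_{\varphi_t}\left(x_t,x_t^*\right)=0$ by \cref{def:Subdifferential}, while $B_{\varphi_t}\left(z_t,x_t^*\right)\geqslant 0$ by Fenchel--Young; subtracting gives $\varphi_t\left(x_t\right)-\varphi_t\left(z_t\right)\leqslant\left\langle x_t^*,x_t-z_t\right\rangle$, hence $\regret\left(z_1,\dots,z_T\right)\leqslant\sum_{t=1}^{T}\left\langle x_t^*,x_t-z_t\right\rangle$. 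It then suffices to bound $\eta_t\theta_t\left\langle x_t^*,x_t-z_t\right\rangle$ by the bracketed quantities at round $t$. I split $x_t-z_t=\left(x_t-\widetilde x_t\right)+\left(\widetilde x_t-z_t\right)$, calling the two pieces the optimistic step and the comparator step, and I abbreviate $x_t^\psi\coloneqq\widetilde x_t^\psi-\eta_t\theta_t\widehat x_t^*$, so that $\left(x_t,x_t^\psi\right)\in\gra\partial\psi$ by \cref{Symmetry-of-subdifferential}.

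For the comparator step I would pass to the rescaled, shifted potential $f_t\coloneqq\frac1{\eta_t}\left(\psi-a^\psi\right)$, which is $\phi$-convex by \cref{affine-convex} and satisfies $\eta_t B_{f_t}\left(x,\xi\right)=B_\psi\left(x,a^\psi+\eta_t\xi\right)$. Setting $u_t^*\coloneqq\frac1{\eta_t}\left(\widecheck x_t^\psi-a^\psi\right)=-\sum_{i<t}\theta_i x_i^*$, the leader recursion yields $\theta_t x_t^*=u_t^*-u_{t+1}^*$, and $\left(\widetilde x_t,u_t^*\right)\in\gra\partial f_t$ because $\widecheck x_t^\psi\in\partial\psi\left(\widetilde x_t\right)$. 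Applying \cref{Generalized-cosine-law} to $f_t$ anchored at $\left(\widetilde x_t,u_t^*\right)$ with the two dual points $u_t^*,u_{t+1}^*$ converts $\eta_t\theta_t\left\langle x_t^*,\widetilde x_t-z_t\right\rangle$ into the first bracketed difference (term~1), $B_\psi\left(z_t,\widecheck x_t^\psi\right)-B_\psi\left(z_t,q_t\right)$ with $q_t\coloneqq a^\psi+\frac{\eta_t}{\eta_{t+1}}\left(\widecheck x_{t+1}^\psi-a^\psi\right)=\widecheck x_t^\psi-\eta_t\theta_t x_t^*$, together with one residual $B_\psi\left(\widetilde x_t,q_t\right)$.

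For the optimistic step I would apply \cref{Generalized-cosine-law} to $\psi$ anchored at $\left(x_t,x_t^\psi\right)$; using $B_\psi\left(\widetilde x_t,\widetilde x_t^\psi\right)=0$ this turns $\eta_t\theta_t\left\langle\widehat x_t^*,x_t-\widetilde x_t\right\rangle=\left\langle\widetilde x_t^\psi-x_t^\psi,x_t-\widetilde x_t\right\rangle$ into $-B_\psi\left(\widetilde x_t,x_t^\psi\right)-B_\psi\left(x_t,\widetilde x_t^\psi\right)$, whose second summand is precisely the subtraction term~4. The point where the tightness is won is that I would \emph{not} bound the prediction-error inner product $\eta_t\theta_t\left\langle x_t^*-\widehat x_t^*,x_t-\widetilde x_t\right\rangle$ by a crude Fenchel--Young step; instead I would recombine it with the comparator residual $B_\psi\left(\widetilde x_t,q_t\right)$ and with $-B_\psi\left(\widetilde x_t,x_t^\psi\right)$. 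Using $\left(x_t,x_t^\psi\right)\in\gra\partial\psi$ and the symmetry $B_\psi\left(x,y^*\right)=B_{\psi^\star}\left(y^*,x\right)$, a short conjugate manipulation collapses these three contributions into $B_\psi\left(x_t,q_t\right)$ plus the discrepancy pairing $\left\langle\widecheck x_t^\psi-\widetilde x_t^\psi,x_t-\widetilde x_t\right\rangle$, which vanishes when the relaxation is inactive. To produce term~3 I would then use the dual form of $\phi$-convexity: for any $v^*$ and any $\left(x,u^*\right)\in\gra\partial\psi$, the variational identity $B_\psi\left(x,v^*\right)=\sup_{x'}\left[\left\langle v^*-u^*,x'-x\right\rangle-B_\psi\left(x',u^*\right)\right]$ combined with \cref{phi-convex} gives $B_\psi\left(x,v^*\right)\leqslant\phi^\star\left(\left\lVert v^*-u^*\right\rVert\right)$; applied to $v^*=\widetilde x_t^\psi-\eta_t\theta_t x_t^*$ this yields exactly $\phi^\star\left(\eta_t\theta_t\left\lVert x_t^*-\widehat x_t^*\right\rVert\right)$.

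Finally I would route the remaining relaxation discrepancy into term~2 through the auxiliary point $X_{t+1}$, whose defining relation $\left(X_{t+1},\frac1{\eta_{t+1}}\left(\widetilde x_{t+1}^\psi-a^\psi\right)\right)\in\gra\partial f_t$ is the $\widetilde{\,\cdot\,}$-analogue of the anchor $\left(\widetilde x_t,u_t^*\right)$ used for the comparator step (its existence being guaranteed by $\partial\psi\left(C\right)=E^*$ or by compactness of $C$). Writing $B_\psi\left(x_t,q_t\right)$ as $B_\psi\left(x_t,v^*\right)$ plus a term carrying the dual gap $\widecheck x_t^\psi-\widetilde x_t^\psi$, and summing over $t$ with the telescoping supplied by the $\frac{\eta_t}{\eta_{t+1}}$ rescaling and the initialization $\widetilde x_1^\psi=a^\psi$, the discrepancy contributions should consolidate into $B_\psi\left(X_{t+1},\widetilde x_t^\psi\right)-B_\psi\left(X_{t+1},\widecheck x_t^\psi\right)$. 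I expect this last bookkeeping to be the main obstacle: in the relaxed case $\left\lVert q_t-x_t^\psi\right\rVert$ no longer equals $\eta_t\theta_t\left\lVert x_t^*-\widehat x_t^*\right\rVert$, so the proof must show that the dual gap and the step-size rescaling cancel exactly, landing on term~2 via $X_{t+1}$ rather than leaking an uncontrolled positive residual, while the extra subtraction $-B_\psi\left(x_t,\widetilde x_t^\psi\right)$ is preserved throughout.
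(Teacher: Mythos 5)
Your proposal is correct in substance, and it takes a genuinely different route from the paper's. The paper decomposes the linearized regret around the auxiliary point $X_{t+1}$ from the very first step, so its cosine-rule residual enters with a \emph{negative} sign (it is $-B_{\widetilde\psi}\left(X_{t+1}, X_t^{\widetilde\psi}\right)$, which becomes the $-B_{\psi}\left(X_{t+1}, \widecheck{x}_t^\psi\right)$ half of the second bracketed sum), and the prediction error $\left\langle \theta_t x_t^*-\theta_t\widehat{x}_t^*, x_t-X_{t+1}\right\rangle$ is then handled by Fenchel--Young, with the primal piece $\phi\left(\left\lVert x_t-X_{t+1}\right\rVert\right)$ absorbed by $-\frac{1}{\eta_t}B_{\psi}\left(X_{t+1}, x_t^\psi\right)$ via $\phi$-convexity anchored at $\left(x_t, x_t^\psi\right)$. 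You pivot at $\widetilde{x}_t$ instead, which produces a \emph{positive} residual $B_{\psi}\left(\widetilde{x}_t, q_t\right)$, cancel the prediction-error pairing exactly against the cross term of \cref{Generalized-cosine-law}, and concentrate everything in $B_{\psi}\left(x_t, q_t\right)$; your variational bound $B_{\psi}\left(x, v^*\right)\leqslant\phi^\star\left(\left\lVert v^*-u^*\right\rVert\right)$ for $\left(x, u^*\right)\in\gra\partial\psi$ is Fenchel--Young plus $\phi$-convexity in disguise, applied at $x_t$ rather than at $X_{t+1}$. Both decompositions are exact identities before any inequality is invoked and both land on the same $\phi^\star$ term, so neither is tighter than the other; what yours buys is that $X_{t+1}$ is quarantined in a single bookkeeping step instead of permeating the whole computation, while the paper's choice means the relaxation gap $\widecheck{x}_t^\psi\neq\widetilde{x}_t^\psi$ never has to be tracked separately.

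The step you flag as the main obstacle does close, and more locally than you expect --- no summation over $t$ or telescoping is involved. With $v^*\coloneqq\widetilde{x}_t^\psi-\eta_t\theta_t x_t^*$, so that $q_t-v^*=\widecheck{x}_t^\psi-\widetilde{x}_t^\psi$, what you must show per round is
\begin{equation*}
B_{\psi}\left(x_t, q_t\right)+\left\langle \widecheck{x}_t^\psi-\widetilde{x}_t^\psi,\, x_t-\widetilde{x}_t\right\rangle
\leqslant B_{\psi}\left(X_{t+1}, \widetilde{x}_t^\psi\right)-B_{\psi}\left(X_{t+1}, \widecheck{x}_t^\psi\right)+\phi^\star\left(\eta_t\theta_t\left\lVert x_t^*-\widehat{x}_t^*\right\rVert\right).
\end{equation*}
Expand $B_{\psi}\left(x_t, q_t\right)=B_{\psi}\left(x_t, v^*\right)+\psi^\star\left(q_t\right)-\psi^\star\left(v^*\right)-\left\langle q_t-v^*, x_t\right\rangle$; the $x_t$-pairings cancel, leaving $B_{\psi}\left(x_t, v^*\right)+\psi^\star\left(q_t\right)-\psi^\star\left(v^*\right)-\left\langle \widecheck{x}_t^\psi-\widetilde{x}_t^\psi, \widetilde{x}_t\right\rangle$ on the left. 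Now use three facts: (i) since $\widetilde{x}_t^\psi$ and $\widecheck{x}_t^\psi$ are both subgradients of $\psi$ at $\widetilde{x}_t$, $\psi^\star\left(\widetilde{x}_t^\psi\right)-\psi^\star\left(\widecheck{x}_t^\psi\right)=\left\langle \widetilde{x}_t^\psi-\widecheck{x}_t^\psi, \widetilde{x}_t\right\rangle$, so the right side's bracket equals $\left\langle \widetilde{x}_t^\psi-\widecheck{x}_t^\psi, \widetilde{x}_t-X_{t+1}\right\rangle$; (ii) $X_{t+1}\in\partial\psi^\star\left(q_t\right)$, so convexity of $\psi^\star$ gives $\psi^\star\left(q_t\right)-\psi^\star\left(v^*\right)\leqslant\left\langle q_t-v^*, X_{t+1}\right\rangle=\left\langle \widecheck{x}_t^\psi-\widetilde{x}_t^\psi, X_{t+1}\right\rangle$; (iii) your bound $B_{\psi}\left(x_t, v^*\right)\leqslant\phi^\star\left(\eta_t\theta_t\left\lVert x_t^*-\widehat{x}_t^*\right\rVert\right)$. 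Chaining (i)--(iii) yields the display, and hence the theorem, with the subtraction $-B_{\psi}\left(x_t, \widetilde{x}_t^\psi\right)$ untouched throughout. The one caveat sits in (ii): it requires $X_{t+1}$ to be anchored at the dual point $q_t=a^\psi+\frac{\eta_t}{\eta_{t+1}}\left(\widecheck{x}_{t+1}^\psi-a^\psi\right)$, i.e., defined with $\widecheck{x}_{t+1}^\psi$, which is exactly what the paper's own proof does via $X_{t+1}^{\widetilde{\psi}}=\frac{1}{\eta_{t+1}}\left(\widecheck{x}_{t+1}^{\psi}-a^{\psi}\right)$; the tilde in the theorem's displayed relation for $X_{t+1}$, which you copied into your plan, is evidently a misprint for the check accent, as the first bracketed sum of the stated bound also carries $\widecheck{x}_{t+1}^\psi$.
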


\begin{proof}
$\partial\psi\left(C\right)=E^*$ or the compactness of $C$ are sufficient conditions to guarantee that $x_t\in C$. 
See \cref{pf:S-I} for general analysis. 
The following proof focuses on the derivation of the dynamic regret upper bound for $\si$. 

$\si$ can be rearranged as follows, 
\begin{equation*}
x_t^{\psi}+\eta_t\theta_t \widehat{x}_t^*=\widetilde{x}_t^{\psi}, 
\quad\widecheck{x}_t^\psi=a^\psi-\eta_t\sum_{i=1}^{t-1}\theta_i x_i^*=a^{\psi}+\eta_t X_t^{\widetilde{\psi}}, 
\end{equation*}
where $X_t^{\widetilde{\psi}}$ is an auxiliary variable, $\widetilde{\psi}\colon\!C\rightarrow\mathbb{R}$ is a temporarily unknown convex function. 

Note that 
\begin{equation}
\label{S-I-proof-1}
\varphi_t\left(x_t\right)-\varphi_t\left(z_t\right)
\leqslant\frac{1}{\theta_t}\left\langle \theta_t x_t^*,x_t-z_t\right\rangle, \quad x_t^*\in\partial\varphi_t\left(x_t\right),
\end{equation}
and 
\begin{align}
&\left\langle \theta_t x_t^*,x_t-z_t\right\rangle
=\left\langle \theta_t x_t^*, X_{t+1}-z_t\right\rangle+\left\langle \theta_t \widehat{x}_{t}^*,x_t-X_{t+1}\right\rangle+\left\langle  \theta_t x_t^*-\theta_t \widehat{x}_t^*,x_t-X_{t+1}\right\rangle \notag\\
=\ &-\left\langle X_t^{\widetilde{\psi}}-X_{t+1}^{\widetilde{\psi}},z_t-X_{t+1}\right\rangle-\frac{1}{\eta_t}\left\langle \widetilde{x}_{t}^\psi-x_{t}^\psi,X_{t+1}-x_t\right\rangle+\left\langle \theta_t x_t^*-\theta_t \widehat{x}_t^*,x_t-X_{t+1}\right\rangle \notag\\
=\ &B_{\widetilde{\psi}}\left(z_t, X_t^{\widetilde{\psi}}\right)-B_{\widetilde{\psi}}\left(z_t, X_{t+1}^{\widetilde{\psi}}\right)-B_{\widetilde{\psi}}\left(X_{t+1}, X_t^{\widetilde{\psi}}\right)+\frac{1}{\eta_t}B_{\psi}\left(X_{t+1}, \widetilde{x}_t^\psi\right)-\frac{1}{\eta_t}B_{\psi}\left(x_t, \widetilde{x}_t^\psi\right) \notag\\
&-\frac{1}{\eta_t}B_{\psi}\left(X_{t+1}, x_t^\psi\right)+\frac{1}{\eta_t}\left\langle \eta_t\theta_t x_t^*-\eta_t\theta_t \widehat{x}_t^*,x_t-X_{t+1}\right\rangle \notag\\
\leqslant\ &B_{\widetilde{\psi}}\left(z_t, X_t^{\widetilde{\psi}}\right)-B_{\widetilde{\psi}}\left(z_t, X_{t+1}^{\widetilde{\psi}}\right)-B_{\widetilde{\psi}}\left(X_{t+1}, X_t^{\widetilde{\psi}}\right)+\frac{1}{\eta_t}B_{\psi}\left(X_{t+1}, \widetilde{x}_t^\psi\right)-\frac{1}{\eta_t}B_{\psi}\left(x_t, \widetilde{x}_t^\psi\right) \notag\\
&+\frac{1}{\eta_t}\phi^\star\left(\eta_t\theta_t\left\lVert x_t^*-\widehat{x}_t^*\right\rVert\right), \label{S-I-proof-2}
\end{align}
where $X_{t+1}\in\partial\widetilde{\psi}^\star\left(X_{t+1}^{\widetilde{\psi}}\right)$, the last ``$=$'' uses the generalized cosine rule (\cref{Generalized-cosine-law}), and ``$\leqslant$'' uses the $\phi$-convexity of $\psi$ and the following inequality, 
\begin{equation*}
\begin{aligned}
\left\langle \eta_t\theta_t x_t^*-\eta_t\theta_t \widehat{x}_t^*,x_t-X_{t+1}\right\rangle
&\leqslant\eta_t\theta_t\left\lVert x_t^*-\widehat{x}_t^*\right\rVert\left\lVert x_t-X_{t+1}\right\rVert \\
&\leqslant\phi\left(\left\lVert x_t-X_{t+1}\right\rVert\right)+\phi^\star\left(\eta_t\theta_t\left\lVert x_t^*-\widehat{x}_t^*\right\rVert\right).
\end{aligned}
\end{equation*}
If $\widetilde{\psi}=\frac{1}{\eta_t}\left(\psi-a^\psi\right)$, then $\widetilde{\psi}^\star\left(y^*\right)=\frac{1}{\eta_t}\psi^\star\left(a^\psi+\eta_t y^*\right)$, according to $X_t^{\widetilde{\psi}}=\frac{1}{\eta_t}\left(\widecheck{x}_t^{\psi}-a^{\psi}\right)$ and $X_{t+1}^{\widetilde{\psi}}=\frac{1}{\eta_{t+1}}\left(\widecheck{x}_{t+1}^{\psi}-a^{\psi}\right)$, we have 
\begin{equation}
\label{S-I-proof-3}
\begin{aligned}
B_{\widetilde{\psi}}\left(z_t, X_t^{\widetilde{\psi}}\right)
&=\frac{1}{\eta_t}B_{\psi}\left(z_t, \widecheck{x}_t^\psi\right), \\
B_{\widetilde{\psi}}\left(z_t, X_{t+1}^{\widetilde{\psi}}\right)
&=\frac{1}{\eta_t}B_{\psi}\left(z_t, a^\psi+\frac{\eta_t}{\eta_{t+1}}\left(\widecheck{x}_{t+1}^\psi-a^\psi\right)\right), \\
B_{\widetilde{\psi}}\left(X_{t+1}, X_t^{\widetilde{\psi}}\right)
&=\frac{1}{\eta_t}B_{\psi}\left(X_{t+1}, \widecheck{x}_t^\psi\right).  
\end{aligned}
\end{equation}
To complete the proof, it suffices to combine \cref{S-I-proof-1,S-I-proof-2,S-I-proof-3}. 
\end{proof}

\begin{remark}
The proof of \cref{S-I} provides a general analysis method for dynamic regret. 
The decomposition of the instantaneous dynamic regret (the first line of Equation~\ref{S-I-proof-2}) follows from Appendix~A.1 of  \cite{zhao2020dynamic}. 
In fact, \cite{zhao2020dynamic} deduced the dynamic regret for online extra-gradient descent, which is a special case of $\sii$. 
Later in this section (\cref{S-II}), we show the dynamic regret for $\sii$. 
\end{remark}

Next, we analyze each term of the dynamic regret upper bound for $\si$. 

In order to reduce the item 
\begin{equation*}
\begin{aligned}
\sum_{t=1}^{T}\frac{1}{\eta_t\theta_t}\left[B_{\psi}\left(z_t, \widecheck{x}_t^\psi\right)-B_{\psi}\left(z_t, a^\psi+\frac{\eta_t}{\eta_{t+1}}\left(\widecheck{x}_{t+1}^\psi-a^\psi\right)\right)\right], 
\end{aligned}
\end{equation*}
one feasible way is to set $\theta_t\equiv \theta$ and $z_t\equiv z$, the other is to set $\eta_t\equiv \eta$ and $z_t\equiv z$. 
Indeed, we have the following lemma. 

\begin{lemma}
\label{static-reduce}
If $\eta_t\geqslant\eta_{t+1}$, then 
\begin{equation*}
\begin{aligned}
\sum_{t=1}^{T}\frac{1}{\eta_t}\left[B_{\psi}\left(z, \widecheck{x}_t^\psi\right)-B_{\psi}\left(z, a^\psi+\frac{\eta_t}{\eta_{t+1}}\left(\widecheck{x}_{t+1}^\psi-a^\psi\right)\right)\right]
\leqslant
\frac{1}{\eta_{T+1}}B_{\psi}\left(z, a^\psi\right). 
\end{aligned}
\end{equation*}
If $\theta_{t-1}\leqslant\theta_{t}$, then 
\begin{equation*}
\begin{aligned}
\sum_{t=1}^{T}\frac{1}{\theta_t}\left[B_{\psi}\left(z_t, \widecheck{x}_{t}^\psi\right)-B_{\psi}\left(z_t, \widecheck{x}_{t+1}^\psi\right)\right]\leqslant
\frac{1}{\theta_{1}}B_{\psi}\left(z, a^\psi\right)
+\sum_{t=2}^{T}\frac{1}{\theta_t}\left\lVert \partial\psi\left(z_t\right)-\widecheck{x}_{t}^\psi\right\rVert\left\lVert z_t-z_{t-1}\right\rVert.
\end{aligned}
\end{equation*}
\end{lemma}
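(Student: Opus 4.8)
The plan is to prove the two inequalities separately, but in both cases the backbone is the same: reshape the sum into a telescope and then discard a nonnegative Fenchel--Young remainder (recall $B_\psi\geqslant 0$ for every proper $\psi$). For the first inequality I would first record that $B_\psi(z,\cdot)$ is convex in its second argument, since $B_\psi(z,y^*)=\psi(z)+\psi^\star(y^*)-\langle y^*,z\rangle$ and $\psi^\star$ is convex. The one genuinely nonobvious step is to notice that $\eta_t\geqslant\eta_{t+1}$ forces $\lambda_t\coloneqq\eta_t/\eta_{t+1}\geqslant 1$, so the perturbed dual point $a^\psi+\lambda_t(\widecheck{x}_{t+1}^\psi-a^\psi)$ is an \emph{extrapolation} of $\widecheck{x}_{t+1}^\psi$ away from $a^\psi$; equivalently $\widecheck{x}_{t+1}^\psi=(1-\lambda_t^{-1})a^\psi+\lambda_t^{-1}\bigl(a^\psi+\lambda_t(\widecheck{x}_{t+1}^\psi-a^\psi)\bigr)$ is a genuine convex combination. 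Feeding this combination into the convexity of $B_\psi(z,\cdot)$ and rearranging gives $B_\psi\bigl(z,a^\psi+\lambda_t(\widecheck{x}_{t+1}^\psi-a^\psi)\bigr)\geqslant\lambda_t B_\psi(z,\widecheck{x}_{t+1}^\psi)-(\lambda_t-1)B_\psi(z,a^\psi)$.

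Substituting this lower bound for the negative term and using $\lambda_t/\eta_t=1/\eta_{t+1}$ and $(\lambda_t-1)/\eta_t=1/\eta_{t+1}-1/\eta_t$, the $t$-th summand is bounded by $\frac{1}{\eta_t}B_\psi(z,\widecheck{x}_t^\psi)-\frac{1}{\eta_{t+1}}B_\psi(z,\widecheck{x}_{t+1}^\psi)+\bigl(\frac{1}{\eta_{t+1}}-\frac{1}{\eta_t}\bigr)B_\psi(z,a^\psi)$. Both pieces now telescope; using $\widecheck{x}_1^\psi=a^\psi$ the collapse leaves exactly $\frac{1}{\eta_{T+1}}B_\psi(z,a^\psi)-\frac{1}{\eta_{T+1}}B_\psi(z,\widecheck{x}_{T+1}^\psi)$, and dropping the last (nonnegative) term yields the claimed bound.

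For the second inequality the reference point now drifts and $\theta$ is nondecreasing, so I would switch to Abel summation: shifting the index in $\sum_t\frac{1}{\theta_t}B_\psi(z_t,\widecheck{x}_{t+1}^\psi)$ rewrites the whole sum as the boundary term $\frac{1}{\theta_1}B_\psi(z_1,\widecheck{x}_1^\psi)$, minus the nonnegative tail $\frac{1}{\theta_T}B_\psi(z_T,\widecheck{x}_{T+1}^\psi)$, plus $\sum_{t=2}^T\bigl[\frac{1}{\theta_t}B_\psi(z_t,\widecheck{x}_t^\psi)-\frac{1}{\theta_{t-1}}B_\psi(z_{t-1},\widecheck{x}_t^\psi)\bigr]$. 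Since $\widecheck{x}_1^\psi=a^\psi$ the boundary term is the stated $\frac{1}{\theta_1}B_\psi(z_1,a^\psi)$, and the tail is discarded. Each middle term I would split as $\frac{1}{\theta_t}\bigl[B_\psi(z_t,\widecheck{x}_t^\psi)-B_\psi(z_{t-1},\widecheck{x}_t^\psi)\bigr]+\bigl(\frac{1}{\theta_t}-\frac{1}{\theta_{t-1}}\bigr)B_\psi(z_{t-1},\widecheck{x}_t^\psi)$, whose second summand is nonpositive because $\theta_{t-1}\leqslant\theta_t$ and $B_\psi\geqslant 0$.

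The remaining reference drift collapses nicely: the $\psi^\star(\widecheck{x}_t^\psi)$ contributions cancel, leaving $B_\psi(z_t,\widecheck{x}_t^\psi)-B_\psi(z_{t-1},\widecheck{x}_t^\psi)=\psi(z_t)-\psi(z_{t-1})-\langle\widecheck{x}_t^\psi,z_t-z_{t-1}\rangle$; bounding $\psi(z_t)-\psi(z_{t-1})\leqslant\langle z_t^\psi,z_t-z_{t-1}\rangle$ through the subgradient inequality for any $z_t^\psi\in\partial\psi(z_t)$ and applying the duality estimate $\langle x^*,x\rangle\leqslant\lVert x^*\rVert\,\lVert x\rVert$ produces $\frac{1}{\theta_t}\lVert z_t^\psi-\widecheck{x}_t^\psi\rVert\,\lVert z_t-z_{t-1}\rVert$, matching the stated error sum. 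I expect the real difficulty to be bookkeeping rather than depth: the subtle move in the first part is recognizing that decreasing step sizes turn the perturbed point into an extrapolation, so that convexity points in the useful direction, while in the second part the care is in choosing the Abel rearrangement that isolates a nonnegative boundary and tail, and in splitting the discrete difference so the step-size monotonicity and the reference drift are each controlled with the correct sign.
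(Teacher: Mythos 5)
Your proposal is correct and follows essentially the same route as the paper's proof: for the first inequality, both exploit the convexity of $B_{\psi}\left(z,\cdot\right)$ in the dual argument via the convex-combination identity $\widecheck{x}_{t+1}^\psi=\left(1-\eta_{t+1}/\eta_t\right)a^\psi+\left(\eta_{t+1}/\eta_t\right)\left(a^\psi+\frac{\eta_t}{\eta_{t+1}}\left(\widecheck{x}_{t+1}^\psi-a^\psi\right)\right)$ and then telescope using $\widecheck{x}_1^\psi=a^\psi$; for the second, both use the same index-shift (Abel) rearrangement, discard the nonpositive pieces coming from $\theta_{t-1}\leqslant\theta_t$ and $B_\psi\geqslant 0$, and control the reference drift with the subgradient inequality for $\psi$ followed by the duality bound $\left\langle x^*,x\right\rangle\leqslant\left\lVert x^*\right\rVert\left\lVert x\right\rVert$. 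The only differences are cosmetic orderings of when the telescoping and the convexity step are applied.
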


Combining \cref{S-I} and \cref{static-reduce}, we directly obtain the static regret for $\si$.

\begin{corollary}[Static Regret for $\si$]
\label{S-I-static}
If $\partial\psi\left(C\right)=E^*$ or $C$ is compact, then $\forall a\in C$, 
under the assumption of $\eta_t\geqslant\eta_{t+1}$, $\si\big|_{\theta_t\equiv 1}$ enjoys the following static regret upper bound, 
\begin{equation*}
\begin{aligned}
\regret\left(z,z,\cdots,z\right)
\leqslant\ &\frac{1}{\eta_{T+1}}B_{\psi}\left(z, a^\psi\right)+\sum_{t=1}^{T}\frac{1}{\eta_t}\left[B_{\psi}\left(X_{t+1}, \widetilde{x}_t^\psi\right)-B_{\psi}\left(X_{t+1}, \widecheck{x}_t^\psi\right)\right] \\
&+\sum_{t=1}^{T}\frac{1}{\eta_t}\phi^\star\left(\eta_t\left\lVert x_t^*-\widehat{x}_t^*\right\rVert\right)
-\sum_{t=1}^{T}\frac{1}{\eta_t}B_{\psi}\left(x_{t}, \widetilde{x}_{t}^\psi\right),\quad\forall z\in C,
\end{aligned}
\end{equation*}
where $\left(X_{t+1}, \frac{1}{\eta_{t+1}}\left(\widetilde{x}_{t+1}^{\psi}-a^{\psi}\right)\right)\in\gra\partial\frac{1}{\eta_t}\left(\psi-a^\psi\right)$, 
and under the assumption of $\theta_{t-1}\leqslant\theta_t$, $\si\big|_{\eta_t\equiv 1}$ enjoys the following static regret upper bound, 
\begin{equation*}
\begin{aligned}
\regret\left(z,z,\cdots,z\right)
\leqslant\ &\frac{1}{\theta_{1}}B_{\psi}\left(z, a^\psi\right)+\sum_{t=1}^{T}\frac{1}{\theta_t}\left[B_{\psi}\left(X_{t+1}, \widetilde{x}_t^\psi\right)-B_{\psi}\left(X_{t+1}, \widecheck{x}_t^\psi\right)\right] \\
&+\sum_{t=1}^{T}\frac{1}{\theta_t}\phi^\star\left(\theta_t\left\lVert x_t^*-\widehat{x}_t^*\right\rVert\right)
-\sum_{t=1}^{T}\frac{1}{\theta_t}B_{\psi}\left(x_{t}, \widetilde{x}_{t}^\psi\right),\quad\forall z\in C,
\end{aligned}
\end{equation*}
where $\left(X_{t+1}, \widetilde{x}_{t+1}^{\psi}-a^{\psi}\right)\in\gra\partial\left(\psi-a^\psi\right)$. 
\end{corollary}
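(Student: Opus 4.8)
The plan is to derive \cref{S-I-static} as a direct specialization of the dynamic-regret bound in \cref{S-I}, so the entire argument reduces to substituting two parameter regimes and invoking \cref{static-reduce} to collapse the first summation. I would begin from the conclusion of \cref{S-I}, which already supplies the four-term decomposition
\[
\regret
\leqslant
\underbrace{\sum_{t=1}^{T}\tfrac{1}{\eta_t\theta_t}\bigl[\,\cdots\,\bigr]}_{\text{(I)}}
+\underbrace{\sum_{t=1}^{T}\tfrac{1}{\eta_t\theta_t}\bigl[B_\psi(X_{t+1},\widetilde{x}_t^\psi)-B_\psi(X_{t+1},\widecheck{x}_t^\psi)\bigr]}_{\text{(II)}}
+\underbrace{\sum_{t=1}^{T}\tfrac{1}{\eta_t\theta_t}\phi^\star(\cdots)}_{\text{(III)}}
-\underbrace{\sum_{t=1}^{T}\tfrac{1}{\eta_t\theta_t}B_\psi(x_t,\widetilde{x}_t^\psi)}_{\text{(IV)}},
\]
valid for the reference sequence $z_t\equiv z$. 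Terms (II), (III), and (IV) require no real work: with the substitution $\theta_t\equiv 1$ (resp.\ $\eta_t\equiv 1$) the factor $\tfrac{1}{\eta_t\theta_t}$ becomes $\tfrac{1}{\eta_t}$ (resp.\ $\tfrac{1}{\theta_t}$), matching the displayed corollary verbatim. So the real content is confined to term (I).

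For the first regime, $\si\big|_{\theta_t\equiv 1}$ under $\eta_t\geqslant\eta_{t+1}$, I would apply the first inequality of \cref{static-reduce} directly to term (I) with $\theta_t\equiv 1$, which bounds
\[
\sum_{t=1}^{T}\tfrac{1}{\eta_t}\Bigl[B_\psi(z,\widecheck{x}_t^\psi)-B_\psi\bigl(z,a^\psi+\tfrac{\eta_t}{\eta_{t+1}}(\widecheck{x}_{t+1}^\psi-a^\psi)\bigr)\Bigr]
\leqslant
\tfrac{1}{\eta_{T+1}}B_\psi(z,a^\psi),
\]
giving the first leading term of the corollary. I would also check that the auxiliary condition on $X_{t+1}$ specializes correctly: with $\theta_t\equiv 1$ the graph membership $\bigl(X_{t+1},\tfrac{1}{\eta_{t+1}}(\widetilde{x}_{t+1}^\psi-a^\psi)\bigr)\in\gra\partial\tfrac{1}{\eta_t}(\psi-a^\psi)$ carries over unchanged, which is exactly what is stated.

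For the second regime, $\si\big|_{\eta_t\equiv 1}$ under $\theta_{t-1}\leqslant\theta_t$, I would first note that with $\eta_t\equiv 1$ the intermediate point $a^\psi+\tfrac{\eta_t}{\eta_{t+1}}(\widecheck{x}_{t+1}^\psi-a^\psi)$ collapses to $\widecheck{x}_{t+1}^\psi$, so term (I) becomes $\sum_{t}\tfrac{1}{\theta_t}[B_\psi(z_t,\widecheck{x}_t^\psi)-B_\psi(z_t,\widecheck{x}_{t+1}^\psi)]$, precisely the left side of the second inequality in \cref{static-reduce}. Applying that inequality bounds this by $\tfrac{1}{\theta_1}B_\psi(z,a^\psi)$ plus a path-length term; but since the corollary is stated for the static reference $z_t\equiv z$, the increments $\lVert z_t-z_{t-1}\rVert$ vanish and only $\tfrac{1}{\theta_1}B_\psi(z,a^\psi)$ survives, matching the displayed bound. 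I expect the main subtlety to be bookkeeping rather than mathematical: verifying that setting $\eta_t\equiv 1$ genuinely reproduces the correct $\widetilde{\psi}$ and hence the graph condition $\bigl(X_{t+1},\widetilde{x}_{t+1}^\psi-a^\psi\bigr)\in\gra\partial(\psi-a^\psi)$, and confirming that the path-length term from \cref{static-reduce} is legitimately discarded under the static specialization. Once these consistency checks are in place, combining \cref{S-I} with the two halves of \cref{static-reduce} yields both bounds immediately.
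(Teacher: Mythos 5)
Your proposal is correct and takes the same route as the paper, which obtains this corollary exactly by combining \cref{S-I} with \cref{static-reduce} under the static specialization $z_t\equiv z$. Your consistency checks (the collapse of $a^\psi+\frac{\eta_t}{\eta_{t+1}}\left(\widecheck{x}_{t+1}^\psi-a^\psi\right)$ to $\widecheck{x}_{t+1}^\psi$ when $\eta_t\equiv 1$, the vanishing of the path-length term $\left\lVert z_t-z_{t-1}\right\rVert$ under a constant reference, and the specialization of the graph condition on $X_{t+1}$) are all sound and fill in the bookkeeping the paper leaves implicit.
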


There are two ways to drop the term 
$B_{\psi}\left(X_{t+1}, \widetilde{x}_t^\psi\right)-B_{\psi}\left(X_{t+1}, \widecheck{x}_t^\psi\right)$. 
One is to set $\widetilde{x}_{t}^\psi= \widecheck{x}_{t}^\psi$, which forces $\si$ back to $S$. 
The regret for $S$ is formalized into the following two corollaries. 
The other is to set $\psi$ be the squared norm on Hilbert space. 
See \cref{sec:SN} for analysis details.  

\begin{corollary}[Dynamic Regret for $S$]
\label{S}
If $\partial\psi\left(C\right)=E^*$ or $C$ is compact, then $\forall a\in C$, $S$ enjoys the following dynamic regret upper bound, 
\begin{equation*}
\begin{aligned}
\regret\left(z_1,z_2,\cdots,z_T\right)
\leqslant\ &\sum_{t=1}^{T}\frac{1}{\eta_t\theta_t}\left[B_{\psi}\left(z_t, \widetilde{x}_t^\psi\right)-B_{\psi}\left(z_t, a^\psi+\frac{\eta_t}{\eta_{t+1}}\left(\widetilde{x}_{t+1}^\psi-a^\psi\right)\right)\right] \\
&+\sum_{t=1}^{T}\frac{1}{\eta_t\theta_t}\phi^\star\left(\eta_t\theta_t\left\lVert x_t^*-\widehat{x}_t^*\right\rVert\right)
-\sum_{t=1}^{T}\frac{1}{\eta_t\theta_t}B_{\psi}\left(x_{t}, \widetilde{x}_{t}^\psi\right), \quad\forall z_t\in C. 
\end{aligned}
\end{equation*}
\end{corollary}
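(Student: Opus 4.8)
The plan is to obtain \cref{S} as a direct specialization of \cref{S-I}, exploiting the fact—already recorded in the remark following the definition of $\si$—that $S$ is precisely the instance of $\si$ in which the two intermediate dual points coincide, i.e.\ $\widetilde{x}_t^\psi=\widecheck{x}_t^\psi$ for every $t$. First I would justify that this identification is admissible: the $\si$ update sets $\widetilde{x}_t\in\partial\psi^\star(\widecheck{x}_t^\psi)$, so by \cref{Symmetry-of-subdifferential} we have $\widecheck{x}_t^\psi\in\partial\psi(\widetilde{x}_t)$; hence $\widecheck{x}_t^\psi$ is one legitimate choice for the subgradient $\widetilde{x}_t^\psi\in\partial\psi(\widetilde{x}_t)$, and selecting it collapses the two-step rule of $\si$ back onto the single-step rule of $S$. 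With this choice the hypotheses of \cref{S-I} (namely $\partial\psi(C)=E^*$ or $C$ compact, guaranteeing $x_t\in C$) are inherited verbatim, so the entire dynamic-regret bound of \cref{S-I} applies.

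Next I would substitute $\widetilde{x}_t^\psi=\widecheck{x}_t^\psi$ into that bound and simplify term by term. The decisive observation is that the middle summation of \cref{S-I}, whose summand is $B_{\psi}(X_{t+1},\widetilde{x}_t^\psi)-B_{\psi}(X_{t+1},\widecheck{x}_t^\psi)$, vanishes identically under this identification. Consequently the auxiliary point $X_{t+1}$—and with it the defining membership $(X_{t+1},\tfrac{1}{\eta_{t+1}}(\widetilde{x}_{t+1}^\psi-a^\psi))\in\gra\partial\tfrac{1}{\eta_t}(\psi-a^\psi)$—drops out of the statement entirely, which is exactly why $X_{t+1}$ no longer appears in \cref{S}. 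The remaining three summations are untouched, and after rewriting every occurrence of $\widecheck{x}_t^\psi$ and $\widecheck{x}_{t+1}^\psi$ as $\widetilde{x}_t^\psi$ and $\widetilde{x}_{t+1}^\psi$ in the first summation, the bound reads exactly as claimed in \cref{S}.

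I do not expect a genuine obstacle here, since the result is a specialization of an already-proved bound rather than a fresh derivation; the only point demanding care is the bookkeeping in the first paragraph—confirming that $\widetilde{x}_t^\psi=\widecheck{x}_t^\psi$ is internally consistent across all rounds (including the initialization $\widetilde{x}_1^\psi=a^\psi$) and that no residual dependence on the distinction between the checked and tilded dual iterates survives in the terms that remain. Once that consistency is verified, the corollary follows with no further estimation.
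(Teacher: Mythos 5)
Your proposal is correct and coincides with the paper's own route: the paper obtains \cref{S} from \cref{S-I} precisely by choosing $\widetilde{x}_t^\psi=\widecheck{x}_t^\psi$ (legitimate by \cref{Symmetry-of-subdifferential}), which collapses $\si$ back to $S$, makes the summand $B_{\psi}\left(X_{t+1}, \widetilde{x}_t^\psi\right)-B_{\psi}\left(X_{t+1}, \widecheck{x}_t^\psi\right)$ vanish, and turns the first summation into the one stated in the corollary. Your additional care about consistency across rounds and the disappearance of $X_{t+1}$ is sound and matches the intended argument.
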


\begin{corollary}[Static Regret for $S$]
\label{S-static}
If $\partial\psi\left(C\right)=E^*$ or $C$ is compact, then $\forall a\in C$, 
under the assumption of $\eta_t\geqslant\eta_{t+1}$,  $S\big|_{\theta_t\equiv 1}$ (which is Optimistic-DA) enjoys the following static regret upper bound,  
\begin{equation*}
\begin{aligned}
\regret\left(z,z,\cdots,z\right)
\leqslant\frac{1}{\eta_{T+1}}B_{\psi}\left(z, a^\psi\right)+\sum_{t=1}^{T}\frac{1}{\eta_t}\phi^\star\left(\eta_t\left\lVert x_t^*-\widehat{x}_t^*\right\rVert\right)
-\sum_{t=1}^{T}\frac{1}{\eta_t}B_{\psi}\left(x_{t}, \widetilde{x}_{t}^\psi\right),\ \forall z\in C,
\end{aligned}
\end{equation*}
and under the assumption of $\theta_{t-1}\leqslant \theta_{t}$, $S\big|_{\eta_t\equiv 1}$ enjoys the following static regret upper bound,  
\begin{equation*}
\begin{aligned}
\regret\left(z,z,\cdots,z\right)
\leqslant\frac{1}{\theta_{1}}B_{\psi}\left(z, a^\psi\right)+\sum_{t=1}^{T}\frac{1}{\theta_t}\phi^\star\left(\theta_t\left\lVert x_t^*-\widehat{x}_t^*\right\rVert\right)
-\sum_{t=1}^{T}\frac{1}{\theta_t}B_{\psi}\left(x_{t}, \widetilde{x}_{t}^\psi\right),\ \ \forall z\in C.
\end{aligned}
\end{equation*}
\end{corollary}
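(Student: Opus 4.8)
The plan is to obtain \cref{S-static} as a direct specialization of the dynamic regret bound for $S$ (\cref{S}), mirroring exactly the way \cref{S-I-static} is derived from \cref{S-I} together with \cref{static-reduce}. First I would start from the statement of \cref{S} and fix the reference sequence to be constant, $z_t\equiv z$. The penultimate term $\sum_{t=1}^{T}\frac{1}{\eta_t\theta_t}\phi^\star\left(\eta_t\theta_t\left\lVert x_t^*-\widehat{x}_t^*\right\rVert\right)$ and the final subtraction term $-\sum_{t=1}^{T}\frac{1}{\eta_t\theta_t}B_{\psi}\left(x_{t}, \widetilde{x}_{t}^\psi\right)$ already appear verbatim in the two claimed bounds once the appropriate parameter is frozen, so the entire task reduces to controlling the leading telescoping sum $\sum_{t=1}^{T}\frac{1}{\eta_t\theta_t}\left[B_{\psi}\left(z, \widetilde{x}_t^\psi\right)-B_{\psi}\left(z, a^\psi+\frac{\eta_t}{\eta_{t+1}}\left(\widetilde{x}_{t+1}^\psi-a^\psi\right)\right)\right]$.

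For the two displayed inequalities I would degenerate one parameter at a time and invoke the matching half of \cref{static-reduce}. In the case $\theta_t\equiv 1$ the leading sum is $\sum_{t=1}^{T}\frac{1}{\eta_t}\left[B_{\psi}\left(z, \widetilde{x}_t^\psi\right)-B_{\psi}\left(z, a^\psi+\frac{\eta_t}{\eta_{t+1}}\left(\widetilde{x}_{t+1}^\psi-a^\psi\right)\right)\right]$, and under the monotonicity hypothesis $\eta_t\geqslant\eta_{t+1}$ the first part of \cref{static-reduce} bounds it by $\frac{1}{\eta_{T+1}}B_{\psi}\left(z, a^\psi\right)$. In the case $\eta_t\equiv 1$ the ratio $\eta_t/\eta_{t+1}$ collapses to $1$, so the argument $a^\psi+\frac{\eta_t}{\eta_{t+1}}\left(\widetilde{x}_{t+1}^\psi-a^\psi\right)$ simplifies to $\widetilde{x}_{t+1}^\psi$ and the leading sum becomes $\sum_{t=1}^{T}\frac{1}{\theta_t}\left[B_{\psi}\left(z, \widetilde{x}_t^\psi\right)-B_{\psi}\left(z, \widetilde{x}_{t+1}^\psi\right)\right]$; under $\theta_{t-1}\leqslant\theta_t$ the second part of \cref{static-reduce} applies with $z_t\equiv z$, so the correction terms $\frac{1}{\theta_t}\left\lVert \partial\psi\left(z_t\right)-\widecheck{x}_t^\psi\right\rVert\left\lVert z_t-z_{t-1}\right\rVert$ vanish and the sum is bounded by $\frac{1}{\theta_1}B_{\psi}\left(z, a^\psi\right)$.

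The one point requiring care --- and the only place the argument is not purely mechanical --- is that \cref{static-reduce} is phrased in terms of the $\si$ iterate $\widecheck{x}_t^\psi$, whereas \cref{S} is written in terms of $\widetilde{x}_t^\psi$. Here I would invoke the defining feature of $S$ as the special case of $\si$ with $\widetilde{x}_t^\psi=\widecheck{x}_t^\psi$: for $S$ the intermediate variable obeys precisely the recursion $\widetilde{x}_t^\psi=a^\psi-\eta_t\sum_{i=1}^{t-1}\theta_i x_i^*$ that $\widecheck{x}_t^\psi$ obeys in $\si$, so the two sequences coincide and the lemma transfers verbatim with $\widecheck{x}_t^\psi$ read as $\widetilde{x}_t^\psi$. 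Once this identification is in place, assembling the bounded leading sum with the untouched $\phi^\star$- and subtraction terms (specialized to $\theta_t\equiv 1$ and to $\eta_t\equiv 1$ respectively) yields the two inequalities. I do not anticipate any genuine obstacle; the analytic content is entirely carried by \cref{static-reduce}, and the remaining work is confined to the bookkeeping of which parameter is frozen in each case and to checking the simplification of the telescoping argument when $\eta_t\equiv 1$.
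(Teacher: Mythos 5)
Your proposal is correct and follows essentially the same route as the paper: \cref{S-static} is obtained there by combining \cref{S} (which is \cref{S-I} under the identification $\widetilde{x}_{t}^\psi=\widecheck{x}_{t}^\psi$ that forces $\si$ back to $S$) with \cref{static-reduce}, after fixing $z_t\equiv z$ and freezing one parameter at a time. Your one point of care --- that \cref{static-reduce} transfers verbatim to the $\widetilde{x}_t^\psi$ of $S$ because for $S$ this sequence obeys the same recursion and initial condition $\widetilde{x}_1^\psi=a^\psi$ as $\widecheck{x}_t^\psi$ does in $\si$ --- is exactly the identification the paper makes implicitly, and it is the only non-mechanical step in either derivation.
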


Note that the term $\phi^\star\left(\eta_t\theta_t\left\lVert x_t^*-\widehat{x}_t^*\right\rVert\right)$ contains the function $\phi$. 
Compared with the strongly-convex, $\phi$-convex allows the regret upper bounds to be more finely controlled. 
For example, the upper bounds of using $Q_\rho$-convex are tighter than that of using $1$-strongly-convex, since $Q_\rho^\star$ has an extra subtraction term, where $\rho=\sup_{x,y\in C}\left\lVert x-y\right\rVert$.

For the term $B_{\psi}\left(x_{t}, \widetilde{x}_{t}^\psi\right)$, one can relax regret bounds via the $\phi$-convexity of $\psi$, that is, 
\begin{equation*}
\begin{aligned}
B_{\psi}\left(x_{t}, \widetilde{x}_{t}^\psi\right)
\geqslant\phi\left(\left\lVert x_{t}-\widetilde{x}_{t}\right\rVert\right). 
\end{aligned}
\end{equation*}

Now we show the dynamic regret for $\sii$, which is the general form of Appendix~A.1 of  \cite{zhao2020dynamic}. 

\begin{proposition}[Dynamic Regret for $\sii$]
\label{S-II}
If $\partial\psi\left(C\right)=E^*$ or $C$ is compact, then $\forall a\in C$,  $\sii$ (which is Optimistic-MD) enjoys the following dynamic regret upper bound, 
\begin{equation*}
\begin{aligned}
\regret\left(z_1,z_2,\cdots,z_T\right)
\leqslant\ &\sum_{t=1}^{T}\frac{1}{\theta_t}\left[B_{\psi}\left(z_t, \widetilde{x}_{t}^\psi\right)-B_{\psi}\left(z_t, \widecheck{x}_{t+1}^\psi\right)\right] \\
&+\sum_{t=1}^{T}\frac{1}{\theta_t}\phi^\star\left(\theta_t\left\lVert x_t^*-\widehat{x}_t^*\right\rVert\right)
-\sum_{t=1}^{T}\frac{1}{\theta_t}B_{\psi}\left(x_{t}, \widetilde{x}_{t}^\psi\right), \quad\forall z_t\in C.
\end{aligned}
\end{equation*}
\end{proposition}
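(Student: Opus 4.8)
The plan is to reuse the instantaneous-regret decomposition from the proof of \cref{S-I}, but to exploit the recursion of $\sii$ so that the auxiliary ``virtual point'' can be taken to be $\widetilde{x}_{t+1}$ itself; this is what makes the bound collapse to the clean form above, with no surviving $B_{\psi}\left(X_{t+1},\cdot\right)$ term. As in \cref{S-I}, the hypothesis $\partial\psi\left(C\right)=E^*$ or the compactness of $C$ is what guarantees $x_t,\widetilde{x}_t\in C$ and that each $\partial\psi^\star\left(\,\cdot\,\right)$ invoked below is nonempty (in particular $\widetilde{x}_{t+1}\in\partial\psi^\star\left(\widecheck{x}_{t+1}^\psi\right)$ exists), so I would dispatch that to the same appendix argument. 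First I would read off from the update the two identities
\[\theta_t x_t^*=\widetilde{x}_t^\psi-\widecheck{x}_{t+1}^\psi,\qquad \theta_t\widehat{x}_t^*=\widetilde{x}_t^\psi-x_t^\psi,\quad x_t^\psi\coloneqq\widetilde{x}_t^\psi-\theta_t\widehat{x}_t^*\in\partial\psi\left(x_t\right),\]
and record the graph memberships $\left(x_t,x_t^\psi\right)\in\gra\partial\psi$ and $\left(\widetilde{x}_{t+1},\widecheck{x}_{t+1}^\psi\right)\in\gra\partial\psi$ (the latter from $\widetilde{x}_{t+1}\in\partial\psi^\star\left(\widecheck{x}_{t+1}^\psi\right)$ via \cref{Symmetry-of-subdifferential}). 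These supply every pair needed to invoke \cref{Generalized-cosine-law}.

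Starting from $\varphi_t\left(x_t\right)-\varphi_t\left(z_t\right)\leqslant\frac{1}{\theta_t}\left\langle\theta_t x_t^*,x_t-z_t\right\rangle$ (convexity of $\varphi_t+\chi_C$ on $C$), I would split the inner product exactly as in Appendix~A.1 of \cite{zhao2020dynamic}, using $\widetilde{x}_{t+1}$ as the interpolation point,
\[\left\langle\theta_t x_t^*,x_t-z_t\right\rangle=\left\langle\theta_t x_t^*,\widetilde{x}_{t+1}-z_t\right\rangle+\left\langle\theta_t\widehat{x}_t^*,x_t-\widetilde{x}_{t+1}\right\rangle+\left\langle\theta_t x_t^*-\theta_t\widehat{x}_t^*,x_t-\widetilde{x}_{t+1}\right\rangle,\]
which is an identity since the $\widehat{x}_t^*$ contributions cancel. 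Substituting the identities above into the first two inner products and applying \cref{Generalized-cosine-law} with the pairs $\left(\widetilde{x}_{t+1},\widecheck{x}_{t+1}^\psi\right)$ and $\left(x_t,x_t^\psi\right)$ turns them into
\[B_{\psi}\left(z_t,\widetilde{x}_t^\psi\right)-B_{\psi}\left(z_t,\widecheck{x}_{t+1}^\psi\right)-B_{\psi}\left(\widetilde{x}_{t+1},\widetilde{x}_t^\psi\right)\quad\text{and}\quad B_{\psi}\left(\widetilde{x}_{t+1},\widetilde{x}_t^\psi\right)-B_{\psi}\left(x_t,\widetilde{x}_t^\psi\right)-B_{\psi}\left(\widetilde{x}_{t+1},x_t^\psi\right),\]
respectively. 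The decisive point is that the two occurrences of $B_{\psi}\left(\widetilde{x}_{t+1},\widetilde{x}_t^\psi\right)$ cancel; this is precisely the feature that separates $\sii$ from the general $\si$, where the interpolation point is not the mirror image of $\widecheck{x}_{t+1}^\psi$ and the residual $B_{\psi}\left(X_{t+1},\cdot\right)$ survives.

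Finally I would bound the third inner product by duality and Fenchel--Young, $\left\langle\theta_t x_t^*-\theta_t\widehat{x}_t^*,x_t-\widetilde{x}_{t+1}\right\rangle\leqslant\phi\left(\left\lVert x_t-\widetilde{x}_{t+1}\right\rVert\right)+\phi^\star\left(\theta_t\left\lVert x_t^*-\widehat{x}_t^*\right\rVert\right)$, and then absorb the leading $\phi$-term: since $\left(x_t,x_t^\psi\right)\in\gra\partial\psi$, the $\phi$-convexity of $\psi$ (\cref{phi-convex}) gives $B_{\psi}\left(\widetilde{x}_{t+1},x_t^\psi\right)\geqslant\phi\left(\left\lVert x_t-\widetilde{x}_{t+1}\right\rVert\right)$, so the two $\phi$-terms cancel and leave only $\phi^\star\left(\theta_t\left\lVert x_t^*-\widehat{x}_t^*\right\rVert\right)$. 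Collecting the surviving terms gives, for each $t$,
\[\varphi_t\left(x_t\right)-\varphi_t\left(z_t\right)\leqslant\frac{1}{\theta_t}\left[B_{\psi}\left(z_t,\widetilde{x}_t^\psi\right)-B_{\psi}\left(z_t,\widecheck{x}_{t+1}^\psi\right)\right]+\frac{1}{\theta_t}\phi^\star\left(\theta_t\left\lVert x_t^*-\widehat{x}_t^*\right\rVert\right)-\frac{1}{\theta_t}B_{\psi}\left(x_t,\widetilde{x}_t^\psi\right),\]
and summing over $t=1,\dots,T$ yields the claim verbatim; no telescoping is needed here, since the collapse of the first sum to a single boundary term is exactly what \cref{static-reduce} later handles. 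I expect the only genuine difficulty to be the sign bookkeeping in the two applications of \cref{Generalized-cosine-law} and confirming that each Bregman divergence is evaluated at a legitimate element of $\gra\partial\psi$; the two cancellations (the $B_{\psi}\left(\widetilde{x}_{t+1},\widetilde{x}_t^\psi\right)$ pair and the $\phi$ pair) are the crux and must line up exactly.
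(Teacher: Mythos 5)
Your proposal is correct and follows essentially the same route as the paper's own proof: the same three-way decomposition of $\left\langle \theta_t x_t^*, x_t-z_t\right\rangle$, two applications of \cref{Generalized-cosine-law} with the pairs $\left(x_t,x_t^\psi\right)$ and the mirror image of $\widecheck{x}_{t+1}^\psi$, cancellation of the two $B_\psi\left(\cdot\,,\widetilde{x}_t^\psi\right)$ terms, and absorption of the Fenchel--Young $\phi$-term into $-B_\psi\left(\cdot\,,x_t^\psi\right)$ via $\phi$-convexity. The only difference is notational: the paper denotes the interpolation point by $\widecheck{x}_{t+1}$, which is exactly your $\widetilde{x}_{t+1}\in\partial\psi^\star\left(\widecheck{x}_{t+1}^\psi\right)$.
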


\begin{proof}
This proof is similar to the proof of \cref{S-I}. 
Note that $\sii$ can be rearranged as follows,
\begin{equation*}
\begin{aligned}
x_{t}^\psi+\theta_{t} \widehat{x}_{t}^*=\widetilde{x}_{t}^\psi,\quad \widecheck{x}_{t}^\psi=\widetilde{x}_{t-1}^\psi-\theta_{t-1} x_{t-1}^*, 
\end{aligned}
\end{equation*}
and 
\begin{equation}
\label{S-II-proof-1}
\varphi_t\left(x_t\right)-\varphi_t\left(z_t\right)
\leqslant\frac{1}{\theta_t}\left\langle \theta_t x_t^*,x_t-z_t\right\rangle, \quad x_t^*\in\partial\varphi_t\left(x_t\right),
\end{equation}
where 
\begin{align}
&\left\langle \theta_t x_t^*,x_t-z_t\right\rangle
=\left\langle \theta_t x_t^*, \widecheck{x}_{t+1}-z_t\right\rangle+\left\langle \theta_t \widehat{x}_{t}^*,x_t-\widecheck{x}_{t+1}\right\rangle+\left\langle  \theta_t x_t^*-\theta_t \widehat{x}_t^*,x_t-\widecheck{x}_{t+1}\right\rangle \notag\\
=\ &-\left\langle \widetilde{x}_{t}^\psi-\widecheck{x}_{t+1}^\psi,z_t-\widecheck{x}_{t+1}\right\rangle-\left\langle \widetilde{x}_{t}^\psi-x_{t}^\psi,\widecheck{x}_{t+1}-x_t\right\rangle+\left\langle \theta_t x_t^*-\theta_t \widehat{x}_t^*,x_t-\widecheck{x}_{t+1}\right\rangle \notag\\
=\ &B_{\psi}\left(z_t, \widetilde{x}_{t}^\psi\right)-B_{\psi}\left(z_t, \widecheck{x}_{t+1}^\psi\right)
-B_{\psi}\left(x_t, \widetilde{x}_t^\psi\right)-B_{\psi}\left(\widecheck{x}_{t+1}, x_t^\psi\right)+\left\langle \theta_t x_t^*-\theta_t \widehat{x}_t^*,x_t-\widecheck{x}_{t+1}\right\rangle \notag\\
\leqslant\ &B_{\psi}\left(z_t, \widetilde{x}_{t}^\psi\right)-B_{\psi}\left(z_t, \widecheck{x}_{t+1}^\psi\right)-B_{\psi}\left(x_t, \widetilde{x}_t^\psi\right)+\phi^\star\left(\theta_t\left\lVert x_t^*-\widehat{x}_t^*\right\rVert\right). \label{S-II-proof-2}
\end{align}
To complete the proof, it suffices to combine \cref{S-II-proof-1,S-II-proof-2}. 
\end{proof}

Note that 
\begin{equation*}
\begin{aligned}
\sum_{t=1}^{T}\frac{1}{\theta_t}\left[B_{\psi}\left(z_t, \widetilde{x}_{t}^\psi\right)-B_{\psi}\left(z_t, \widecheck{x}_{t+1}^\psi\right)\right]
=\ &\sum_{t=1}^{T}\frac{1}{\theta_t}\left[B_{\psi}\left(z_t, \widetilde{x}_{t}^\psi\right)-B_{\psi}\left(z_t, \widetilde{x}_{t+1}^\psi\right)\right] \\
&+\sum_{t=1}^{T}\frac{1}{\theta_t}\left[B_{\psi}\left(z_t, \widetilde{x}_{t+1}^\psi\right)-B_{\psi}\left(z_t, \widecheck{x}_{t+1}^\psi\right)\right].
\end{aligned}
\end{equation*}
This decomposition makes the bound of \cref{S-II} extremely similar to that of \cref{S-I}. 
The method of reducing the term $\sum_{t=1}^{T}\frac{1}{\theta_t}\left[B_{\psi}\left(z_t, \widetilde{x}_{t}^\psi\right)-B_{\psi}\left(z_t, \widetilde{x}_{t+1}^\psi\right)\right]$ is covered by \cref{static-reduce}. 
For the term $B_{\psi}\left(z_t, \widetilde{x}_{t+1}^\psi\right)-B_{\psi}\left(z_t, \widecheck{x}_{t+1}^\psi\right)$, there are two feasible ways to drop it. 
One is to set $\widetilde{x}_{t+1}^\psi= \widecheck{x}_{t+1}^\psi$, which forces $\sii$ back to $S\big|_{\eta_t\equiv 1}$. 
The other is to set $\psi$ be the squared norm on Hilbert space. 
See \cref{sec:SN} for analysis details.  

Next, we investigate the effect of introducing auxiliary strategies in addition to primary strategies on regret bounds. 
This is a general extension of Appendix~B of \cite{flaspohler2021online}. 

Note that $\left\{x_t\right\}_{t\geqslant 1}$ is determined by the estimated sequence $\left\{\widehat{x}_t^*\right\}_{t\geqslant 1}$, which is arbitrary. 
Without changing the primary strategy sequence $\left\{x_t\right\}_{t\geqslant 1}$, we introduce an auxiliary strategy sequence $\left\{y_t\right\}_{t\geqslant 1}$, which is determined by $\left\{\widehat{y}_t^*\right\}_{t\geqslant 1}$. 
The instantaneous dynamic regret can be decomposed as follows, 
\begin{equation*}
\begin{aligned}
\varphi_t\left(x_t\right)-\varphi_t\left(z_t\right)
\leqslant\left\langle x_t^*,x_t-z_t\right\rangle
=\underbrace{\left\langle x_t^*,x_t-y_t\right\rangle}_{\text{drift}}+\underbrace{\left\langle x_t^*,y_t-z_t\right\rangle}_{\text{auxiliary}}, \quad x_t^*\in\partial\varphi_t\left(x_t\right).
\end{aligned}
\end{equation*}
The regret bound for the auxiliary term is simply replacing $x_{t}$, $\widecheck{x}_{t}^\psi$ and $\widetilde{x}_{t}^\psi$ with $y_{t}$, $\widecheck{y}_{t}^\psi$ and $\widetilde{y}_{t}^\psi$ respectively. 
Therefore, it suffices to obtain the upper bound for the drift term. 
Indeed, we have the following proposition. 

\begin{proposition}
\label{auxiliary}
If an auxiliary strategy sequence $\left\{y_t\right\}_{t\geqslant 1}$ determined by $\left\{\widehat{y}_t^*\right\}_{t\geqslant 1}$ is introduced in addition to the primary strategy sequence $\left\{x_t\right\}_{t\geqslant 1}$, then the term $\phi^\star\left(\xi\left\lVert x_t^*-\widehat{x}_t^*\right\rVert\right)$ in regret upper bounds is replaced by $\mathit{\Phi}_{\xi}\left(x_t^*,\widehat{x}_t^*\right)$, and $x_{t}$, $\widecheck{x}_{t}^\psi$ and $\widetilde{x}_{t}^\psi$ in remaining terms are replaced by $y_{t}$, $\widecheck{y}_{t}^\psi$ and $\widetilde{y}_{t}^\psi$ respectively, where 
\begin{equation*}
\begin{aligned}
\mathit{\Phi}_{\xi}\left(x^*,\widehat{x}^*\right)=\phi^\star\left(\xi\left\lVert x^*-\widehat{y}^*\right\rVert\right)+\inf_{\gamma>0}\left(\frac{1}{\gamma}\phi^\star\left(\gamma\xi\left\lVert x^*\right\rVert\right)+\frac{1}{\gamma}\phi^\star\left(\xi\left\lVert \widehat{x}^*-\widehat{y}^*\right\rVert\right)\right).
\end{aligned}
\end{equation*}
If $\phi=Q_\rho$, and $\widehat{y}^*=\lambda\widehat{x}^*+\left(1-\lambda\right)x^*$, where $\lambda=\min\left\{\frac{\left\lVert x^*\right\rVert}{\left\lVert x^*-\widehat{x}^*\right\rVert},\,1\right\}$, then 
\begin{equation}
\label{Phi-term}
\begin{aligned}
\mathit{\Phi}_{\xi}\left(x^*,\widehat{x}^*\right)
&=Q_\rho^\star\big(\xi\min\left\{\left\lVert x^*-\widehat{x}^*\right\rVert, \left\lVert x^*\right\rVert\right\}\big)
+\xi\left\lVert x^*\right\rVert\min\left\{\xi\left(\left\lVert x^*-\widehat{x}^*\right\rVert-\left\lVert x^*\right\rVert\right)_+, \rho\right\} \\
&\leqslant\xi^2 Q_{\left\lVert x^*\right\rVert}^\star\left(\left\lVert x^*-\widehat{x}^*\right\rVert\right)-\frac{1}{2}\big(\xi\min\left\{\left\lVert x^*-\widehat{x}^*\right\rVert, \left\lVert x^*\right\rVert\right\}-\rho\big)_+^2 .
\end{aligned}
\end{equation}
\end{proposition}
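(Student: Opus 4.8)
The plan is to begin from the instantaneous decomposition $\varphi_t(x_t)-\varphi_t(z_t)\le\langle x_t^*,x_t-y_t\rangle+\langle x_t^*,y_t-z_t\rangle$ displayed immediately before the statement, and to treat the auxiliary and drift summands separately with $\xi$ standing for $\eta_t\theta_t$ (in $\si$) or $\theta_t$ (in $\sii$). For the auxiliary summand $\langle x_t^*,y_t-z_t\rangle$ I would rerun the proof of \cref{S-I} (respectively \cref{S-II}) verbatim, with $y_t,\widecheck{y}_t^\psi,\widetilde{y}_t^\psi$ in place of $x_t,\widecheck{x}_t^\psi,\widetilde{x}_t^\psi$. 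This is legitimate because the intermediate variables $\widecheck{x}_t^\psi$ and $\widetilde{x}_t^\psi$ are built only from the true subgradients $x_i^*$ and therefore coincide for the two sequences; the sole difference is the final step $x_t\in\partial\psi^\star(\widetilde{x}_t^\psi-\xi\widehat{x}_t^*)$ versus $y_t\in\partial\psi^\star(\widetilde{x}_t^\psi-\xi\widehat{y}_t^*)$. Since the subgradient appearing in the regret is still $x_t^*$, the Fenchel--Young step of that proof now compares $x_t^*$ with the estimate $\widehat{y}_t^*$ and yields the optimistic term $\phi^\star(\xi\lVert x_t^*-\widehat{y}_t^*\rVert)$, i.e.\ the first summand of $\mathit{\Phi}_\xi$, leaving all the remaining Bregman terms as the claimed $y$-analogues.

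For the drift summand $\langle x_t^*,x_t-y_t\rangle$ I would first control the displacement $\lVert x_t-y_t\rVert$. Setting $x_t^\psi=\widetilde{x}_t^\psi-\xi\widehat{x}_t^*$ and $y_t^\psi=\widetilde{x}_t^\psi-\xi\widehat{y}_t^*$, which are subgradients of $\psi$ at $x_t$ and $y_t$ by \cref{Symmetry-of-subdifferential}, one has $y_t^\psi-x_t^\psi=\xi(\widehat{x}_t^*-\widehat{y}_t^*)$. Applying the generalized cosine rule (\cref{Generalized-cosine-law}) anchored at $(x_t,x_t^\psi)\in\gra\partial\psi$, and lower-bounding the two resulting Bregman divergences by $\phi(\lVert x_t-y_t\rVert)$ via the $\phi$-convexity of $\psi$ (\cref{phi-convex}), produces $2\phi(\lVert x_t-y_t\rVert)\le\xi\lVert\widehat{x}_t^*-\widehat{y}_t^*\rVert\,\lVert x_t-y_t\rVert$; a single Fenchel--Young step then collapses this to $\phi(\lVert x_t-y_t\rVert)\le\phi^\star(\xi\lVert\widehat{x}_t^*-\widehat{y}_t^*\rVert)$. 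Next I would bound $\langle x_t^*,x_t-y_t\rangle\le\lVert x_t^*\rVert\,\lVert x_t-y_t\rVert$ and apply the rescaled Fenchel--Young inequality $ab\le\frac1\delta\phi(\delta a)+\frac1\delta\phi^\star(b)$ --- valid for every $\delta>0$ because $(\frac1\delta\phi(\delta\,\cdot))^\star=\frac1\delta\phi^\star$ --- with $\delta=\gamma\xi$, turning the product into $\frac1{\gamma\xi}\phi(\lVert x_t-y_t\rVert)+\frac1{\gamma\xi}\phi^\star(\gamma\xi\lVert x_t^*\rVert)$. Substituting the displacement bound and taking the infimum over $\gamma>0$ yields exactly the infimal part of $\mathit{\Phi}_\xi$; together with the auxiliary contribution this puts $\frac1\xi\mathit{\Phi}_\xi(x_t^*,\widehat{x}_t^*)$ in the place formerly occupied by $\frac1\xi\phi^\star(\xi\lVert x_t^*-\widehat{x}_t^*\rVert)$.

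For the closed form in \eqref{Phi-term} I would specialize $\phi=Q_\rho$. With $\widehat{y}^*=\lambda\widehat{x}^*+(1-\lambda)x^*$ and the stated $\lambda$, a direct computation gives $\lVert x^*-\widehat{y}^*\rVert=\min\{\lVert x^*\rVert,\lVert x^*-\widehat{x}^*\rVert\}$ and $\lVert\widehat{x}^*-\widehat{y}^*\rVert=(\lVert x^*-\widehat{x}^*\rVert-\lVert x^*\rVert)_+$, which already identifies the leading term $Q_\rho^\star(\xi\min\{\cdots\})$. It then remains to evaluate $\inf_{\gamma>0}\frac1\gamma\big(Q_\rho^\star(\gamma\xi\lVert x^*\rVert)+Q_\rho^\star(\xi(\lVert x^*-\widehat{x}^*\rVert-\lVert x^*\rVert)_+)\big)$ using the piecewise form $Q_\rho^\star(\varkappa)=\frac12\varkappa^2-\frac12(\lvert\varkappa\rvert-\rho)_+^2$: the minimizer stays on the quadratic branch when $\xi(\lVert x^*-\widehat{x}^*\rVert-\lVert x^*\rVert)_+\le\rho$, and is otherwise driven into the linear branch with the infimum attained as $\gamma\to\infty$, in both cases giving $\xi\lVert x^*\rVert\min\{\xi(\lVert x^*-\widehat{x}^*\rVert-\lVert x^*\rVert)_+,\rho\}$. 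The final inequality in \eqref{Phi-term} then follows by expanding $\xi^2Q_{\lVert x^*\rVert}^\star(\lVert x^*-\widehat{x}^*\rVert)$ and invoking $\min\{a,\rho\}\le a$ in a short comparison split on whether $\lVert x^*-\widehat{x}^*\rVert\le\lVert x^*\rVert$.

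I expect the only genuine obstacle to be this last infimum: one must correctly track which branch of the Huber-type conjugate $Q_\rho^\star$ is active, check that the candidate stationary point remains feasible for the quadratic branch, and confirm that the boundary and limiting values do not undercut it. By contrast, the drift/auxiliary decomposition and the cosine-rule and Fenchel--Young manipulations are routine once the machinery of \cref{Generalized-cosine-law}, \cref{Symmetry-of-subdifferential} and \cref{phi-convex} is in place.
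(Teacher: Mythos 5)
Your proposal is correct, and its overall architecture is the same as the paper's: the drift/auxiliary split, the rerun of the \cref{S-I}/\cref{S-II} machinery on the auxiliary term with $\widehat{y}_t^*$ in place of $\widehat{x}_t^*$ (yielding $\phi^\star\left(\xi\left\lVert x_t^*-\widehat{y}_t^*\right\rVert\right)$), the displacement bound $\phi\left(\left\lVert x_t-y_t\right\rVert\right)\leqslant\phi^\star\left(\xi\left\lVert \widehat{x}_t^*-\widehat{y}_t^*\right\rVert\right)$, the rescaled Fenchel--Young step with the infimum over $\gamma$, and the two-branch evaluation of the Huber-type infimum for $\phi=Q_\rho$ (including checking that the quadratic branch is not undercut, and that for $\xi\left\lVert \widehat{x}^*-\widehat{y}^*\right\rVert>\rho$ the value $\rho\xi\left\lVert x^*\right\rVert$ is approached as $\gamma\to\infty$) all match the paper's proof. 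The one step you do differently is the displacement bound itself: the paper invokes \cref{affine-convex} to make the common objective $f_t\left(y\right)=\left\langle \eta_t\sum_{i=1}^{t-1}\theta_i x_i^*,\, y\right\rangle+B_{\psi}\left(y, a^\psi\right)$ $\phi$-convex and adds the two minimizer optimality-gap inequalities for $x_t$ and $y_t$, whereas you apply \cref{Generalized-cosine-law} to $\psi$ directly, using the subgradient identities $x_t^\psi=\widetilde{x}_t^\psi-\xi\widehat{x}_t^*\in\partial\psi\left(x_t\right)$ and $y_t^\psi=\widetilde{x}_t^\psi-\xi\widehat{y}_t^*\in\partial\psi\left(y_t\right)$ supplied by \cref{Symmetry-of-subdifferential}. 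Both routes terminate in the identical inequality $2\phi\left(\left\lVert x_t-y_t\right\rVert\right)\leqslant\xi\left\lVert \widehat{x}_t^*-\widehat{y}_t^*\right\rVert\left\lVert x_t-y_t\right\rVert$ followed by one Fenchel--Young application, so the difference is one of mechanism rather than substance; if anything, your version is slightly cleaner for the relaxed strategies, since it works with the actual intermediate variable $\widetilde{x}_t^\psi$ appearing in the $\si$/$\sii$ update rules rather than with the cumulative sum (which equals $\widecheck{x}_t^\psi$, not $\widetilde{x}_t^\psi$, when the two differ), and it avoids appealing to \cref{affine-convex} altogether.
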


\begin{remark}
$Q_\rho$-convex tightens the regret upper bound more finely. 
The Huber penalty in \cite{flaspohler2021online} corresponds to the $Q_\infty$-convex case, that is, the last subtraction term in the second line of \cref{Phi-term} vanishes. 
\end{remark}

\section{Instantiations}
\label{sec:instantiations}

All the above analyses are based on the abstract description of the $\phi$-convexity of $\psi$. 
In this section, we instantiate $\psi$ in two forms,  the negative entropy and the squared norm. 

\boldmath
\subsection{$\psi$ is Negative Entropy}\unboldmath

In this part, we set $E=\left(\mathbb{R}^{n+1}, \left\lVert\cdot\right\rVert_1\right)$, and then $E^*=\left(\mathbb{R}^{n+1}, \left\lVert\cdot\right\rVert_{\infty}\right)$. 

\begin{lemma}
\label{Negative-Entropy}
Set $\psi$ be the negative entropy, that is,
\begin{equation*}
\begin{aligned}
\psi\left(w\right)=\left\langle w, \ln w\right\rangle+\chi_{\bigtriangleup^{n}}\left(w\right),
\end{aligned}
\end{equation*}
where the probability simplex $\bigtriangleup^{n}\coloneqq\left\{w\in E\left\lvert\,w\geqslant 0,\,\left\lVert w\right\rVert_1=1\right.\right\}$ is a compact subset of $E$. 
Then $\psi$ is $Q_2$-convex, and the strategy $S$ can be instantiated as the following Optimistic Normalized Exponentiated Subgradient (\textup{ONES}),
\begin{equation*}
w_{t}=\mathscr{N}\left(a\circ\mathrm{e}^{-\eta_t\left(\sum_{i=1}^{t-1}\theta_i \ell_i+\theta_t\widehat{\ell}_t\right)}\right),
\end{equation*}
where $\mathscr{N}$ is the normalization operator, 
$\circ$ denotes the Hadamard product, 
$\ell_i\in E^*$ is the loss vector, 
$\widehat{\ell}_t$ is the estimated vector corresponding to $\ell_t$. 
\end{lemma}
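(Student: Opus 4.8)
The statement bundles two claims: first, that the negative entropy $\psi$ is $Q_2$-convex, and second, that substituting this $\psi$ into the update rule of $S$ produces exactly the ONES iterate. The plan is to treat these separately, since the first is an inequality about the generalized Bregman divergence and the second is a calculus computation of $\partial\psi^\star$.

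For the $Q_2$-convexity, I would unwind \cref{def:Bregman} and \cref{phi-convex}. Fixing $(y, y^*) \in \gra\partial\psi$, the defining identity $B_\psi(y, y^*) = 0$ lets me rewrite $B_\psi(x, y^*) = \psi(x) - \psi(y) - \langle y^*, x - y\rangle$, which for $x, y$ in the simplex is the classical Bregman divergence of negative entropy, namely the Kullback--Leibler divergence $\sum_i x_i \ln(x_i / y_i)$. The target inequality $B_\psi(x, y^*) \geqslant Q_2(\|x - y\|_1)$ then reduces to Pinsker's inequality $\sum_i x_i \ln(x_i/y_i) \geqslant \tfrac12 \|x-y\|_1^2$. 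The only thing left to check is that the cut-off in $Q_2$ is inactive: because $x, y \in \triangle^n$ forces $\|x-y\|_1 \leqslant \|x\|_1 + \|y\|_1 = 2$, we have $Q_2(\|x-y\|_1) = \tfrac12\|x-y\|_1^2$, so Pinsker delivers exactly the bound. For $x \notin \triangle^n$ the divergence is $+\infty$ and the inequality is vacuous.

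For the ONES instantiation, the plan is to evaluate the multivalued form $x_t \in \partial\psi^\star(a^\psi - \eta_t\sum_{i=1}^{t-1}\theta_i x_i^* - \eta_t\theta_t\widehat{x}_t^*)$. I would first compute $\psi^\star$ as the log-sum-exp function $\psi^\star(v) = \ln\sum_i e^{v_i}$ by solving the simplex-constrained maximization in the conjugate, and then differentiate to get $\partial\psi^\star(v) = \mathscr{N}(e^v)$, the softmax, with $\mathscr{N}$ the normalization. Identifying the loss subgradients $x_i^* = \ell_i$ and estimate $\widehat{x}_t^* = \widehat{\ell}_t$, and choosing the representative $a^\psi = \ln a \in \partial\psi(a)$, the argument of $\partial\psi^\star$ becomes $\ln a - \eta_t(\sum_{i=1}^{t-1}\theta_i\ell_i + \theta_t\widehat{\ell}_t)$; exponentiating coordinatewise and normalizing yields exactly $w_t = \mathscr{N}(a \circ e^{-\eta_t(\sum_{i=1}^{t-1}\theta_i\ell_i + \theta_t\widehat{\ell}_t)})$.

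The main obstacle I anticipate is the subgradient calculus of $\psi$ on the constrained simplex: $\partial\psi(a)$ is not a singleton, because the indicator $\chi_{\triangle^n}$ contributes the normal cone of the simplex (the $\mathbf{1}$-direction), so $a^\psi$ is only determined up to an additive multiple of $\mathbf{1}$. I would resolve this by noting that $\mathscr{N}(e^{v})$ is invariant under $v \mapsto v + c\mathbf{1}$, so the softmax, and hence $x_t$, is well-defined regardless of which representative $a^\psi$ is used; this invariance is precisely what makes the passage from $\partial\psi^\star$ to the normalized exponential clean. A secondary point to state carefully is the differentiability of $\psi$ at relative-interior points, so that the KL-divergence identity holds for the chosen $y^*$; since the ONES iterates lie in the relative interior of $\triangle^n$, this causes no trouble.
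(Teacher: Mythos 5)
Your proposal is correct and takes essentially the same route as the paper: the paper obtains $Q_2$-convexity from the known $1$-strong convexity of negative entropy w.r.t.\ $\left\lVert\cdot\right\rVert_1$ over $\bigtriangleup^{n}$ (i.e., Pinsker's inequality) together with the diameter bound $\left\lVert w-u\right\rVert_1\leqslant 2$, and obtains ONES from the log-sum-exp conjugate with $\partial\psi^\star\left(v\right)=\mathscr{N}\mathrm{e}^{v}$ and the subgradient representative $\mathbf{1}+\ln a$ (your choice $\ln a$ differs only by a multiple of $\mathbf{1}$, which normalization absorbs, exactly as you note). The only point you omit is the paper's one-line Heine--Borel verification that $\bigtriangleup^{n}$ is compact, which is asserted as part of the lemma and is what licenses applying the regret theorems to this instantiation.
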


$\textup{ONES}\big|_{\theta_t\equiv 1}$ can be rearranged as
\begin{equation*}
\begin{aligned}
\widetilde{w}_{t}&=\mathscr{N}\left(a\circ\mathrm{e}^{-\eta_t\sum_{i=1}^{t-1}\ell_i}\right), &&\quad a\in\bigtriangleup^{n},\\
w_{t}&=\mathscr{N}\left(\widetilde{w}_{t}\circ\mathrm{e}^{-\eta_t\widehat{\ell}_t}\right), &&
\end{aligned}
\end{equation*}
and its function form is $\left(w_{t}, \widetilde{w}_{t}\right)=\textup{ONES}\left(\widehat{\ell}_{t}, \ell_{t-1}; \eta_t, 1\right)$. 
$\textup{ONES}\big|_{\eta_t\equiv 1}$ can be rearranged as
\begin{equation*}
\begin{aligned}
\widetilde{w}_{t}&=\mathscr{N}\left(\widetilde{w}_{t-1}\circ\mathrm{e}^{-\theta_{t-1}\ell_{t-1}}\right), &&\quad a\in\bigtriangleup^{n},\\
w_{t}&=\mathscr{N}\left(\widetilde{w}_{t}\circ\mathrm{e}^{-\theta_t\widehat{\ell}_t}\right), &&
\end{aligned}
\end{equation*}
and its corresponding function form is $\left(w_{t}, \widetilde{w}_{t}\right)=\textup{ONES}\left(\widehat{\ell}_{t}, \ell_{t-1}; 1, \theta_t\right)$. 

\begin{remark}
It is worth mentioning that \textup{ONES} has two parameters, and $\textup{ONES}\big|_{\eta_t\equiv 1}$ has a plethora of different names, such as Optimistic Hedge, Optimistic Exponential Weights, etc. 
\end{remark}

The static regret for \textup{ONES} is formalized into the following corollary. 

\begin{corollary}[Static Regret for \textup{ONES}]
\label{ONES} 
Under the assumption of $\eta_t\geqslant\eta_{t+1}$, $\textup{ONES}\big|_{\theta_t\equiv 1}$ enjoys the following static regret upper bound,
\begin{equation}
\label{ONES-eta-regret}
\regret\left(u,u,\cdots,u\right)
\leqslant \frac{1}{\eta_{T+1}}\left\langle u, \ln \frac{u}{a}\right\rangle+\sum_{t=1}^{T}\frac{1}{\eta_t} Q_{2}^\star\left(\eta_t\left\lVert \ell_t-\widehat{\ell}_t\right\rVert_{\infty}\right)-\sum_{t=1}^T\frac{1}{\eta_t}\left\langle w_t, \ln \frac{w_t}{\widetilde{w}_t}\right\rangle,
\end{equation}
or the corresponding version with auxiliary strategies, 
\begin{equation}
\label{ONES-eta-regret-auxiliary}
\regret\left(u,u,\cdots,u\right)
\leqslant \frac{1}{\eta_{T+1}}\left\langle u, \ln \frac{u}{a}\right\rangle+\sum_{t=1}^{T}\frac{1}{\eta_t}\mathit{\Phi}_{\eta_t}\left(\ell_t,\widehat{\ell}_t\right)-\sum_{t=1}^T\frac{1}{\eta_t}\left\langle v_t, \ln \frac{v_t}{\widetilde{v}_t}\right\rangle,
\end{equation}
where $\forall u\in\bigtriangleup^{n}$, $\left(v_{t}, \widetilde{v}_{t}\right)=\textup{ONES}\left(\widehat{\ell}'_{t}, \ell_{t-1}; \eta_t, 1\right)$, $\widehat{\ell}'_{t}=\lambda\widehat{\ell}_t+\left(1-\lambda\right)\ell_t$, $\lambda=\min\left\{\frac{\left\lVert \ell_t\right\rVert_{\infty}}{\left\lVert \ell_t-\widehat{\ell}_t\right\rVert_{\infty}},\,1\right\}$, and $\mathit{\Phi}$ follows from \cref{Phi-term} with $\rho=2$. 

Under the assumption of $\theta_{t-1}\leqslant \theta_{t}$, $\textup{ONES}\big|_{\eta_t\equiv 1}$ enjoys the following static regret upper bound,  
\begin{equation}
\label{ONES-theta-regret}
\regret\left(u,u,\cdots,u\right)
\leqslant \frac{1}{\theta_{1}}\left\langle u, \ln \frac{u}{a}\right\rangle+\sum_{t=1}^{T}\frac{1}{\theta_t} Q_{2}^\star\left(\theta_t\left\lVert \ell_t-\widehat{\ell}_t\right\rVert_{\infty}\right)-\sum_{t=1}^T\frac{1}{\theta_t}\left\langle w_t, \ln \frac{w_t}{\widetilde{w}_t}\right\rangle,
\end{equation}
or the corresponding version with auxiliary strategies, 
\begin{equation}
\label{ONES-theta-regret-auxiliary}
\regret\left(u,u,\cdots,u\right)
\leqslant \frac{1}{\theta_{1}}\left\langle u, \ln \frac{u}{a}\right\rangle+\sum_{t=1}^{T}\frac{1}{\theta_t}\mathit{\Phi}_{\theta_t}\left(\ell_t,\widehat{\ell}_t\right)-\sum_{t=1}^T\frac{1}{\theta_t}\left\langle v_t, \ln \frac{v_t}{\widetilde{v}_t}\right\rangle,
\end{equation}
where $\forall u\in\bigtriangleup^{n}$, $\left(v_{t}, \widetilde{v}_{t}\right)=\textup{ONES}\left(\widehat{\ell}'_{t}, \ell_{t-1}; 1, \theta_t\right)$, $\widehat{\ell}'_{t}=\lambda\widehat{\ell}_t+\left(1-\lambda\right)\ell_t$, $\lambda=\min\left\{\frac{\left\lVert \ell_t\right\rVert_{\infty}}{\left\lVert \ell_t-\widehat{\ell}_t\right\rVert_{\infty}},\,1\right\}$, and $\mathit{\Phi}$ follows from \cref{Phi-term} with $\rho=2$. 
\end{corollary}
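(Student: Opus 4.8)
The plan is to obtain all four bounds by instantiating the abstract static-regret bounds of \cref{S-static} in the concrete setting where $\psi$ is the negative entropy, using \cref{Negative-Entropy} to supply both the identification of $S$ with $\textup{ONES}$ and the fact that $\psi$ is $Q_2$-convex (so $\phi = Q_2$ and $\phi^\star = Q_2^\star$). Since $\bigtriangleup^n$ is compact, the hypothesis ``$C$ is compact'' in \cref{S-static} holds automatically, so no separate argument is needed to guarantee $w_t \in \bigtriangleup^n$. With $z = u$, $x_t^* = \ell_t$, $\widehat{x}_t^* = \widehat{\ell}_t$, and the dual norm on $E^*$ being $\lVert\cdot\rVert_\infty$, the $\phi^\star$ term of \eqref{ONES-eta-regret}/\eqref{ONES-theta-regret} is immediate; the only genuine work is to translate the two Bregman terms $B_{\psi}(z,a^\psi)$ and $B_{\psi}(x_t,\widetilde{x}_t^\psi)$ into their entropic forms.

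The heart of the argument is the computation of the generalized Bregman divergence of $\psi$. First I would record that $\psi^\star(y^*) = \ln\sum_i \mathrm{e}^{y^*_i}$ and that $\partial\psi^\star(y^*) = \mathscr{N}(\mathrm{e}^{y^*})$, so the primal point associated with a dual point $y^*$ is its softmax. Then, writing $v = \mathscr{N}(\mathrm{e}^{y^*})$ and using $B_{\psi}(w,y^*) = \psi(w) + \psi^\star(y^*) - \langle y^*,w\rangle$ from \cref{def:Bregman}, the key observation is that the log-partition constant $\ln\sum_i \mathrm{e}^{y^*_i}$ contributed by $\psi^\star$ cancels against the corresponding constant in $\langle y^*, w\rangle$ precisely because $w \in \bigtriangleup^n$ satisfies $\lVert w\rVert_1 = 1$; this yields $B_{\psi}(w, y^*) = \langle w, \ln(w/v)\rangle$, i.e.\ the divergence collapses to the relative entropy and is insensitive to the additive-constant ambiguity in $y^*$. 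Applying this with $w = u$, $v = a$ gives $B_{\psi}(u,a^\psi) = \langle u, \ln(u/a)\rangle$, and with $w = w_t$, $v = \widetilde{w}_t$ (recognizing $\widetilde{w}_t = \partial\psi^\star(\widetilde{x}_t^\psi)$ from the $\textup{ONES}$ recursion) gives $B_{\psi}(w_t,\widetilde{x}_t^\psi) = \langle w_t, \ln(w_t/\widetilde{w}_t)\rangle$. Substituting these into the two bounds of \cref{S-static} produces \eqref{ONES-eta-regret} and \eqref{ONES-theta-regret}.

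For the auxiliary-strategy versions, I would invoke \cref{auxiliary}: it replaces $\phi^\star(\xi\lVert x_t^*-\widehat{x}_t^*\rVert)$ by $\mathit{\Phi}_\xi(x_t^*,\widehat{x}_t^*) = \mathit{\Phi}_\xi(\ell_t,\widehat{\ell}_t)$ and replaces the primary pair $(x_t,\widetilde{x}_t^\psi)$ in the subtracted term by the auxiliary pair $(v_t,\widetilde{v}_t^\psi)$, where $v_t$ is generated from the estimate $\widehat{\ell}'_t$. Specializing the general $\mathit{\Phi}$ to $\phi = Q_\rho$ with $\rho = 2$ via \cref{Phi-term}, together with the prescribed $\widehat{\ell}'_t = \lambda\widehat{\ell}_t + (1-\lambda)\ell_t$ and $\lambda = \min\{\lVert\ell_t\rVert_\infty/\lVert\ell_t-\widehat{\ell}_t\rVert_\infty, 1\}$, and re-expressing the auxiliary subtracted term as the relative entropy $\langle v_t, \ln(v_t/\widetilde{v}_t)\rangle$ exactly as above, yields \eqref{ONES-eta-regret-auxiliary} and \eqref{ONES-theta-regret-auxiliary}.

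I expect the main obstacle to be purely the bookkeeping of the entropic conjugate rather than any inequality: the one point deserving care is the cancellation of the log-partition constant, which hinges on $\bigtriangleup^n$ lying in the affine hyperplane $\lVert w\rVert_1 = 1$ and is what makes the relative-entropy form well defined despite $\widetilde{x}_t^\psi$ being determined only up to an additive constant. Everything else is a direct substitution into \cref{S-static,auxiliary,Phi-term}.
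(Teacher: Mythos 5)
Your proposal is correct and follows essentially the same route as the paper's proof: compute the entropic form of the generalized Bregman divergence (so that $B_{\psi}$ collapses to relative entropy), note compactness of $\bigtriangleup^{n}$ and $Q_2$-convexity from \cref{Negative-Entropy}, and then apply \cref{S-static} (with \cref{auxiliary} and \cref{Phi-term} for the auxiliary versions). If anything, you are more explicit than the paper on two points it leaves implicit --- the insensitivity of $B_{\psi}\left(w, y^*\right)$ to the additive-constant ambiguity in $y^*$ (needed since $\widetilde{x}_t^\psi$ is not the canonical subgradient $\mathbf{1}+\ln \widetilde{w}_t$), and the invocation of \cref{auxiliary} for \eqref{ONES-eta-regret-auxiliary} and \eqref{ONES-theta-regret-auxiliary} --- but the underlying argument is the same.
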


\begin{remark}
According to \cref{Entropy-strongly-convex}, 
\begin{equation*}
\begin{aligned}
\left\langle w_t, \ln \frac{w_t}{\widetilde{w}_t}\right\rangle\geqslant\frac{1}{2}\left\lVert w_t-\widetilde{w}_t\right\rVert_{1}^2, 
\end{aligned}
\end{equation*}
then \cref{ONES-eta-regret} (as well as Equation~\ref{ONES-theta-regret}) can be relaxed as 
\begin{equation*}
\begin{aligned}
\frac{1}{\eta_{T+1}}\left\langle u, \ln \frac{u}{a}\right\rangle+\sum_{t=1}^{T}\frac{\eta_t}{2}\left\lVert \ell_t-\widehat{\ell}_t\right\rVert_{\infty}^2-\sum_{t=1}^T\frac{1}{2\eta_t}\left(\eta_t\left\lVert \ell_t-\widehat{\ell}_t\right\rVert_{\infty}-2\right)_+^2-\sum_{t=1}^T\frac{1}{2\eta_t}\left\lVert w_t-\widetilde{w}_t\right\rVert_{1}^2. 
\end{aligned}
\end{equation*}
Without the first subtraction term, this bound is a refined version of Theorem~19 of \cite{syrgkanis2015fast},  
without all the subtraction terms, this bound is directly implied by Theorem~7.28 of \cite{orabona2020modern}. 

By dropping the last term $\left\langle v_t, \ln \frac{v_t}{\widetilde{v}_t}\right\rangle$, \cref{ONES-eta-regret-auxiliary} (as well as Equation~\ref{ONES-theta-regret-auxiliary}) can be relaxed as 
\begin{equation*}
\begin{aligned}
\frac{1}{\eta_{T+1}}\left\langle u, \ln \frac{u}{a}\right\rangle+\sum_{t=1}^{T}\eta_t Q_{\left\lVert \ell_t\right\rVert_{\infty}}^\star\left(\left\lVert \ell_t-\widehat{\ell}_t\right\rVert_{\infty}\right)
-\frac{1}{2}\sum_{t=1}^{T}\left(\eta_t\min\left\{\left\lVert \ell_t-\widehat{\ell}_t\right\rVert_{\infty}, \left\lVert  \ell_t\right\rVert_{\infty}\right\}-2\right)_+^2.
\end{aligned}
\end{equation*}
Without the last subtraction term, this bound is directly implied by Theorem~3 of \cite{flaspohler2021online}. 
\end{remark}

\boldmath
\subsection{$\psi$ is Squared Norm}\unboldmath
\label{sec:SN}

In this part, we set $E$ be a Hilbert space over $\mathbb{R}$. 

\begin{lemma}
\label{squared-norm}
Set $\psi$ be the squared norm on $C$, that is,
\begin{equation*}
\begin{aligned}
\psi\left(x\right)=\frac{1}{2}\left\lVert x\right\rVert^2+\chi_C\left(x\right),
\end{aligned}
\end{equation*}
and let $\rho=\sup_{x,y\in C}\left\lVert x-y\right\rVert$.  
Then $\psi$ is $Q_\rho$-convex, $\partial\psi\left(C\right)=E$, 
and $\si$ can be instantiated as the following Optimistic Lazy Projection (\textup{OLP}),
\begin{equation*}
\begin{aligned}
\widetilde{x}_t&=P_C\left(a-\eta_t\sum_{i=1}^{t-1}\theta_i x_i^*\right), \\
x_t&=P_C\left(\widetilde{x}_t -\eta_t\theta_t \widehat{x}_t^*\right),
\end{aligned}
\end{equation*}
$\sii$ can be instantiated as the following Optimistic Greedy Projection (\textup{OGP}),
\begin{equation*}
\begin{aligned}
\widetilde{x}_t&=P_C\left(\widetilde{x}_{t-1}-\theta_{t-1} x_{t-1}^*\right), \\
x_t&=P_C\left(\widetilde{x}_t -\theta_t \widehat{x}_t^*\right),
\end{aligned}
\end{equation*}
where $P_C$ represents the projection onto the closed convex subset $C$. 
\end{lemma}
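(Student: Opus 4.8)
The plan is to verify the four assertions in turn, working in the Hilbert space setting where $E$ is identified with $E^*$ via the Riesz isomorphism and where the metric projection $P_C$ onto the nonempty closed convex set $C$ is well defined and single-valued. The computation of $\psi^\star$ will do most of the work, because it simultaneously settles surjectivity of $\partial\psi$ and supplies the projection formula needed for the instantiations.

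First I would compute the Fenchel conjugate of $\psi\left(x\right)=\frac{1}{2}\left\lVert x\right\rVert^2+\chi_C\left(x\right)$. A completion of the square gives $\psi^\star\left(x^*\right)=\sup_{x\in C}\left(\left\langle x^*,x\right\rangle-\frac{1}{2}\left\lVert x\right\rVert^2\right)=\frac{1}{2}\left\lVert x^*\right\rVert^2-\frac{1}{2}\left\lVert x^*-P_C\left(x^*\right)\right\rVert^2$, with the supremum attained uniquely at $P_C\left(x^*\right)\in C$. Hence $\partial\psi^\star\left(x^*\right)=\left\{P_C\left(x^*\right)\right\}$ for every $x^*\in E$. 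Since $\psi$ is proper, convex and lower semicontinuous, \cref{Symmetry-of-subdifferential} then yields $x^*\in\partial\psi\left(P_C\left(x^*\right)\right)$ with $P_C\left(x^*\right)\in C$ for each $x^*$, which proves $\partial\psi\left(C\right)=E$ and identifies $\partial\psi^\star$ with $P_C$.

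Next I would establish $Q_\rho$-convexity. Because $\psi\left(x\right)=+\infty$ when $x\notin C$, the inequality of \cref{phi-convex} is trivial off $C$, so I only treat $x\in C$. Fixing $\left(y,y^*\right)\in\gra\partial\psi$, I would use $\partial\psi\left(y\right)=y+N_C\left(y\right)$, where $N_C\left(y\right)$ is the normal cone, so that $y^*=y+n$ with $n\in N_C\left(y\right)$. The Fenchel--Young equality $\psi^\star\left(y^*\right)=\left\langle y^*,y\right\rangle-\psi\left(y\right)$ collapses the divergence to $B_\psi\left(x,y^*\right)=\psi\left(x\right)-\psi\left(y\right)-\left\langle y^*,x-y\right\rangle$; substituting $\psi=\frac{1}{2}\left\lVert\cdot\right\rVert^2$ on $C$ and completing the square turns the quadratic part into $\frac{1}{2}\left\lVert x-y\right\rVert^2$, while the residual normal-cone term contributes $-\left\langle n,x-y\right\rangle\geqslant 0$ since $x\in C$. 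Thus $B_\psi\left(x,y^*\right)\geqslant\frac{1}{2}\left\lVert x-y\right\rVert^2$. Finally, as $x,y\in C$ force $\left\lVert x-y\right\rVert\leqslant\rho$, the indicator in $Q_\rho$ vanishes and $Q_\rho\left(\left\lVert x-y\right\rVert\right)=\frac{1}{2}\left\lVert x-y\right\rVert^2$, giving exactly $B_\psi\left(x,y^*\right)\geqslant Q_\rho\left(\left\lVert x-y\right\rVert\right)$.

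Lastly I would read off the two algorithms by substituting $\partial\psi^\star=P_C$ into the update rules of $\si$ and $\sii$, taking $a^\psi=a$ and the canonical subgradient choice $\widetilde{x}_t^\psi=\widetilde{x}_t$ (valid because $0\in N_C\left(\widetilde{x}_t\right)$ makes $\widetilde{x}_t\in\partial\psi\left(\widetilde{x}_t\right)$). For $\si$ this converts $\widetilde{x}_t\in\partial\psi^\star\left(\widecheck{x}_t^\psi\right)$ and $x_t\in\partial\psi^\star\left(\widetilde{x}_t^\psi-\eta_t\theta_t\widehat{x}_t^*\right)$ into the two projections defining \textup{OLP}, and the analogous substitution in $\sii$ yields \textup{OGP}. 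I expect the $Q_\rho$-convexity step to be the main obstacle: one must correctly identify $\partial\psi\left(y\right)=y+N_C\left(y\right)$, control the sign of the normal-cone pairing, and — crucially — recognize that the diameter bound $\left\lVert x-y\right\rVert\leqslant\rho$ is precisely what deactivates the constraint term of $Q_\rho$, so that the sharp statement is $Q_\rho$-convexity rather than merely $\frac{1}{2}\left(\cdot\right)^2$-convexity.
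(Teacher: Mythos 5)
Your proposal is correct and takes essentially the same route as the paper's proof: both rest on the Hilbert-space projection theorem (\cref{lm:projection}), the identification $\partial\psi^\star=P_C$ (yours via the $\arg\max$ characterization in \cref{Symmetry-of-subdifferential}, the paper's via a direct verification plus a uniqueness-by-contradiction step), and the identity selection $\widetilde{x}_t^\psi=\widetilde{x}_t$, $a^\psi=a$ for reading off \textup{OLP} and \textup{OGP}. The only cosmetic difference is the $Q_\rho$-convexity step, where you collapse $B_\psi\left(x,y^*\right)$ via Fenchel--Young and the normal-cone decomposition $\partial\psi\left(y\right)=y+N_C\left(y\right)$ while the paper computes $\psi^\star$ explicitly and applies the cosine rule; the decisive inequality $\left\langle y^*-y,\,x-y\right\rangle\leqslant 0$ is the same projection variational inequality in both, and your appeal to the normal-cone identity is harmless since its needed inclusion already follows from your first step ($y\in\partial\psi^\star\left(y^*\right)$ forces $y=P_C\left(y^*\right)$).
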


The function form of \textup{OLP} can be written as $\left(x_{t}, \widetilde{x}_{t}\right)=\textup{OLP}\left(\widehat{x}_{t}^*, x_{t-1}^*; \eta_t, \theta_{t}\right)$, and the function form of \textup{OGP} can be written as $\left(x_{t}, \widetilde{x}_{t}\right)=\textup{OGP}\left(\widehat{x}_{t}^*, x_{t-1}^*; \theta_{t}\right)$. 

\begin{remark}
It is worth mentioning that the lazy projection here has two parameters, while the usual lazy projection has only one parameter. 
\cite{mcmahan2017survey} proved that the lazy projection algorithm is just DA and the greedy one is just MD. 
Here we proved that the two-parameter strategy \textup{OLP} is just $\si$. 
It should be emphasized that all projection algorithms must be in Hilbert space, and both $\si$ and $\sii$ run in normed vector space. The instantiations require the extra constraint that $\partial\varphi$ is the identity map. 
\end{remark}

\begin{corollary}[Dynamic Regret for \textup{OLP}]
\label{OLP}
Under the assumption of $\theta_{t-1}\leqslant \theta_{t}$, $\textup{OLP}\big|_{\eta_t\equiv 1}$ enjoys the following dynamic regret upper bound,
\begin{equation*}
\begin{aligned}
\regret\left(z_1,z_2,\cdots,z_T\right)
\leqslant\ &\frac{1}{2\theta_{1}}\left\lVert z_1-a\right\rVert^2+\sum_{t=2}^{T}\frac{1}{\theta_t}\left\lVert z_t-a+\sum_{i=1}^{t-1}\theta_i x_i^*\right\rVert \left\lVert z_t-z_{t-1}\right\rVert \\
&+\sum_{t=1}^{T}\frac{1}{\theta_t} Q_{\rho}^\star\left(\theta_t\left\lVert x_{t}^*-\widehat{x}_{t}^*\right\rVert\right)
-\sum_{t=1}^{T}\frac{1}{2\theta_t}\left\lVert x_{t}-\widetilde{x}_{t}\right\rVert^2,\quad\forall z_t\in C,
\end{aligned}
\end{equation*}
or the corresponding version with auxiliary strategies, 
\begin{equation*}
\begin{aligned}
\regret\left(z_1,z_2,\cdots,z_T\right)
\leqslant\ &\frac{1}{2\theta_{1}}\left\lVert z_1-a\right\rVert^2+\sum_{t=2}^{T}\frac{1}{\theta_t}\left\lVert z_t-a+\sum_{i=1}^{t-1}\theta_i x_i^*\right\rVert \left\lVert z_t-z_{t-1}\right\rVert \\
&+\sum_{t=1}^{T}\frac{1}{\theta_t}\mathit{\Phi}_{\theta_t}\left(x_t^*,\widehat{x}_t^*\right)
-\sum_{t=1}^{T}\frac{1}{2\theta_t}\left\lVert y_{t}-\widetilde{y}_{t}\right\rVert^2,\quad\forall z_t\in C,
\end{aligned}
\end{equation*}
where $\left(y_{t}, \widetilde{y}_{t}\right)=\textup{OLP}\left(\widehat{y}_{t}^*, x_{t-1}^*; 1, \theta_t\right)$, $\widehat{y}_t^*=\lambda\widehat{x}_t^*+\left(1-\lambda\right)x_t^*$, $\lambda=\min\left\{\frac{\left\lVert x_t^*\right\rVert}{\left\lVert x_t^*-\widehat{x}_t^*\right\rVert},\,1\right\}$, and $\mathit{\Phi}$ follows from \cref{Phi-term}. 
\end{corollary}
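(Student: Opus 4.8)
The plan is to obtain both displays of \cref{OLP} by specializing the general dynamic regret bound of \cref{S-I} to $\eta_t\equiv 1$, reducing the reference-point term through the second half of \cref{static-reduce}, and finally instantiating $\psi$ as the squared norm via \cref{squared-norm}. First I would invoke \cref{S-I}: since \cref{squared-norm} gives $\partial\psi\left(C\right)=E$ (which is $\partial\psi\left(C\right)=E^*$ under the Hilbert identification $E=E^*$), the hypothesis of \cref{S-I} holds and guarantees $x_t\in C$. Setting $\eta_t\equiv 1$ collapses the ratio $\eta_t/\eta_{t+1}$ to $1$, so the first bracket of \cref{S-I} becomes $\sum_{t}\frac{1}{\theta_t}\left[B_{\psi}\left(z_t,\widecheck{x}_t^\psi\right)-B_{\psi}\left(z_t,\widecheck{x}_{t+1}^\psi\right)\right]$, to which the second inequality of \cref{static-reduce} (valid under $\theta_{t-1}\leqslant\theta_t$) applies, producing the term $\frac{1}{\theta_1}B_{\psi}\left(z_1,a^\psi\right)$ together with the drift sum $\sum_{t=2}^T\frac{1}{\theta_t}\left\lVert\partial\psi\left(z_t\right)-\widecheck{x}_t^\psi\right\rVert\left\lVert z_t-z_{t-1}\right\rVert$.

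Next I would carry out the squared-norm instantiation term by term. Choosing the subgradient $a^\psi=a$ (legitimate since $0\in\partial\chi_C\left(a\right)$, whence $a\in\partial\psi\left(a\right)$) and using $P_C\left(a\right)=a$ gives $B_{\psi}\left(z_1,a^\psi\right)=\frac{1}{2}\left\lVert z_1-a\right\rVert^2$; likewise picking $z_t\in\partial\psi\left(z_t\right)$ turns $\left\lVert\partial\psi\left(z_t\right)-\widecheck{x}_t^\psi\right\rVert$ into $\left\lVert z_t-a+\sum_{i=1}^{t-1}\theta_i x_i^*\right\rVert$, because $\widecheck{x}_t^\psi=a^\psi-\sum_{i=1}^{t-1}\theta_i x_i^*$. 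The key simplification is that the second bracket of \cref{S-I} vanishes: by \cref{Symmetry-of-subdifferential}, $\widetilde{x}_t\in\partial\psi^\star\left(\widecheck{x}_t^\psi\right)$ is equivalent to $\widecheck{x}_t^\psi\in\partial\psi\left(\widetilde{x}_t\right)$, so one may take $\widetilde{x}_t^\psi=\widecheck{x}_t^\psi$, whence $B_{\psi}\left(X_{t+1},\widetilde{x}_t^\psi\right)-B_{\psi}\left(X_{t+1},\widecheck{x}_t^\psi\right)=0$. For the penalty term, \cref{squared-norm} asserts $\psi$ is $Q_\rho$-convex with $\rho=\sup_{x,y\in C}\left\lVert x-y\right\rVert$, so $\phi^\star=Q_\rho^\star$; and for the final subtraction I would use the $\phi$-convexity estimate $B_{\psi}\left(x_t,\widetilde{x}_t^\psi\right)\geqslant Q_\rho\left(\left\lVert x_t-\widetilde{x}_t\right\rVert\right)=\frac{1}{2}\left\lVert x_t-\widetilde{x}_t\right\rVert^2$, the last equality holding because $x_t,\widetilde{x}_t\in C$ forces $\left\lVert x_t-\widetilde{x}_t\right\rVert\leqslant\rho$. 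Collecting these gives the first display.

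For the auxiliary version I would apply \cref{auxiliary}. The auxiliary run $\left(y_t,\widetilde{y}_t\right)=\textup{OLP}\left(\widehat{y}_t^*,x_{t-1}^*;1,\theta_t\right)$ shares the \emph{same} past subgradients $x_i^*$ as the primary run, so its cumulative variable satisfies $\widecheck{y}_t^\psi=a^\psi-\sum_{i=1}^{t-1}\theta_i x_i^*=\widecheck{x}_t^\psi$; hence the two reference-point terms are untouched by the substitution $x_t,\widecheck{x}_t^\psi,\widetilde{x}_t^\psi\mapsto y_t,\widecheck{y}_t^\psi,\widetilde{y}_t^\psi$ prescribed by \cref{auxiliary}, while the penalty $\phi^\star\left(\theta_t\left\lVert x_t^*-\widehat{x}_t^*\right\rVert\right)$ is replaced by $\mathit{\Phi}_{\theta_t}\left(x_t^*,\widehat{x}_t^*\right)$ and the final subtraction becomes $-\sum_t\frac{1}{2\theta_t}\left\lVert y_t-\widetilde{y}_t\right\rVert^2$. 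Specializing the closed form of $\mathit{\Phi}$ in \cref{Phi-term} to $\phi=Q_\rho$ with the stated $\widehat{y}_t^*=\lambda\widehat{x}_t^*+\left(1-\lambda\right)x_t^*$ and $\lambda=\min\{\left\lVert x_t^*\right\rVert/\left\lVert x_t^*-\widehat{x}_t^*\right\rVert,\,1\}$ yields the second display.

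I expect the main obstacle to lie in the squared-norm bookkeeping rather than in any deep estimate: one must justify the single-valuedness and projection identity $\partial\psi^\star=P_C$, confirm that the subgradient selections $a^\psi=a$ and $z_t\in\partial\psi\left(z_t\right)$ are consistent with the $\gra\partial\psi$ membership conditions required by \cref{S-I} and \cref{static-reduce}, and confirm that $\widetilde{x}_t^\psi=\widecheck{x}_t^\psi$ is an admissible choice of subgradient so that the $X_{t+1}$-bracket vanishes identically. Once these selections are fixed coherently, every term reduces to an elementary Hilbert-space norm computation.
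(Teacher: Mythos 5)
Your overall architecture (specialize \cref{S-I} with $\eta_t\equiv 1$, reduce the reference-point term via the second half of \cref{static-reduce}, instantiate $\psi$ as the squared norm through \cref{squared-norm}, and invoke \cref{auxiliary} for the second display) is indeed the paper's route, and most of your bookkeeping --- the selections $a^\psi=a$ and $z_t\in\partial\psi\left(z_t\right)$, the identity $B_{\psi}\left(z_1,a^\psi\right)=\frac{1}{2}\left\lVert z_1-a\right\rVert^2$, the identification of the drift sum, and $\phi^\star=Q_\rho^\star$ --- is correct. But there is a genuine gap at the step you call ``the key simplification.'' You dispose of the bracket $B_{\psi}\left(X_{t+1},\widetilde{x}_t^\psi\right)-B_{\psi}\left(X_{t+1},\widecheck{x}_t^\psi\right)$ by choosing the subgradient selection $\widetilde{x}_t^\psi=\widecheck{x}_t^\psi$. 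That choice is admissible \emph{within the $\si$ family}, but it changes the algorithm being analyzed: with $\widetilde{x}_t^\psi=\widecheck{x}_t^\psi$ the update becomes $x_t=P_C\left(a-\sum_{i=1}^{t-1}\theta_i x_i^*-\theta_t\widehat{x}_t^*\right)$, which is the strategy $S\big|_{\eta_t\equiv 1}$, not \textup{OLP}. \textup{OLP}'s second step is $x_t=P_C\left(\widetilde{x}_t-\theta_t\widehat{x}_t^*\right)$ with $\widetilde{x}_t=P_C\left(a-\sum_{i=1}^{t-1}\theta_i x_i^*\right)$, which in the $\si$ framework corresponds to the \emph{identity} selection $\widetilde{x}_t^\psi=\widetilde{x}_t$; since $\widecheck{x}_t^\psi$ generally lies outside $C$ while $\widetilde{x}_t\in C$, the two selections produce genuinely different iterates (take $C=\left[0,1\right]\subset\mathbb{R}$, $a=0$, $\sum_{i=1}^{t-1}\theta_i x_i^*=2$, $\theta_t\widehat{x}_t^*=-1$: \textup{OLP} gives $x_t=1$, your update gives $x_t=0$). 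So your argument proves the squared-norm instantiation of \cref{S}, not the stated bound for \textup{OLP}. The paper itself flags exactly this dichotomy after \cref{S-I-static}: setting $\widetilde{x}_t^\psi=\widecheck{x}_t^\psi$ ``forces $\si$ back to $S$,'' whereas the squared norm on Hilbert space is the \emph{other} way to eliminate the bracket.

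The repair is the paper's actual argument: keep the \textup{OLP} selection $\widetilde{x}_t^\psi=\widetilde{x}_t=P_C\left(\widecheck{x}_t^\psi\right)$ and show the bracket is $\leqslant 0$ (not $=0$) using the variational characterization of the projection (\cref{lm:projection}). Concretely, since $\widetilde{x}_t\in\partial\psi^\star\left(\widetilde{x}_t^\psi\right)$, the subgradient inequality for $\psi^\star$ gives
\begin{equation*}
B_{\psi}\left(X_{t+1},\widetilde{x}_t^\psi\right)-B_{\psi}\left(X_{t+1},\widecheck{x}_t^\psi\right)
\leqslant\left\langle \widetilde{x}_t^\psi-\widecheck{x}_t^\psi,\,\widetilde{x}_t-X_{t+1}\right\rangle
=\left\langle P_C\left(\widecheck{x}_t^\psi\right)-\widecheck{x}_t^\psi,\,P_C\left(\widecheck{x}_t^\psi\right)-X_{t+1}\right\rangle\leqslant 0,
\end{equation*}
the last inequality because $X_{t+1}\in C$. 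With this one estimate in place of your vanishing claim, the rest of your term-by-term reduction (including the exact identity $B_{\psi}\left(x_t,\widetilde{x}_t^\psi\right)=\frac{1}{2}\left\lVert x_t-\widetilde{x}_t\right\rVert^2$ for the final subtraction term, and the auxiliary substitution via \cref{auxiliary}) goes through and recovers both displays.
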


\begin{corollary}[Static Regret for \textup{OLP}]
\label{OLP-static}
Under the assumption of $\eta_t\geqslant\eta_{t+1}$, $\textup{OLP}\big|_{\theta_t\equiv 1}$ enjoys the following static regret upper bound,
\begin{equation*}
\begin{aligned}
\regret\left(z,z,\cdots,z\right)
\leqslant\frac{1}{2\eta_{T+1}}\left\lVert z-a\right\rVert^2+\sum_{t=1}^{T}\frac{1}{\eta_t} Q_{\rho}^\star\left(\eta_t\left\lVert x_{t}^*-\widehat{x}_{t}^*\right\rVert\right)
-\sum_{t=1}^{T}\frac{1}{2\eta_t}\left\lVert x_{t}-\widetilde{x}_{t}\right\rVert^2,
\end{aligned}
\end{equation*}
or the corresponding version with auxiliary strategies, 
\begin{equation*}
\begin{aligned}
\regret\left(z,z,\cdots,z\right)
\leqslant\frac{1}{2\eta_{T+1}}\left\lVert z-a\right\rVert^2+\sum_{t=1}^{T}\frac{1}{\eta_t}\mathit{\Phi}_{\eta_t}\left(x_t^*,\widehat{x}_t^*\right)
-\sum_{t=1}^{T}\frac{1}{2\eta_t}\left\lVert y_{t}-\widetilde{y}_{t}\right\rVert^2,
\end{aligned}
\end{equation*}
where $\forall z\in C$, $\left(y_{t}, \widetilde{y}_{t}\right)=\textup{OLP}\left(\widehat{y}_{t}^*, x_{t-1}^*; \eta_t, 1\right)$, $\widehat{y}_t^*=\lambda\widehat{x}_t^*+\left(1-\lambda\right)x_t^*$, $\lambda=\min\left\{\frac{\left\lVert x_t^*\right\rVert}{\left\lVert x_t^*-\widehat{x}_t^*\right\rVert},\,1\right\}$, and  $\mathit{\Phi}$ follows from \cref{Phi-term}. 

Under the assumption of $\theta_{t-1}\leqslant \theta_{t}$, $\textup{OLP}\big|_{\eta_t\equiv 1}$ enjoys the following static regret upper bound,
\begin{equation*}
\begin{aligned}
\regret\left(z,z,\cdots,z\right)
\leqslant\frac{1}{2\theta_{1}}\left\lVert z-a\right\rVert^2+\sum_{t=1}^{T}\frac{1}{\theta_t} Q_{\rho}^\star\left(\theta_t\left\lVert x_{t}^*-\widehat{x}_{t}^*\right\rVert\right)
-\sum_{t=1}^{T}\frac{1}{2\theta_t}\left\lVert x_{t}-\widetilde{x}_{t}\right\rVert^2,
\end{aligned}
\end{equation*}
or the corresponding version with auxiliary strategies, 
\begin{equation*}
\begin{aligned}
\regret\left(z,z,\cdots,z\right)
\leqslant\frac{1}{2\theta_{1}}\left\lVert z-a\right\rVert^2+\sum_{t=1}^{T}\frac{1}{\theta_t}\mathit{\Phi}_{\theta_t}\left(x_t^*,\widehat{x}_t^*\right)
-\sum_{t=1}^{T}\frac{1}{2\theta_t}\left\lVert y_{t}-\widetilde{y}_{t}\right\rVert^2,
\end{aligned}
\end{equation*}
where $\forall z\in C$, $\left(y_{t}, \widetilde{y}_{t}\right)=\textup{OLP}\left(\widehat{y}_{t}^*, x_{t-1}^*; 1, \theta_t\right)$, $\widehat{y}_t^*=\lambda\widehat{x}_t^*+\left(1-\lambda\right)x_t^*$, $\lambda=\min\left\{\frac{\left\lVert x_t^*\right\rVert}{\left\lVert x_t^*-\widehat{x}_t^*\right\rVert},\,1\right\}$, and  $\mathit{\Phi}$ follows from \cref{Phi-term}. 
\end{corollary}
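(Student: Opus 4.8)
The plan is to obtain \cref{OLP-static} as a direct instantiation of \cref{S-I-static}, specializing $\psi$ to the squared norm as in \cref{squared-norm}. By that lemma $\psi=\frac12\lVert\cdot\rVert^2+\chi_C$ is $Q_\rho$-convex with $\rho=\sup_{x,y\in C}\lVert x-y\rVert$ and satisfies $\partial\psi(C)=E$, so the hypothesis of \cref{S-I-static} holds and $\si$ coincides with \textup{OLP}. It then remains to rewrite every term of the two $\si$ bounds (the $\eta_t\geqslant\eta_{t+1}$ bound for $\si\big|_{\theta_t\equiv 1}$ and the $\theta_{t-1}\leqslant\theta_t$ bound for $\si\big|_{\eta_t\equiv 1}$) in terms of squared distances and the projection $P_C$.

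The first step is to record the explicit form of the generalized Bregman divergence in this setting. Using $\psi^\star(y)=\frac12\lVert y\rVert^2-\frac12\lVert y-P_C(y)\rVert^2$, one gets $B_\psi(x,y)=\frac12\lVert x-y\rVert^2-\frac12\lVert y-P_C(y)\rVert^2$ for every $x\in C$. Taking $a^\psi=a$ (valid since $a\in C$ forces $P_C(a)=a$, hence $a\in\partial\psi(a)$) yields $B_\psi(z,a^\psi)=\frac12\lVert z-a\rVert^2$. Choosing the free subgradient as $\widetilde{x}_t^\psi=\widetilde{x}_t\in\partial\psi(\widetilde{x}_t)$ recovers the \textup{OLP} update $x_t=P_C(\widetilde{x}_t-\eta_t\theta_t\widehat{x}_t^*)$ and, since $\widetilde{x}_t\in C$, gives $B_\psi(x_t,\widetilde{x}_t^\psi)=\frac12\lVert x_t-\widetilde{x}_t\rVert^2$. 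Finally $\phi^\star=Q_\rho^\star$ converts the estimation term into $Q_\rho^\star(\cdot)$. These substitutions reproduce every term of the claimed bounds except the middle difference.

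The hard part will be discarding the residual term $B_\psi(X_{t+1},\widetilde{x}_t^\psi)-B_\psi(X_{t+1},\widecheck{x}_t^\psi)$, which is exactly the squared-norm route flagged after \cref{S-I-static}. Here I would use that $X_{t+1}\in C$ (it is the image of a projection, since $\partial\widetilde\psi^\star=P_C(a^\psi+\eta_t\,\cdot\,)$) together with $\widetilde{x}_t=P_C(\widecheck{x}_t^\psi)$. Expanding both divergences with the formula above and simplifying collapses the difference to $\langle X_{t+1}-\widetilde{x}_t,\,\widecheck{x}_t^\psi-\widetilde{x}_t\rangle$, which is nonpositive by the obtuse-angle (variational-inequality) characterization of the projection onto the closed convex set $C$. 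Dropping this nonpositive quantity yields the two stated non-auxiliary bounds.

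For the auxiliary versions I would invoke \cref{auxiliary}: the estimation term $\frac{1}{\xi}\phi^\star(\xi\lVert x_t^*-\widehat{x}_t^*\rVert)$ is replaced by $\frac{1}{\xi}\mathit{\Phi}_\xi(x_t^*,\widehat{x}_t^*)$, and $x_t,\widetilde{x}_t^\psi$ in the subtraction term are replaced by the auxiliary pair $y_t,\widetilde{y}_t^\psi=\widetilde{y}_t$, so that $B_\psi(y_t,\widetilde{y}_t^\psi)=\frac12\lVert y_t-\widetilde{y}_t\rVert^2$ by the same squared-norm computation. Specializing $\phi=Q_\rho$ and $\widehat{y}_t^*=\lambda\widehat{x}_t^*+(1-\lambda)x_t^*$ with $\lambda=\min\{\lVert x_t^*\rVert/\lVert x_t^*-\widehat{x}_t^*\rVert,\,1\}$ turns $\mathit{\Phi}_\xi$ into the closed form of \cref{Phi-term}, completing both auxiliary bounds with $\xi=\eta_t$ and $\xi=\theta_t$ respectively.
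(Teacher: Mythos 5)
Your proposal is correct and follows essentially the same route as the paper: instantiate the abstract \si\ bound (Theorem~\ref{S-I} combined with Lemma~\ref{static-reduce}, i.e.\ Corollary~\ref{S-I-static}) with the squared-norm regularizer of Lemma~\ref{squared-norm}, kill the middle term $B_{\psi}\left(X_{t+1}, \widetilde{x}_t^\psi\right)-B_{\psi}\left(X_{t+1}, \widecheck{x}_t^\psi\right)$ using the obtuse-angle property of $P_C$ (Lemma~\ref{lm:projection}), and handle the auxiliary bounds via Proposition~\ref{auxiliary} with $\phi=Q_\rho$. Your only deviation is cosmetic: you collapse the middle difference to the inner product $\left\langle X_{t+1}-\widetilde{x}_{t},\,\widecheck{x}_{t}^\psi-\widetilde{x}_{t}\right\rangle$ by expanding the divergences in closed form, whereas the paper reaches the same inner product through the subgradient inequality for $\psi^\star$; both then invoke the identical projection argument.
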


\begin{corollary}[Dynamic Regret for \textup{OGP}]
\label{OGP}
Under the assumption of $\theta_{t-1}\leqslant \theta_{t}$, \textup{OGP} enjoys the following dynamic regret upper bound,  
\begin{equation*}
\begin{aligned}
\regret\left(z_1,z_2,\cdots,z_T\right)
\leqslant\ &\frac{1}{2\theta_{1}}\left\lVert z_1-a\right\rVert^2+\sum_{t=2}^{T}\frac{\rho}{\theta_t}\left\lVert z_t-z_{t-1}\right\rVert^2 \\
&+\sum_{t=1}^{T}\frac{1}{\theta_t} Q_{\rho}^\star\left(\theta_t\left\lVert x_{t}^*-\widehat{x}_{t}^*\right\rVert\right)
-\sum_{t=1}^{T}\frac{1}{2\theta_t}\left\lVert x_{t}-\widetilde{x}_{t}\right\rVert^2,\quad\forall z_t\in C,
\end{aligned}
\end{equation*}
or the corresponding version with auxiliary strategies, 
\begin{equation*}
\begin{aligned}
\regret\left(z_1,z_2,\cdots,z_T\right)
\leqslant\ &\frac{1}{2\theta_{1}}\left\lVert z_1-a\right\rVert^2+\sum_{t=2}^{T}\frac{\rho}{\theta_t}\left\lVert z_t-z_{t-1}\right\rVert^2 \\
&+\sum_{t=1}^{T}\frac{1}{\theta_t}\mathit{\Phi}_{\theta_t}\left(x_t^*,\widehat{x}_t^*\right)
-\sum_{t=1}^{T}\frac{1}{2\theta_t}\left\lVert y_{t}-\widetilde{y}_{t}\right\rVert^2,\quad\forall z_t\in C,
\end{aligned}
\end{equation*}
where $\big(y_{t}, \widetilde{y}_{t}\big)=\textup{OGP}\left(\widehat{y}_{t}^*, x_{t-1}^*; \theta_t\right)$, $\widehat{y}_t^*=\lambda\widehat{x}_t^*+\left(1-\lambda\right)x_t^*$, $\lambda=\min\left\{\frac{\left\lVert x_t^*\right\rVert}{\left\lVert x_t^*-\widehat{x}_t^*\right\rVert},\,1\right\}$, and $\mathit{\Phi}$ follows from \cref{Phi-term}. 
\end{corollary}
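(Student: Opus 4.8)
The plan is to obtain \cref{OGP} as the Hilbert-space, squared-norm instantiation of the dynamic regret bound for $\sii$ already established in \cref{S-II}, so that essentially all of the hard work is done abstractly and what remains is to make every term explicit through the projection geometry supplied by \cref{squared-norm}. First I would recall from \cref{squared-norm} that with $\psi=\frac12\left\lVert\cdot\right\rVert^2+\chi_C$ one has $\partial\psi^\star=P_C$, and since $\partial\bigl(\frac12\left\lVert\cdot\right\rVert^2\bigr)$ is the identity the generalized Bregman divergence collapses to $B_\psi\left(x,y^*\right)=\frac12\left\lVert x-y^*\right\rVert^2-\frac12\operatorname{dist}\left(y^*,C\right)^2$ for every $x\in C$. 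Using the natural subgradient choices $a^\psi=a$ and $\widetilde{x}_t^\psi=\widetilde{x}_t$ (both legitimate, since $0\in N_C$ and $\widecheck{x}_t^\psi-\widetilde{x}_t\in N_C\left(\widetilde{x}_t\right)$ by the projection property, which is exactly what identifies the update $\widetilde{x}_t=P_C\left(\widetilde{x}_{t-1}-\theta_{t-1}x_{t-1}^*\right)$ with $\sii$), the last two sums of \cref{S-II} become transparent: because $\phi=Q_\rho$ and $\widetilde{x}_t\in C$ makes the distance term vanish, they reduce to $\sum_t\frac{1}{\theta_t}Q_\rho^\star\bigl(\theta_t\left\lVert x_t^*-\widehat{x}_t^*\right\rVert\bigr)$ and the subtraction term $\sum_t\frac{1}{2\theta_t}\left\lVert x_t-\widetilde{x}_t\right\rVert^2$, matching the stated bound.

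Next I would treat the comparator sum $\sum_t\frac{1}{\theta_t}\bigl[B_\psi\left(z_t,\widetilde{x}_t^\psi\right)-B_\psi\left(z_t,\widecheck{x}_{t+1}^\psi\right)\bigr]$, using the decomposition displayed right after \cref{S-II} into a ``$\widetilde{x}_t^\psi\to\widetilde{x}_{t+1}^\psi$'' part and a ``$\widetilde{x}_{t+1}^\psi\to\widecheck{x}_{t+1}^\psi$'' part. The second part is precisely where the squared-norm choice pays off: since $\widetilde{x}_{t+1}=P_C\left(\widecheck{x}_{t+1}^\psi\right)$ and $z_t\in C$, the obtuse-angle inequality $\left\lVert z_t-\widecheck{x}_{t+1}^\psi\right\rVert^2\geqslant\left\lVert z_t-\widetilde{x}_{t+1}\right\rVert^2+\left\lVert\widetilde{x}_{t+1}-\widecheck{x}_{t+1}^\psi\right\rVert^2$ yields exactly $B_\psi\left(z_t,\widetilde{x}_{t+1}^\psi\right)-B_\psi\left(z_t,\widecheck{x}_{t+1}^\psi\right)\leqslant 0$, so this part is discarded. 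The first part, $\sum_t\frac{1}{2\theta_t}\bigl[\left\lVert z_t-\widetilde{x}_t\right\rVert^2-\left\lVert z_t-\widetilde{x}_{t+1}\right\rVert^2\bigr]$, is exactly the object governed by the second inequality of \cref{static-reduce} under the hypothesis $\theta_{t-1}\leqslant\theta_t$.

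For that final reduction I would run the Abel/telescoping argument of \cref{static-reduce}: pair $\frac{1}{\theta_t}\left\lVert z_t-\widetilde{x}_t\right\rVert^2$ with $\frac{1}{\theta_{t-1}}\left\lVert z_{t-1}-\widetilde{x}_t\right\rVert^2$, use $\frac{1}{\theta_t}\leqslant\frac{1}{\theta_{t-1}}$ to reweight safely, peel off the boundary term $\frac{1}{2\theta_1}\left\lVert z_1-a\right\rVert^2$ via $\widetilde{x}_1=a$, discard the terminal $-\frac{1}{2\theta_T}\left\lVert z_T-\widetilde{x}_{T+1}\right\rVert^2$, and expand $\left\lVert z_t-\widetilde{x}_t\right\rVert^2-\left\lVert z_{t-1}-\widetilde{x}_t\right\rVert^2=\left\langle z_t-z_{t-1},\,z_t+z_{t-1}-2\widetilde{x}_t\right\rangle$, controlling the inner product by Cauchy--Schwarz and the diameter $\rho=\sup_{x,y\in C}\left\lVert x-y\right\rVert$ to produce the comparator-drift contribution $\sum_{t=2}^T\frac{\rho}{\theta_t}\left\lVert z_t-z_{t-1}\right\rVert^2$. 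The auxiliary-strategy version then requires no new geometry: I would invoke \cref{auxiliary} to replace $\phi^\star\bigl(\theta_t\left\lVert x_t^*-\widehat{x}_t^*\right\rVert\bigr)$ by $\mathit{\Phi}_{\theta_t}\left(x_t^*,\widehat{x}_t^*\right)$ as in \cref{Phi-term}, and to replace $x_t,\widetilde{x}_t$ in the subtraction term by the auxiliary iterates $\left(y_t,\widetilde{y}_t\right)=\textup{OGP}\left(\widehat{y}_t^*,x_{t-1}^*;\theta_t\right)$ with the prescribed $\widehat{y}_t^*$ and $\lambda$. I expect the genuine obstacle to be this last telescoping with time-varying weights: one must simultaneously exploit the monotonicity $\theta_{t-1}\leqslant\theta_t$ to reweight the cross terms and translate the residual Bregman geometry into the explicit $\rho$-scaled comparator path length, while checking that every discarded cross term and the terminal negative term only loosen the inequality in the correct direction.
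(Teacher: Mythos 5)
Your route is exactly the paper's own: the paper proves \cref{OGP} ``in the same way'' as \cref{OLP}, namely by substituting the squared-norm instantiation of \cref{squared-norm} (with $\widetilde{x}_t^\psi=\widetilde{x}_t$, $\partial\psi^\star=P_C$), the decomposition displayed after \cref{S-II}, the projection inequality of \cref{lm:projection} to kill $B_{\psi}\left(z_t, \widetilde{x}_{t+1}^\psi\right)-B_{\psi}\left(z_t, \widecheck{x}_{t+1}^\psi\right)$, and \cref{static-reduce} into \cref{S-II}, with \cref{auxiliary} supplying the second bound. All of those steps in your write-up are sound.

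The gap is in the very last step, the one you yourself flagged as the genuine obstacle. Your expansion plus Cauchy--Schwarz plus the diameter bound gives, for each $t\geqslant 2$,
\begin{equation*}
\frac{1}{2}\left\lVert z_t-\widetilde{x}_t\right\rVert^2-\frac{1}{2}\left\lVert z_{t-1}-\widetilde{x}_t\right\rVert^2
=\frac{1}{2}\left\langle z_t-z_{t-1},\, z_t+z_{t-1}-2\widetilde{x}_t\right\rangle
\leqslant\rho\left\lVert z_t-z_{t-1}\right\rVert,
\end{equation*}
i.e.\ a comparator-drift contribution $\sum_{t=2}^{T}\frac{\rho}{\theta_t}\left\lVert z_t-z_{t-1}\right\rVert$ with the \emph{first} power, not the squared term $\sum_{t=2}^{T}\frac{\rho}{\theta_t}\left\lVert z_t-z_{t-1}\right\rVert^2$ that you claim to produce and that the corollary states. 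The two bounds are incomparable: whenever $\left\lVert z_t-z_{t-1}\right\rVert<1$ the squared term is strictly smaller, so it is not implied by what you actually derived, and no reweighting via $\theta_{t-1}\leqslant\theta_t$ repairs this, since the discrepancy is per-term. As written, your argument therefore establishes a path-length bound, not the stated squared-path-length bound; asserting the latter at the end is a non sequitur.

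To be fair, the same defect is in the paper itself: its proof recipe (``the same way'' as \cref{OLP}, whose drift term, like the right-hand side of \cref{static-reduce}, carries $\left\lVert z_t-z_{t-1}\right\rVert$ to the first power) likewise yields only the first-power term, so the exponent $2$ in the statement of \cref{OGP} appears to be a typo. Your actual derivation is the correct one; the honest fix is to state the conclusion with $\sum_{t=2}^{T}\frac{\rho}{\theta_t}\left\lVert z_t-z_{t-1}\right\rVert$, or to note explicitly that the squared form does not follow from this argument.
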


\begin{corollary}[Static Regret for \textup{OGP}]
\label{OGP-static}
Under the assumption of $\theta_{t-1}\leqslant \theta_{t}$, \textup{OGP} enjoys the following static regret upper bound,
\begin{equation*}
\begin{aligned}
\regret\left(z,z,\cdots,z\right)
\leqslant\frac{1}{2\theta_{1}}\left\lVert z-a\right\rVert^2+\sum_{t=1}^{T}\frac{1}{\theta_t} Q_{\rho}^\star\left(\theta_t\left\lVert x_{t}^*-\widehat{x}_{t}^*\right\rVert\right)
-\sum_{t=1}^{T}\frac{1}{2\theta_t}\left\lVert x_{t}-\widetilde{x}_{t}\right\rVert^2,
\end{aligned}
\end{equation*}
or the corresponding version with auxiliary strategies, 
\begin{equation*}
\begin{aligned}
\regret\left(z,z,\cdots,z\right)
\leqslant\frac{1}{2\theta_{1}}\left\lVert z-a\right\rVert^2+\sum_{t=1}^{T}\frac{1}{\theta_t}\mathit{\Phi}_{\theta_t}\left(x_t^*,\widehat{x}_t^*\right)
-\sum_{t=1}^{T}\frac{1}{2\theta_t}\left\lVert y_{t}-\widetilde{y}_{t}\right\rVert^2,
\end{aligned}
\end{equation*}
where $\forall z\in C$, $\left(y_{t}, \widetilde{y}_{t}\right)=\textup{OGP}\left(\widehat{y}_{t}^*, x_{t-1}^*; 1, \theta_t\right)$, $\widehat{y}_t^*=\lambda\widehat{x}_t^*+\left(1-\lambda\right)x_t^*$, $\lambda=\min\left\{\frac{\left\lVert x_t^*\right\rVert}{\left\lVert x_t^*-\widehat{x}_t^*\right\rVert},\,1\right\}$, and  $\mathit{\Phi}$ follows from \cref{Phi-term}. 
\end{corollary}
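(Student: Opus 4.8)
The plan is to read off both static bounds as the immediate specialization $z_t\equiv z$ of the dynamic regret for \textup{OGP} already proved in \cref{OGP}. The static game is just the dynamic game with a time-invariant comparator, so I would substitute $z_1=z_2=\cdots=z_T=z$ into each of the two displayed bounds of \cref{OGP} and track how the four constituent terms collapse. No new argument is required beyond this substitution, since \cref{OGP} holds for arbitrary $z_t\in C$ and in particular for a constant sequence.

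For the first bound I would note that the leading term $\frac{1}{2\theta_1}\|z_1-a\|^2$ becomes $\frac{1}{2\theta_1}\|z-a\|^2$, as required. The decisive simplification is that the $\rho$-weighted path-length (drift) term $\sum_{t=2}^{T}\frac{\rho}{\theta_t}\|z_t-z_{t-1}\|^2$ vanishes identically: under $z_t\equiv z$ every increment $z_t-z_{t-1}$ equals $z-z=0$, so the whole sum is zero and the comparator diameter $\rho$ drops out of the surviving bound entirely. The two remaining terms, $\sum_{t=1}^{T}\frac{1}{\theta_t}Q_\rho^\star(\theta_t\|x_t^*-\widehat{x}_t^*\|)$ and the subtraction term $-\sum_{t=1}^{T}\frac{1}{2\theta_t}\|x_t-\widetilde{x}_t\|^2$, depend only on the primary iterates $x_t,\widetilde{x}_t$ and on the (estimated) loss subgradients, none of which reference the comparator; they therefore carry over verbatim, yielding the first stated bound.

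The auxiliary version follows by the same substitution in the second bound of \cref{OGP}: the leading term and the vanishing of the drift term are identical, while the penalty $\sum_{t=1}^{T}\frac{1}{\theta_t}\mathit{\Phi}_{\theta_t}(x_t^*,\widehat{x}_t^*)$ and the subtraction term $-\sum_{t=1}^{T}\frac{1}{2\theta_t}\|y_t-\widetilde{y}_t\|^2$ depend only on the auxiliary iterates $(y_t,\widetilde{y}_t)=\textup{OGP}(\widehat{y}_t^*,x_{t-1}^*;\theta_t)$ and on the subgradients, hence are independent of $z$; the prescriptions for $\widehat{y}_t^*$, $\lambda$, and the explicit expansion of $\mathit{\Phi}$ through \cref{Phi-term} are inherited unchanged. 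The construction carries essentially no obstacle: the monotonicity hypothesis $\theta_{t-1}\leqslant\theta_t$ is exactly the one used in \cref{OGP} to generate the path-length term, so no new condition is introduced, and the only comparator-dependent quantity is precisely the drift term that disappears. The one point worth verifying explicitly is that this term is the sole carrier of the comparator increments, which is clear from the form of the dynamic bound.
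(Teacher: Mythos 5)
Your proposal is correct and coincides with the paper's own proof, which states that \cref{OLP-static,OGP-static} ``can be obtained by setting $z_t\equiv z$ directly'': substituting the constant comparator into both displayed bounds of \cref{OGP} kills the path-length term $\sum_{t=2}^{T}\frac{\rho}{\theta_t}\left\lVert z_t-z_{t-1}\right\rVert^2$ and leaves the remaining comparator-independent terms untouched, exactly as you argue.
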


\cref{pf:OLP} provides the proof of \cref{OLP}. 
\cref{OGP} can be proved in the same way. 
\cref{OLP-static,OGP-static} can be obtained by setting $z_t\equiv z$ directly. 

\begin{remark}
The regret bounds of the above four corollaries are better than the currently known optimal results. This is due to the fact that the last term of all upper bounds is the extra subtraction term and the more refined choice of $Q_\rho$-convexity. 
\end{remark}



\section{Online Monotone Optimization}

In this section, we argue monotonicity rather than convexity
is the natural boundary for $\si$ and $\sii$. 
By allowing the absence of loss functions, we extend the scope of application of all update rules to online monotone optimization. 
Although \cite{gemp2016online} proposed the idea of online monotone optimization, we emphasize that the concept of online monotone optimization needs to be rigorously reformulated. 
See \cite{romano1993potential} for the potential theory of monotone multivalued operators. 
Part of the relevant basis is compiled in \cref{BoMO}. 

Let $E$ be a normed vector space over $\mathbb{R}$ and let $C\neq\varnothing$ be a closed convex subset of $E$. 
The online monotone optimization problem can be formalized as follows. At round $t$, 
\begin{equation*}
\begin{aligned}
&\text{the player chooses }x_t\in C\text{ according to some algorithm, }\\ 
&\text{the adversary (environment) feeds back a monotone operator }M_t\colon\! C\rightrightarrows E^*, 
\end{aligned}
\end{equation*}
where ``$\rightrightarrows$'' emphasizes that the map is multivalued. 
We choose the following generalized dynamic regret as the performance metric. 
%
\begin{definition}[$\regretn{n}$]
\label{generalized-regret}
The instantaneous generalized dynamic regret of strategy $x_t$ relative to the reference strategy $z_t$  in round $t$ is defined as
\begin{equation*}
\regretn{n} z_t\coloneqq\inf\int_{\gamma_n\left(z_t, x_t\right)}\left\langle M_t \alpha,\,d\alpha\right\rangle,\quad\forall n\in\mathbb{N},
\end{equation*}
where the infimum operator traverses all possible decompositions of 
$\gamma_n\left(z_t, x_t\right)=\overrightarrow{z_t \alpha_1}+ \overrightarrow{\alpha_1 \alpha_2}+ \cdots + \overrightarrow{\alpha_{n} x_t}$, $\forall \alpha_i\in C$, $\overrightarrow{\alpha x}$ represents the map $\lambda\mapsto\alpha+\lambda\left(x-\alpha\right)$, $ \lambda\in \left[0, 1\right]$, and if $n\in\mathbb{N}_+$, $\exists \mu\in\mathbb{R}$ such that $\forall x, y \in C$, $\left\langle M_t x, y-x\right\rangle\geqslant\mu$. 
\end{definition}

\begin{remark}
The online convex optimization problem is manifested as alternate actions of multivalued maps $\partial\varphi_t\colon\! C\rightrightarrows E^*$ and $\partial\psi^\star\colon\! E^*\rightrightarrows C$. 
Correspondingly, the multivalued maps for online monotone optimization are $M_t\colon\! C\rightrightarrows E^*$ and $\partial\psi^\star\colon\! E^*\rightrightarrows C$. 
Even if $\partial\psi^\star$ is replaced by a monotone operator, we can always choose a conservative monotone operator such that its potential is $\psi^\star$. 
In brief, online monotone optimization is to replace $\partial\varphi_t$ in online convex optimization with a monotone operator $M_t$. 
The generalized dynamic regret is a series of performance metrics determined by $n\in\mathbb{N}$, where $n$ represents the maximum meta-algorithm nesting level that the online algorithm can accommodate, and obviously, $\regretn{n}z_t\leqslant\regretn{n-1}z_t$, where $z_t$ represents the reference strategy in round $t$. 
\end{remark}

Next, we analyze the rationality of online monotone optimization. 

Firstly, \cref{generalized-regret} is well-defined since for any fixed $n\in\mathbb{N}_+$, 
\begin{equation*}
\begin{aligned}
\int_{\gamma_n\left(z_t, x_t\right)}\left\langle M_t \alpha,\,d\alpha\right\rangle
=\left(\int_{\overrightarrow{z_t \alpha_1}}+\int_{\overrightarrow{\alpha_1 \alpha_2}} + \cdots + \int_{\overrightarrow{\alpha_{n} x_t}}\right)\left\langle M_t \alpha,\,d\alpha\right\rangle
\geqslant\left(n+1\right)\mu.
\end{aligned}
\end{equation*}

Secondly, the definition of generalized dynamic regret extends the original definition of dynamic regret. 

On one hand, if the feedback is a loss function (we treat the restricted loss function $\varphi_t+\chi_C$ as the loss function, and for simplicity, we still denote it as $\varphi_t$), 
then according to \cref{Potential}, the potential of $M_t=\partial\varphi_t$ can be chosen as $\varphi_t$, and 
\begin{equation*}
\begin{aligned}
\varphi_t\left(x_t\right)-\varphi_t\left(z_t\right)=\int_{\gamma\left(z_t,x_t\right)}\left\langle\partial\varphi_t\left(\alpha\right),\,d\alpha\right\rangle
=\regretn{n} z_t,
\end{aligned}
\end{equation*}
where $\gamma\left(z_t,x_t\right)$ denotes an arbitrary finite-length oriented polyline in $C$ from $z_t$ to $x_t$, and the integral is path-independent since $\partial\varphi_t$ is conservative. 

On the other hand, if $M_t$ is a conservative monotone operator, and its potential is $\varphi_t$, then by \cref{def:Potential}, we have 
\begin{equation*}
\regretn{n}z_t=\int_{\gamma_n\left(z_t,x_t\right)}\left\langle M_t \alpha,\,d\alpha\right\rangle=\varphi_t\left(x_t\right)-\varphi_t\left(z_t\right),
\end{equation*}
and $\varphi_t$ is convex and lower semicontinuous according to \cref{Conservative-convex}. 
Typically in this case the superscript $n$ can be omitted. 

\begin{remark}
If $M_t$ is a non-conservative monotone operator, the integral is path-dependent, and its potential does not exist, that is, the loss function cannot be defined. 
Online monotone optimization allows situations where the loss function cannot be defined. 
This point of view is different from \cite{gemp2016online, gemp2017online, gemp2019optimization}, who proposed the idea of online monotone optimization, but did not to abandon the loss function. 
In essence, the online game emphasizes the interaction between the two parties and weakens the restrictions on the feedback of the adversary. 
\end{remark}

\begin{remark}
For online convex optimization problem, we can regard the loss function as the potential and the regret as the potential difference. 
This perspective is guaranteed by strict mathematics. 
\end{remark}

Thirdly, surrogate linear losses are compatible with the settings of online monotone optimization. 

For online convex optimization problem, it suffices to study the corresponding online surrogate linear optimization problem. 
Note that  
\begin{equation}
\varphi_t\left(x_t\right)-\varphi_t\left(z_t\right)
\leqslant\left\langle x_t^*,x_t-z_t\right\rangle, \quad x_t^*\in\partial\varphi_t\left(x_t\right), \label{instantaneous}
\end{equation}
where $\partial\varphi_t$ plays the bridging role in transforming the online convex optimization problem into the corresponding online surrogate linear optimization problem. 
Since $\partial\varphi_t$ is monotone, we replace $\partial\varphi_t$ with a monotone operator $M_t\colon\! C\rightrightarrows E^*$. 
According to the definition of integral (\cref{def:Integrals}), $\forall n\in\mathbb{N}$, we have 
\begin{equation*}
\regretn{n}z_t\leqslant\int_{\overrightarrow{z_t x_t}}\left\langle M_t \alpha,\,d\alpha\right\rangle\leqslant\left\langle x_t^*, x_t-z_t\right\rangle,\quad x_t^*\in M_t x_t,
\end{equation*}
which is an extension of \cref{instantaneous}. 

Finally, the superscript $n$ represents the maximum meta-algorithm nesting level that the online algorithm can accommodate. 

In online meta-learning, the outermost meta-algorithm maintains a group of experts $\left\{e_i\right\}_{i\in I}$, and tracks the best one through the combination of expert advice using weight $w_t$. 
Note that $\forall n\in\mathbb{N}_+$, the instantaneous generalized dynamic regret can be decomposed as the following recursive inequality, 
\begin{equation*}
\begin{aligned}
\regretn{n}z_t
\leqslant \int_{\overrightarrow{x_t\left(j\right)\,\overline{x}_t}}\left\langle M_t \alpha,\,d\alpha\right\rangle + \inf\int_{\gamma_{n-1}\left(z_t, x_t\left(j\right)\right)}\left\langle M_t \alpha,\,d\alpha\right\rangle
\leqslant\underbrace{\left\langle \ell_t, w_t-1_j\right\rangle}_{\text{meta-regret}}+\underbrace{\regretn{n-1}z_t}_{e_j\text{'s regret}}, 
\end{aligned}
\end{equation*}
where $1_j$ is the one-hot vector corresponding to the expert $e_j$, $\ell_t\in\left\langle M_t\overline{x}_t, \boldsymbol{x}_t\right\rangle$, $\overline{x}_t=\left\langle w_t, \boldsymbol{x}_t\right\rangle$, $\boldsymbol{x}_t=\left\{x_t\left(i\right)\right\}_{i\in I}$, $x_t\left(i\right)$ represents the suggestion of the expert $e_i$. 

If $n-1>0$, then expert $e_j$ can still be a meta-algorithm, which causes the inequality to continue downward recursion. 

For example, $\forall n\in\mathbb{N}_+$, $\regretn{n}$ can accommodate one meta-algorithm nesting, that is, 
\begin{equation}
\label{regret1}
\begin{aligned}
\regretn{n}z_t
\leqslant\left(\int_{\overrightarrow{x_t\left(j\right)\,\overline{x}_t}} + \int_{\overrightarrow{z_t\, x_t\left(j\right)}}\right)\left\langle M_t \alpha,\,d\alpha\right\rangle 
\leqslant\left\langle \ell_t, w_t-1_j\right\rangle + \left\langle x_t\left(j\right)^*,x_t\left(j\right)-z_t\right\rangle, 
\end{aligned}
\end{equation}
where $x_t\left(j\right)^*\in M_t x_t\left(j\right)$. However, $\regretn{0} z_t$ has no corresponding decomposition, which implies that $\regretn{0}$ cannot accommodate any meta-algorithms. 

Corresponding to \cref{regret1}, in online convex optimization with meta-algorithm, the instantaneous dynamic regret is usually decomposed as the following form \citep{zhang2018adaptive}, 
\begin{equation*}
\begin{aligned}
\varphi_t\left(\overline{x}_t\right)-\varphi_t\left(z_t\right)
&=\varphi_t\left(\left\langle w_t,\boldsymbol{x}_t\right\rangle\right)-\varphi_t\left(\left\langle 1_j,\boldsymbol{x}_t\right\rangle\right)+\varphi_t\left(x_t\left(j\right)\right)-\varphi_t\left(z_t\right) \\
&\leqslant\left\langle \ell_t, w_t-1_j\right\rangle+\left\langle x_t\left(j\right)^*,x_t\left(j\right)-z_t\right\rangle, 
\end{aligned}
\end{equation*}
where $\ell_t\in\left\langle\partial\varphi_t\left(\overline{x}_t\right), \boldsymbol{x}_t\right\rangle$, and $x_t\left(j\right)^*\in\partial\varphi_t\left(x_t\left(j\right)\right)$. 

\section{Conclusions and Future Work}

In this paper, we present a unified analysis method for tighter dynamic regret upper bounds of $\si$ and $\sii$ in normed vector space, and extend online convex optimization to online monotone optimization, which expand the application scope of $\si$ and $\sii$. 
Our analysis is systematic and mathematically rigorous.

This paper only focuses on the online optimization problem of the learner suffering from the hitting cost. 
We leave the smoothing online optimization problem (in which the learner suffers both a hitting cost and a switching cost) to future research. 
Monotonic optimization is a natural extension from convex optimization to non-convex optimization. 
Exploring other non-convex optimization frameworks that are different from online monotone optimization is left to future work. 
We also hope that the conclusions of this article encourage the research of adaptive algorithms with tighter dynamic regret upper bounds. 

\clearpage
\bibliography{reference}

\begin{thebibliography}{36}
\providecommand{\natexlab}[1]{#1}
\providecommand{\url}[1]{\texttt{#1}}
\expandafter\ifx\csname urlstyle\endcsname\relax
  \providecommand{\doi}[1]{doi: #1}\else
  \providecommand{\doi}{doi: \begingroup \urlstyle{rm}\Url}\fi

\bibitem[Abernethy et~al.(2008)Abernethy, Hazan, and
  Rakhlin]{abernethy2008competing}
Jacob Abernethy, {Elad E.} Hazan, and Alexander Rakhlin.
\newblock Competing in the dark: An efficient algorithm for bandit linear
  optimization.
\newblock In \emph{21st Annual Conference on Learning Theory, COLT 2008}, pages
  263--273, December 2008.

\bibitem[Beck and Teboulle(2003)]{beck2003mirror}
Amir Beck and Marc Teboulle.
\newblock Mirror descent and nonlinear projected subgradient methods for convex
  optimization.
\newblock \emph{Operations Research Letters}, 31\penalty0 (3):\penalty0
  167--175, 2003.
\newblock ISSN 0167-6377.
\newblock \doi{10.1016/S0167-6377(02)00231-6}.
\newblock URL
  \url{https://www.sciencedirect.com/science/article/pii/S0167637702002316}.

\bibitem[Brezis(2011)]{brezis2010functional}
Haïm Brezis.
\newblock \emph{Functional Analysis, Sobolev Spaces and Partial Differential
  Equations}.
\newblock Universitext. Springer-Verlag New York, 2011.
\newblock ISBN 9780387709130.
\newblock \doi{10.1007/978-0-387-70914-7}.

\bibitem[{Campolongo} and {Orabona}(2021)]{campolongo2021closer}
Nicol{\`o} {Campolongo} and Francesco {Orabona}.
\newblock {A closer look at temporal variability in dynamic online learning}.
\newblock \emph{arXiv e-prints}, art. arXiv:2102.07666, February 2021.

\bibitem[Cesa-Bianchi and Lugosi(2006)]{bianchi2006prediction}
Nicolo Cesa-Bianchi and Gabor Lugosi.
\newblock \emph{Prediction, Learning, and Games}.
\newblock Cambridge University Press, 2006.
\newblock \doi{10.1017/CBO9780511546921}.

\bibitem[Chen and Teboulle(1993)]{chen1993convergence}
Gong Chen and Marc Teboulle.
\newblock Convergence analysis of a proximal-like minimization algorithm using
  bregman functions.
\newblock \emph{SIAM Journal on Optimization}, 3\penalty0 (3):\penalty0
  538--543, 1993.
\newblock \doi{10.1137/0803026}.

\bibitem[Chiang et~al.(2012)Chiang, Yang, Lee, Mahdavi, Lu, Jin, and
  Zhu]{chiang2012online}
Chao-Kai Chiang, Tianbao Yang, Chia-Jung Lee, Mehrdad Mahdavi, Chi-Jen Lu, Rong
  Jin, and Shenghuo Zhu.
\newblock Online optimization with gradual variations.
\newblock In Shie Mannor, Nathan Srebro, and Robert~C. Williamson, editors,
  \emph{Proceedings of the 25th Annual Conference on Learning Theory},
  volume~23 of \emph{Proceedings of Machine Learning Research}, pages
  6.1--6.20, Edinburgh, Scotland, 25--27 Jun 2012. JMLR Workshop and Conference
  Proceedings.
\newblock URL \url{https://proceedings.mlr.press/v23/chiang12.html}.

\bibitem[Chidume(2009)]{chidume2009geometric}
Charles Chidume.
\newblock \emph{Geometric Properties of Banach Spaces and Nonlinear
  Iterations}.
\newblock Lecture Notes in Mathematics. Springer-Verlag London, 2009.
\newblock ISBN 978-1-84882-189-7.
\newblock \doi{10.1007/978-1-84882-190-3}.

\bibitem[Flaspohler et~al.(2021)Flaspohler, Orabona, Cohen, Mouatadid, Oprescu,
  Orenstein, and Mackey]{flaspohler2021online}
Genevieve~E Flaspohler, Francesco Orabona, Judah Cohen, Soukayna Mouatadid,
  Miruna Oprescu, Paulo Orenstein, and Lester Mackey.
\newblock Online learning with optimism and delay.
\newblock In Marina Meila and Tong Zhang, editors, \emph{Proceedings of the
  38th International Conference on Machine Learning}, volume 139 of
  \emph{Proceedings of Machine Learning Research}, pages 3363--3373. PMLR,
  18--24 Jul 2021.
\newblock URL \url{https://proceedings.mlr.press/v139/flaspohler21a.html}.

\bibitem[Gamelin and Greene(1999)]{gamelin1999introduction}
Theodore~W. Gamelin and Robert~Everist Greene.
\newblock \emph{Introduction to Topology}.
\newblock Dover books on mathematics. Dover Publications, 2 edition, 1999.
\newblock ISBN 9780486406800.

\bibitem[Gemp(2019)]{gemp2019optimization}
Ian Gemp.
\newblock \emph{From Optimization to Equilibration: Understanding an Emerging
  Paradigm in Artificial Intelligence and Machine Learning}.
\newblock PhD thesis, University of Massachusetts Amherst, 05 2019.

\bibitem[{Gemp} and {Mahadevan}(2016)]{gemp2016online}
Ian {Gemp} and Sridhar {Mahadevan}.
\newblock {Online Monotone Optimization}.
\newblock \emph{arXiv e-prints}, art. arXiv:1608.07888, August 2016.

\bibitem[Gemp and Mahadevan(2017)]{gemp2017online}
Ian Gemp and Sridhar Mahadevan.
\newblock {Online Monotone Games}.
\newblock \emph{arXiv e-prints}, art. arXiv:1710.07328, October 2017.

\bibitem[Hall and Willett(2013)]{hall2013dynamical}
Eric Hall and Rebecca Willett.
\newblock Dynamical models and tracking regret in online convex programming.
\newblock In Sanjoy Dasgupta and David McAllester, editors, \emph{Proceedings
  of the 30th International Conference on Machine Learning}, volume~28 of
  \emph{Proceedings of Machine Learning Research}, pages 579--587, Atlanta,
  Georgia, USA, 17--19 Jun 2013. PMLR.
\newblock URL \url{https://proceedings.mlr.press/v28/hall13.html}.

\bibitem[{Hazan}(2019)]{hazan2019introduction}
Elad {Hazan}.
\newblock {Introduction to Online Convex Optimization}.
\newblock \emph{arXiv e-prints}, art. arXiv:1909.05207, September 2019.

\bibitem[Jadbabaie et~al.(2015)Jadbabaie, Rakhlin, Shahrampour, and
  Sridharan]{jadbabaie2015online}
Ali Jadbabaie, Alexander Rakhlin, Shahin Shahrampour, and Karthik Sridharan.
\newblock {Online Optimization : Competing with Dynamic Comparators}.
\newblock In Guy Lebanon and S.~V.~N. Vishwanathan, editors, \emph{Proceedings
  of the Eighteenth International Conference on Artificial Intelligence and
  Statistics}, volume~38 of \emph{Proceedings of Machine Learning Research},
  pages 398--406, San Diego, California, USA, 09--12 May 2015. PMLR.
\newblock URL \url{https://proceedings.mlr.press/v38/jadbabaie15.html}.

\bibitem[Joulani et~al.(2020)Joulani, Gy{\"o}rgy, and
  Szepesv{\'a}ri]{pooria2020modular}
Pooria Joulani, Andr{\'a}s Gy{\"o}rgy, and Csaba Szepesv{\'a}ri.
\newblock A modular analysis of adaptive (non-)convex optimization: Optimism,
  composite objectives, variance reduction, and variational bounds.
\newblock \emph{Theoretical Computer Science}, 808:\penalty0 108--138, 2020.
\newblock ISSN 0304-3975.
\newblock \doi{10.1016/j.tcs.2019.11.015}.
\newblock URL
  \url{https://www.sciencedirect.com/science/article/pii/S0304397519307273}.
\newblock Special Issue on Algorithmic Learning Theory.

\bibitem[{Juditsky} et~al.(2019){Juditsky}, {Kwon}, and
  {Moulines}]{juditsky2019unifying}
Anatoli {Juditsky}, Joon {Kwon}, and {\'E}ric {Moulines}.
\newblock {Unifying mirror descent and dual averaging}.
\newblock \emph{arXiv e-prints}, art. arXiv:1910.13742, October 2019.

\bibitem[Kalhan et~al.(2021)Kalhan, Singh~Bedi, Koppel, Rajawat, Hassani,
  Gupta, and Banerjee]{kalhan2021dynamic}
Deepak~S. Kalhan, Amrit Singh~Bedi, Alec Koppel, Ketan Rajawat, Hamed Hassani,
  Abhishek~K. Gupta, and Adrish Banerjee.
\newblock Dynamic online learning via frank-wolfe algorithm.
\newblock \emph{IEEE Transactions on Signal Processing}, 69:\penalty0 932--947,
  2021.
\newblock \doi{10.1109/TSP.2021.3051871}.

\bibitem[Komornik(2016)]{komornik2016lectures}
Vilmos Komornik.
\newblock \emph{Lectures on Functional Analysis and the Lebesgue Integral}.
\newblock Universitext. Springer-Verlag London, 2016.
\newblock ISBN 978-1-4471-6810-2.
\newblock \doi{10.1007/978-1-4471-6811-9}.

\bibitem[McMahan(2017)]{mcmahan2017survey}
H.~Brendan McMahan.
\newblock A survey of algorithms and analysis for adaptive online learning.
\newblock \emph{Journal of Machine Learning Research}, 18\penalty0
  (90):\penalty0 1--50, 2017.
\newblock URL \url{http://jmlr.org/papers/v18/14-428.html}.

\bibitem[Mokhtari et~al.(2016)Mokhtari, Shahrampour, Jadbabaie, and
  Ribeiro]{mokhtari2016online}
Aryan Mokhtari, Shahin Shahrampour, Ali Jadbabaie, and Alejandro Ribeiro.
\newblock Online optimization in dynamic environments: Improved regret rates
  for strongly convex problems.
\newblock In \emph{2016 IEEE 55th Conference on Decision and Control (CDC)},
  pages 7195--7201, 2016.
\newblock ISBN 978-1-5090-1838-3.
\newblock \doi{10.1109/CDC.2016.7799379}.

\bibitem[Muscat(2014)]{muscat2014functional}
Joseph Muscat.
\newblock \emph{Functional Analysis: An Introduction to Metric Spaces, Hilbert
  Spaces, and Banach Algebras}.
\newblock Springer International Publishing, 2014.
\newblock ISBN 9783319067278.
\newblock \doi{10.1007/978-3-319-06728-5}.

\bibitem[Nemirovski{\u\i} and Yudin(1983)]{nemirovskii1983problem}
Arkadi{\u\i}~Semenovich Nemirovski{\u\i} and David~Borisovich Yudin.
\newblock \emph{Problem Complexity and Method Efficiency in Optimization}.
\newblock A Wiley-Interscience publication. Wiley, 1983.
\newblock ISBN 9780471103455.

\bibitem[Nesterov(2009)]{nesterov2009primal}
Yurii Nesterov.
\newblock Primal-dual subgradient methods for convex problems.
\newblock \emph{Mathematical Programming}, 120\penalty0 (1):\penalty0 221--259,
  Aug 2009.
\newblock ISSN 1436-4646.
\newblock \doi{10.1007/s10107-007-0149-x}.

\bibitem[Orabona(2019)]{orabona2020modern}
Francesco Orabona.
\newblock A modern introduction to online learning.
\newblock \emph{arXiv e-prints}, art. arXiv:1912.13213, December 2019.

\bibitem[Rakhlin and Sridharan(2013)]{rakhlin2013online}
Alexander Rakhlin and Karthik Sridharan.
\newblock Online learning with predictable sequences.
\newblock In Shai Shalev-Shwartz and Ingo Steinwart, editors, \emph{Proceedings
  of the 26th Annual Conference on Learning Theory}, volume~30 of
  \emph{Proceedings of Machine Learning Research}, pages 993--1019, Princeton,
  NJ, USA, 12--14 Jun 2013. PMLR.
\newblock URL \url{https://proceedings.mlr.press/v30/Rakhlin13.html}.

\bibitem[Romano et~al.(1993)Romano, Rosati, Marotti~de Sciarra, and
  Bisegna]{romano1993potential}
G~Romano, L~Rosati, F~Marotti~de Sciarra, and P~Bisegna.
\newblock A potential theory for monotone multivalued operators.
\newblock \emph{Quarterly of applied mathematics}, 51\penalty0 (4):\penalty0
  613--631, 1993.
\newblock \doi{10.1090/qam/1247431}.

\bibitem[Ross(2013)]{ross2013elementary}
Kenneth~A. Ross.
\newblock \emph{Elementary Analysis: The Theory of Calculus}.
\newblock Undergraduate Texts in Mathematics. Springer-Verlag New York, 2
  edition, 2013.
\newblock ISBN 978-1-4614-6270-5.
\newblock \doi{10.1007/978-1-4614-6271-2}.

\bibitem[{Shalev-Shwartz}(2012)]{shwartz2012online}
Shai {Shalev-Shwartz}.
\newblock Online learning and online convex optimization.
\newblock \emph{Foundations and Trends{\,\circledR} in Machine Learning},
  4\penalty0 (2):\penalty0 107--194, 2012.
\newblock ISSN 1935-8237.
\newblock \doi{10.1561/2200000018}.

\bibitem[Shalev-Shwartz and Singer(2006)]{shwartz2006Online}
Shai Shalev-Shwartz and Yoram Singer.
\newblock Online learning meets optimization in the dual.
\newblock In G{\'a}bor Lugosi and Hans~Ulrich Simon, editors, \emph{Learning
  Theory - 19th Annual Conference on Learning Theory, COLT 2006, Proceedings},
  pages 423--437, Berlin, Heidelberg, 2006. Springer Berlin Heidelberg.
\newblock ISBN 978-3-540-35296-9.
\newblock \doi{10.1007/11776420_32}.

\bibitem[Syrgkanis et~al.(2015)Syrgkanis, Agarwal, Luo, and
  Schapire]{syrgkanis2015fast}
Vasilis Syrgkanis, Alekh Agarwal, Haipeng Luo, and Robert~E Schapire.
\newblock Fast convergence of regularized learning in games.
\newblock In C.~Cortes, N.~Lawrence, D.~Lee, M.~Sugiyama, and R.~Garnett,
  editors, \emph{Advances in Neural Information Processing Systems}, volume~28.
  Curran Associates, Inc., 2015.
\newblock URL
  \url{https://proceedings.neurips.cc/paper/2015/file/7fea637fd6d02b8f0adf6f7dc36aed93-Paper.pdf}.

\bibitem[Zhang et~al.(2018)Zhang, Lu, and Zhou]{zhang2018adaptive}
Lijun Zhang, Shiyin Lu, and Zhi-Hua Zhou.
\newblock Adaptive online learning in dynamic environments.
\newblock In S.~Bengio, H.~Wallach, H.~Larochelle, K.~Grauman, N.~Cesa-Bianchi,
  and R.~Garnett, editors, \emph{Advances in Neural Information Processing
  Systems}, volume~31, pages 1323--1333. Curran Associates, Inc., 2018.
\newblock URL
  \url{https://proceedings.neurips.cc/paper/2018/file/10a5ab2db37feedfdeaab192ead4ac0e-Paper.pdf}.

\bibitem[Zhao et~al.(2020)Zhao, Zhang, Zhang, and Zhou]{zhao2020dynamic}
Peng Zhao, Yu-Jie Zhang, Lijun Zhang, and Zhi-Hua Zhou.
\newblock Dynamic regret of convex and smooth functions.
\newblock In H.~Larochelle, M.~Ranzato, R.~Hadsell, M.~F. Balcan, and H.~Lin,
  editors, \emph{Advances in Neural Information Processing Systems}, volume~33,
  pages 12510--12520. Curran Associates, Inc., 2020.
\newblock URL
  \url{https://proceedings.neurips.cc/paper/2020/file/939314105ce8701e67489642ef4d49e8-Paper.pdf}.

\bibitem[Zinkevich(2003)]{zinkevich2003online}
Martin Zinkevich.
\newblock Online convex programming and generalized infinitesimal gradient
  ascent.
\newblock In \emph{Proceedings of the Twentieth International Conference on
  Machine Learning}, ICML'03, page 928–935. AAAI Press, 2003.
\newblock ISBN 1577351894.

\bibitem[Zălinescu(2002)]{zalinescu2002convex}
Constantin Zălinescu.
\newblock \emph{Convex Analysis in General Vector Spaces}.
\newblock World Scientific Publishing Co. Inc., River Edge, NJ, 2002.
\newblock ISBN 9789812380678.
\newblock \doi{10.1142/5021}.

\end{thebibliography}

\clearpage
\appendix

\section{Basis of Convex Analysis}
\label{BoCA}

We use ``$\gra$'' to represent the graph of a map (function), and ``$\epi$'' to represent the epigraph of a map (function).

\cref{def:Convex-Function,def:Lower-Semicontinuous,def:Fenchel-Conjugate} and \cref{Fenchel-Moreau} are compiled from Section~1.4 of {\cite{brezis2010functional}}. 

\begin{definition}[Convex Function]
\label{def:Convex-Function}
A proper function $\varphi$ is convex if $\epi\varphi$ is a convex subset of $E\times\mathbb{R}$, or equivalently, $\dom\varphi\coloneqq\varphi^{-1}\mathbb{R}$ is a convex subset of $E$, and 
\begin{equation*}
\varphi\left(\lambda x+\left(1-\lambda\right)y\right)\leqslant\lambda \varphi\left(x\right)+\left(1-\lambda\right)\varphi\left(y\right),\quad\forall x,y\in\dom\varphi,\quad\forall \lambda\in\left[0,1\right].
\end{equation*} 
\end{definition}

\begin{definition}[Lower Semicontinuous]
\label{def:Lower-Semicontinuous}
A proper function $\varphi$ is lower semicontinuous if any one of the following three equivalent conditions holds: 

\textup{(a)} $\forall\lambda\in\mathbb{R}$, $\varphi^{-1}\left(\lambda, +\infty\right]$ is open. 

\textup{(b)} $\liminf_{x\rightarrow x_0}\varphi\left(x\right)\geqslant\varphi\left(x_0\right)$, $\forall x_0\in\dom\varphi$. 

\textup{(c)} $\epi\varphi$ is closed.
\end{definition}

\begin{definition}[Fenchel Conjugate]
\label{def:Fenchel-Conjugate}
Let $\varphi$ be a proper function. 
$\varphi^\star\left(x^*\right)\coloneqq\sup_E x^*-\varphi$ is the Fenchel conjugate function of $\varphi$, and $\varphi^\star$ is convex and lower semicontinuous. 
\end{definition}

\begin{lemma}
\label{subdifferential-convexity}
Let $C\neq\varnothing$ be a closed convex subset of $E$. 
If $\dom \varphi=\dom\partial \varphi=C$, then $\varphi$ is convex and lower semicontinuous. 
\end{lemma}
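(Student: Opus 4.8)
The plan is to identify $\varphi$ with the sum $\varphi^{\star\star}+\chi_C$ and then conclude from the fact that the biconjugate and the indicator of a nonempty closed convex set are each proper, convex and lower semicontinuous. First I would record the two inequalities available for free. By Fenchel--Young (immediate from \cref{def:Fenchel-Conjugate}), $\varphi^\star\left(x^*\right)+\varphi\left(x\right)\geqslant\left\langle x^*,x\right\rangle$ for all $x$ and $x^*$, so taking the supremum over $x^*$ gives $\varphi^{\star\star}\leqslant\varphi$ everywhere; moreover $\varphi^{\star\star}=\left(\varphi^\star\right)^\star$ is convex and lower semicontinuous, again by \cref{def:Fenchel-Conjugate}.

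Next I would use the hypothesis $\dom\partial\varphi=C$ to upgrade this to equality on $C$. Fixing $x\in C$, there exists $x^*\in\partial\varphi\left(x\right)$, so $B_\varphi\left(x,x^*\right)=0$ by \cref{def:Subdifferential}, which by \cref{def:Bregman} reads $\varphi\left(x\right)=\left\langle x^*,x\right\rangle-\varphi^\star\left(x^*\right)$. The right-hand side is one of the terms in the supremum defining $\varphi^{\star\star}\left(x\right)$, whence $\varphi\left(x\right)\leqslant\varphi^{\star\star}\left(x\right)$; combined with $\varphi^{\star\star}\leqslant\varphi$ this yields $\varphi=\varphi^{\star\star}$ on $C$. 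As a by-product, the same identity shows $\varphi^\star\left(x^*\right)$ is finite and $\varphi^{\star\star}$ is finite on $C$, so both conjugates are proper.

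I would then exploit $\dom\varphi=C$ to write $\varphi=\varphi^{\star\star}+\chi_C$: on $C$ the indicator vanishes and $\varphi=\varphi^{\star\star}$, while off $C$ one has $\varphi=+\infty=\chi_C$ irrespective of the value of $\varphi^{\star\star}$. Since $C\neq\varnothing$ is closed and convex, $\chi_C$ is proper, convex and lower semicontinuous; $\varphi^{\star\star}$ is proper, convex and lower semicontinuous by the previous step; and their effective domains intersect (both contain $C$). Hence the sum $\varphi$ is convex and lower semicontinuous, which is exactly the claim.

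The delicate point is the behaviour off $C$: the biconjugate $\varphi^{\star\star}$ is only the closed convex envelope of $\varphi$ and may well be finite at points where $\varphi=+\infty$, so one cannot assert $\varphi=\varphi^{\star\star}$ globally. The correction $+\chi_C$, licensed precisely by $\dom\varphi=C$, is what repairs this discrepancy. The remaining properness bookkeeping --- checking that no $+\infty-\infty$ arises in the sum and that every conjugate is proper --- is routine once a subgradient at a single point of $C$ has certified the required finiteness.
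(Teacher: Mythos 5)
Your proof is correct, and it reaches the paper's conclusion by a genuinely different route, though with the same underlying architecture. The paper also writes $\varphi$ as a sum $g+\chi_C$ with $g$ convex and lower semicontinuous, but it builds $g$ by hand: it takes $f=\sup_{x\in C,\,x^*\in\partial\varphi\left(x\right)}\ell_{x,x^*}$ with $\ell_{x,x^*}\left(y\right)=\left\langle x^*,y-x\right\rangle+\varphi\left(x\right)$, notes that the subgradient inequality forces $f=\varphi$ on $C$, and verifies convexity and lower semicontinuity directly at the level of epigraphs, since $\epi f=\bigcap_{x,\,x^*}\epi\ell_{x,x^*}$ is an intersection of closed convex sets. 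You instead take the envelope to be $\varphi^{\star\star}$: Fenchel--Young gives $\varphi^{\star\star}\leqslant\varphi$ everywhere, a subgradient at each $x\in C$ gives the reverse inequality on $C$, and the topological work is outsourced to \cref{def:Fenchel-Conjugate}, which already asserts that conjugates are convex and lower semicontinuous. The two envelopes are close relatives: for $x^*\in\partial\varphi\left(x\right)$ one has $\ell_{x,x^*}=\left\langle x^*,\cdot\,\right\rangle-\varphi^\star\left(x^*\right)$, so the paper's $f$ is the same supremum of affine minorants, restricted to dual points realized as subgradients, and $f\leqslant\varphi^{\star\star}\leqslant\varphi$ with equality on $C$. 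What your route buys is brevity --- no epigraph manipulation --- at the cost of the properness bookkeeping you rightly flag: $\varphi^\star$ must be proper both so that \cref{def:Fenchel-Conjugate} applies to $\left(\varphi^\star\right)^\star$ and so that $\varphi^{\star\star}>-\infty$ everywhere, which is what makes $\varphi^{\star\star}+\chi_C$ well defined off $C$. Strictly speaking, that certificate (a subgradient at a single point of $C$, which pins $\varphi^\star\left(x^*\right)$ to a finite value) should be produced before you first invoke convexity and lower semicontinuity of $\varphi^{\star\star}$, not afterwards as a by-product; this is a harmless reordering rather than a gap. The paper's argument avoids conjugation altogether and therefore needs no such care.
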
 

\begin{proof}
Note that 
\begin{equation*}
\begin{aligned}
\varphi\left(y\right)\geqslant\left\langle x^*,y-x\right\rangle+\varphi\left(x\right),\quad\forall x\in C,\quad\forall x^*\in\partial \varphi\left(x\right),\quad\forall y\in E,
\end{aligned}
\end{equation*}
which implies that 
\begin{equation*}
\varphi\left(y\right)\geqslant\sup_{x\in C,\,x^*\in\partial \varphi\left(x\right)}\left\langle x^*,y-x\right\rangle+\varphi\left(x\right)\geqslant \varphi\left(y\right),\quad \forall y\in C, 
\end{equation*}
therefore we obtain $\varphi=f+\chi_{C}$, where 
\begin{equation*}
\begin{aligned}
f\left(y\right)=\sup_{x\in C,\,x^*\in\partial \varphi\left(x\right)}\ell_{x,x^*}\left(y\right),\quad\text{and}\quad
\ell_{x,x^*}\left(y\right)=\left\langle x^*,y-x\right\rangle+\varphi\left(x\right).
\end{aligned}
\end{equation*}
Note that 
\begin{equation*}
\epi f=\bigcap_{x\in C,\,x^*\in\partial\varphi\left(x\right)}\epi\ell_{x,x^*},
\end{equation*}
$\epi \varphi=\left(C\times E^*\right)\cap\epi f$ is closed and convex. 
According to \cref{def:Convex-Function,def:Lower-Semicontinuous}, we conclude that $\varphi$ is convex and lower semicontinuous. 
\end{proof}

\begin{lemma}[Fenchel-Moreau]
\label{Fenchel-Moreau}
If $\varphi$ is proper, convex and lower semicontinuous, then $\varphi^{\star\star}=\varphi$, where $\varphi^{\star\star}\left(x\right)\coloneqq\sup_{E^*}x-\varphi^{\star}$. 
\end{lemma}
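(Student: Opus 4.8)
The plan is to establish the two inequalities $\varphi^{\star\star}\leqslant\varphi$ and $\varphi^{\star\star}\geqslant\varphi$ separately. The first direction is immediate from the definition of the Fenchel conjugate (\cref{def:Fenchel-Conjugate}): for every $x\in E$ and every $x^*\in E^*$ we have $\varphi^\star\left(x^*\right)\geqslant\left\langle x^*,x\right\rangle-\varphi\left(x\right)$, equivalently $\varphi\left(x\right)\geqslant\left\langle x^*,x\right\rangle-\varphi^\star\left(x^*\right)$. Taking the supremum over $x^*\in E^*$ yields $\varphi\left(x\right)\geqslant\varphi^{\star\star}\left(x\right)$. This step uses only that $\varphi$ is proper; neither convexity nor lower semicontinuity is required.

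The reverse inequality is where both hypotheses enter, via geometric Hahn--Banach separation. Since $\varphi$ is proper, convex and lower semicontinuous, $\epi\varphi$ is a nonempty closed convex subset of $E\times\mathbb{R}$ (this is precisely the content of \cref{def:Convex-Function,def:Lower-Semicontinuous}). Fix $x_0\in E$ and any height $\lambda_0<\varphi\left(x_0\right)$, so that $\left(x_0,\lambda_0\right)\notin\epi\varphi$. I would strictly separate this point from the closed convex set $\epi\varphi$ to obtain $\left(f,\alpha\right)\in E^*\times\mathbb{R}$ and $\beta\in\mathbb{R}$ with $\left\langle f,x\right\rangle+\alpha t\geqslant\beta>\left\langle f,x_0\right\rangle+\alpha\lambda_0$ for all $\left(x,t\right)\in\epi\varphi$. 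Because $\epi\varphi$ is unbounded upward in the $t$-coordinate, sending $t\to+\infty$ forces $\alpha\geqslant0$.

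When $\alpha>0$, dividing by $\alpha$ and setting $x^*=-f/\alpha$ rewrites the separation as $\varphi\left(x\right)\geqslant\left\langle x^*,x\right\rangle+\beta/\alpha$ for all $x$, whence $\varphi^\star\left(x^*\right)\leqslant-\beta/\alpha$, while the strict inequality at $\left(x_0,\lambda_0\right)$ rearranges to $\left\langle x^*,x_0\right\rangle+\beta/\alpha>\lambda_0$. Chaining these gives $\varphi^{\star\star}\left(x_0\right)\geqslant\left\langle x^*,x_0\right\rangle-\varphi^\star\left(x^*\right)>\lambda_0$, and letting $\lambda_0\uparrow\varphi\left(x_0\right)$ closes the case $\varphi\left(x_0\right)<+\infty$. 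As a byproduct, applying this at any point of $\dom\varphi$ produces a genuine affine minorant of $\varphi$ and thereby shows $\varphi^\star$ is proper, a fact I will need below.

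The main obstacle is the degenerate case $\alpha=0$, which can only occur when $x_0\notin\dom\varphi$ and corresponds to a \emph{vertical} separating hyperplane carrying no height information: the separation merely yields $g\in E^*$ and $\delta>0$ with $\left\langle g,x\right\rangle\geqslant\left\langle g,x_0\right\rangle+\delta$ for all $x\in\dom\varphi$. To force a divergence of $\varphi^{\star\star}\left(x_0\right)$, I would combine $g$ with an affine minorant $\varphi\left(\cdot\right)\geqslant\left\langle x_1^*,\cdot\right\rangle-\varphi^\star\left(x_1^*\right)$ from the previous step and test the biconjugate along the perturbed dual points $x_1^*-kg$. A short estimate gives $\varphi^\star\left(x_1^*-kg\right)\leqslant\varphi^\star\left(x_1^*\right)-k\left\langle g,x_0\right\rangle-k\delta$, so that $\varphi^{\star\star}\left(x_0\right)\geqslant\left\langle x_1^*,x_0\right\rangle-\varphi^\star\left(x_1^*\right)+k\delta\to+\infty$ as $k\to+\infty$, matching $\varphi\left(x_0\right)=+\infty$. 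Merging the two cases establishes $\varphi^{\star\star}\geqslant\varphi$, and together with the easy direction this proves $\varphi^{\star\star}=\varphi$.
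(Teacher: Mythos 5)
Your proof is correct, but note that the paper itself never proves this lemma: it is imported without proof from Section~1.4 of \cite{brezis2010functional}, so the only meaningful comparison is with that cited source. Your argument is the classical separation proof, and it checks out: the inequality $\varphi^{\star\star}\leqslant\varphi$ from Fenchel--Young; strict separation of $\left(x_0,\lambda_0\right)$ from the nonempty closed convex set $\epi\varphi$; upward unboundedness of $\epi\varphi$ forcing $\alpha\geqslant 0$; the case $\alpha>0$ producing an affine minorant and the chain $\varphi^{\star\star}\left(x_0\right)>\lambda_0$; and in the vertical case $\alpha=0$ (which indeed can only occur for $x_0\notin\dom\varphi$) the perturbation $x_1^*-kg$ of an affine minorant, with the estimate $\varphi^\star\left(x_1^*-kg\right)\leqslant\varphi^\star\left(x_1^*\right)-k\left\langle g,x_0\right\rangle-k\delta$, correctly drives $\varphi^{\star\star}\left(x_0\right)$ to $+\infty$. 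Your route differs from Brezis's own proof: he avoids the degenerate vertical hyperplane entirely by first proving the theorem under the extra assumption $\varphi\geqslant 0$ (where the separation can be salvaged with $\alpha+\varepsilon$ in place of $\alpha$), and then reducing the general case to it by subtracting an affine minorant $x\mapsto\left\langle f_0,x\right\rangle-\varphi^\star\left(f_0\right)$ with $f_0\in\dom\varphi^\star$; your direct two-case treatment dispenses with that reduction at the price of the explicit $\alpha=0$ analysis. One small wrinkle to smooth out: when $x_0\notin\dom\varphi$ the separation may also return $\alpha>0$, in which case your chain only yields $\varphi^{\star\star}\left(x_0\right)>\lambda_0$ for that particular $\lambda_0$; you should state explicitly that $\lambda_0$ is arbitrary and take $\lambda_0\to+\infty$, invoking at each stage whichever of the two cases occurs. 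This is a one-line fix, not a flaw in the argument.
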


\section{Proof of \cref{Generalized-cosine-law}}

\begin{proof}
\begin{equation*}
\begin{aligned}
 &B_{\varphi}\left(x, y^*\right)+B_{\varphi}\left(y, z^*\right)-B_{\varphi}\left(x, z^*\right) \\
=\ &\left(\varphi\left(x\right)+\varphi^\star\left(y^*\right)-\left\langle y^*, x\right\rangle\right)+\left(\varphi\left(y\right)+\varphi^\star\left(z^*\right)-\left\langle z^*, y\right\rangle\right)
-\left(\varphi\left(x\right)+\varphi^\star\left(z^*\right)-\left\langle z^*, x\right\rangle\right) \\
=\ &\varphi^\star\left(y^*\right)+\varphi\left(y\right)-\left\langle z^*, y\right\rangle+\left\langle z^*-y^*, x\right\rangle \\
=\ &\left\langle z^*-y^*, x-y\right\rangle,
\end{aligned}
\end{equation*}
where the last ``$=$'' follows from $\varphi^\star\left(y^*\right)+\varphi\left(y\right)=\left\langle y^*, y\right\rangle$ since $\left(y, y^*\right)\in\gra\partial\varphi$. 
\end{proof}

\section{Proof of \cref{Symmetry-of-subdifferential}}

\begin{proof}
Since $\varphi^{\star\star}=\varphi$ by \cref{Fenchel-Moreau}, we have
\begin{equation*}
\begin{aligned}
&x^*\in\partial\varphi\left(x\right)
\Longleftrightarrow x,x^* \text{ satisfies the equation }\varphi^\star\left(x^*\right)+\varphi\left(x\right)=\left\langle x^*,x\right\rangle
\Longleftrightarrow x\in\partial\varphi^\star\left(x^*\right).
\end{aligned}
\end{equation*}

Given $x^*$. On one hand, if $\partial\varphi^\star\left(x^*\right)\neq\varnothing$, then $\forall x\in\partial\varphi^\star\left(x^*\right)$, $\varphi^\star\left(x^*\right)=\left\langle x^*,x\right\rangle-\varphi\left(x\right)$ holds, that is, the equation $\left\langle x^*,x\right\rangle-\varphi\left(x\right)=\sup_{x\in E}\left\langle x^*,x\right\rangle-\varphi\left(x\right)$ holds, thus, $x\in\arg\max_{x\in E}\left\langle x^*,x\right\rangle-\varphi\left(x\right)$; 
If $\partial\varphi^\star\left(x^*\right)=\varnothing$, then $\forall x\in E$, $\varphi^\star\left(x^*\right)=\left\langle x^*,x\right\rangle-\varphi\left(x\right)$ does not hold, that is, the equation $\left\langle x^*,x\right\rangle-\varphi\left(x\right)=\sup_{x\in E}\left\langle x^*,x\right\rangle-\varphi\left(x\right)$ does not hold, thus $\arg\max_{x\in E}\left\langle x^*,x\right\rangle-\varphi\left(x\right)=\varnothing$. In summary, $\partial\varphi^\star\left(x^*\right)\subset\arg\max_{x\in E}\left\langle x^*,x\right\rangle-\varphi\left(x\right)$.

On the other hand, if $\arg\max_{x\in E}\left\langle x^*,x\right\rangle-\varphi\left(x\right)\neq\varnothing$, then $\forall x\in\arg\max_{x\in E}\left\langle x^*,x\right\rangle-\varphi\left(x\right)$, $\left\langle x^*,x\right\rangle-\varphi\left(x\right)=\sup_{x\in E}\left\langle x^*,x\right\rangle-\varphi\left(x\right)$ holds, that is, the equation $\varphi^\star\left(x^*\right)=\left\langle x^*,x\right\rangle-\varphi\left(x\right)$ holds, thus, $x\in\partial\varphi^\star\left(x^*\right)$;
If $\arg\max_{x\in E}\left\langle x^*,x\right\rangle-\varphi\left(x\right)=\varnothing$, then $\forall x\in E$, $\left\langle x^*,x\right\rangle-\varphi\left(x\right)=\arg\max_{x\in E}\left\langle x^*,x\right\rangle-\varphi\left(x\right)$ does not hold, that is, the equation $\varphi^\star\left(x^*\right)=\left\langle x^*,x\right\rangle-\varphi\left(x\right)$ does not hold, thus $\partial\varphi^\star\left(x^*\right)=\varnothing$. In summary, $\arg\max_{x\in E}\left\langle x^*,x\right\rangle-\varphi\left(x\right)\subset\partial\varphi^\star\left(x^*\right)$.

Therefore, we have $\partial\varphi^\star\left(x^*\right)=\arg\max_{x\in E}\left\langle x^*,x\right\rangle-\varphi\left(x\right)$. 

Similarly, we have $\partial\varphi\left(x\right)=\arg\max_{x^*\in E^*}\left\langle x^*,x\right\rangle-\varphi^\star\left(x^*\right)$. 
\end{proof}

\section{Proof of \cref{affine-convex}}

\begin{proof}
Let $\alpha=\ell+c$, where $\ell\in E^*$ and $c\in\mathbb{R}$. Let $\left(y, y^{\varphi+\alpha}\right)\in\gra\partial\left(\varphi+\alpha\right)$, we have  $y^{\varphi+\alpha}\in\partial\left(\varphi+\alpha\right)y=\partial\varphi\left(y\right)+\ell$, i.e., $\left(y, y^{\varphi+\alpha}-\ell\right)\in\gra\partial\varphi$, then $\forall x\in E$, 
\begin{equation*}
\begin{aligned}
B_{\varphi+\alpha}\left(x, y^{\varphi+\alpha}\right)
&=\left(\varphi+\alpha\right)x+\left(\varphi+\alpha\right)^\star y^{\varphi+\alpha}-\left\langle y^{\varphi+\alpha},x\right\rangle \\
&=\varphi\left(x\right)+\varphi^\star\left(y^{\varphi+\alpha}-\ell\right)-\left\langle y^{\varphi+\alpha}-\ell,x\right\rangle \\
&=B_{\varphi}\left(x, y^{\varphi+\alpha}-\ell\right)
\geqslant\phi\left(\left\lVert x-y\right\rVert\right).
\end{aligned}
\end{equation*}
\end{proof}

\section{Supplementary Proof of \cref{S-I}}
\label{pf:S-I}

The supplementary proof of \cref{S-I} relies on the following lemma. 

\begin{lemma}[Bolzano-Weierstra{\ss}.~Theorem~6.21 of  {\citealp{muscat2014functional}}]
\label{Bolzano-Weierstrass}
In a metric space, a subset $C$ is compact iff every sequence in $C$ has a subsequence that converges in $C$.
\end{lemma}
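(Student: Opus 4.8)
The plan is to prove both implications of the equivalence separately, since compactness (the finite-subcover property) and sequential compactness are a priori distinct notions that merely happen to coincide in metric spaces. I would state the theorem for a subset $C$ equipped with the subspace metric, so that ``open cover'' and ``ball'' are always understood relative to $C$.

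First I would handle the forward direction, compact $\Rightarrow$ sequentially compact. Given a sequence $(x_n)$ in $C$, I would show it admits a cluster point in $C$. Suppose not: then every $x\in C$ has an open neighborhood $U_x$ containing $x_n$ for only finitely many indices $n$. The family $\{U_x\}_{x\in C}$ covers $C$, so compactness yields a finite subcover; but a finite union of such neighborhoods captures only finitely many terms of the sequence, contradicting that infinitely many indices must lie in $C$. Hence a cluster point $x^\ast\in C$ exists, and because the ambient space is metric (first countable) I can extract a subsequence converging to $x^\ast$ by successively selecting terms inside the balls $B(x^\ast,1/k)$.

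The reverse direction, sequentially compact $\Rightarrow$ compact, is the substantive part, and I would decompose it into three steps. (i) Sequential compactness forces total boundedness: if $C$ admitted no finite $\varepsilon$-net for some $\varepsilon>0$, a greedy construction produces a sequence with pairwise distances $\geq\varepsilon$, which has no Cauchy, hence no convergent, subsequence. (ii) I would establish a Lebesgue-number property for any open cover $\mathcal{U}$: if no $\delta>0$ worked, then for each $n$ there is a set of diameter $<1/n$ contained in no member of $\mathcal{U}$; choosing a point from each and passing to a convergent subsequence with limit $x$, the point $x$ lies in some $U\in\mathcal{U}$ together with a whole ball $B(x,r)$, which swallows the tail sets and gives a contradiction. (iii) Combining these, given an arbitrary open cover I take its Lebesgue number $\delta$, cover $C$ by finitely many balls of radius $\delta/2$ using total boundedness, and observe that each such ball, having diameter $<\delta$, lies inside some member of $\mathcal{U}$; the corresponding finitely many members form the desired finite subcover.

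I expect the Lebesgue-number step to be the main obstacle, both because its contradiction argument is the least mechanical and because it is the place where sequential compactness is genuinely exploited; by comparison the total-boundedness and cluster-point extractions are routine greedy or diagonal constructions. The only subtlety beyond the classical whole-space statement is that all balls, diameters, and covers must be interpreted within $C$ under the subspace metric, but this relativization does not alter the structure of any step.
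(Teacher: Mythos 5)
Your proof is correct in substance, but there is nothing in the paper to compare it against: the paper does not prove this lemma at all. It is imported verbatim as a known textbook result (Theorem~6.21 of Muscat's \emph{Functional Analysis}) and is used only as a black box in the supplementary proof of the dynamic-regret theorem for \si, to extract a convergent subsequence from a minimizing sequence. Your argument is the classical one: compact $\Rightarrow$ sequentially compact via the cluster-point/finite-subcover contradiction plus first countability, and sequentially compact $\Rightarrow$ compact via total boundedness combined with the Lebesgue-number lemma. Both directions are sound, and your assessment that the Lebesgue-number step is where sequential compactness does the real work is accurate. One trivial slip to fix in step (iii): a ball of radius $\delta/2$ has diameter at most $\delta$, not strictly less than $\delta$, so it need not be swallowed by a cover member under your formulation of the Lebesgue property (sets of diameter $<\delta$); use balls of radius $\delta/3$, or state the Lebesgue property for balls $B\left(x,\delta\right)$ directly. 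Your remark about relativizing all balls and covers to the subspace metric on $C$ is the right bookkeeping and matches the form in which the paper states the lemma (``converges in $C$'').
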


\begin{proof}
It suffices to prove that, under sufficient conditions that $\partial\psi\left(C\right)=E^*$ or $C$ is compact, $x\in C$ can be guaranteed by the following general expression included in the update rule, 
\begin{equation}
\label{general-rule}
\begin{aligned}
x\in\arg\min_{E}\left(\psi-x^*\right) 
=\partial\psi^\star\left(x^*\right), \quad\forall x^*\in E^*.
\end{aligned}
\end{equation}
\begin{itemize}
\item[(a)] If $\partial\psi\left(C\right)=E^*$, then \cref{general-rule} is equivalent to $x^*\in\partial\psi\left(x\right)$ according to \cref{Symmetry-of-subdifferential}. 
Since $\dom\partial\psi=C$, we have that $x\in C$. \\
\item[(b)] If $C$ is compact, according to \cref{subdifferential-convexity}, $f=\psi-x^*$ is convex and lower semicontinuous on $C$, 
thich implies that $\left\{\left.f^{-1}\left(\alpha, +\infty\right]\,\right| \alpha\in\mathbb{R}\right\}$ is an open cover of $C$, 
then $\exists m\in \mathbb{N}$, such that $\left\{f^{-1}\left(\alpha_i, +\infty\right]\right\}_{i=1}^m$ covers $C$, 
that is, $f\left(C\right)>\min_i\alpha_i$. Let $\beta=\inf f\left(C\right)$, then $\exists y_n\in C$, such that 
\begin{equation*}
\beta\leqslant f\left(y_n\right)<\beta+\frac{1}{n},
\end{equation*}
according to \cref{Bolzano-Weierstrass}, there exists a subsequence $y_{n_k}\rightarrow y_0\in C$, which implies that 
\begin{equation*}
\beta\leftarrow f\left(y_{n_k}\right)\rightarrow f\left(y_0\right),
\end{equation*}
thus $f\left(y_0\right)=\beta$, which leading to $x=y_0\in C$.
\end{itemize}
\end{proof}

\section{Proof of \cref{static-reduce}}

\begin{proof}
Note that 
\begin{equation*}
\begin{aligned}
&\sum_{t=1}^{T}\frac{1}{\eta_t}\left[B_{\psi}\left(z, \widecheck{x}_t^\psi\right)-B_{\psi}\left(z, a^\psi+\frac{\eta_t}{\eta_{t+1}}\left(\widecheck{x}_{t+1}^\psi-a^\psi\right)\right)\right] \\
\leqslant\ &\frac{1}{\eta_1}B_{\psi}\left(z, \widecheck{x}_1^\psi\right)
+\sum_{t=1}^{T}\left[\frac{1}{\eta_{t+1}}B_{\psi}\left(z, \widecheck{x}_{t+1}^\psi\right)-\frac{1}{\eta_{t}}B_{\psi}\left(z,\, a^\psi+\frac{\eta_t}{\eta_{t+1}}\left(\widecheck{x}_{t+1}^\psi-a^\psi\right)\right)\right] \\
\leqslant\ &\frac{1}{\eta_1}B_{\psi}\left(z, \widecheck{x}_1^\psi\right)+\sum_{t=1}^{T}\left(\frac{1}{\eta_{t+1}}-\frac{1}{\eta_{t}}\right)B_{\psi}\left(z, a^\psi\right)
=\frac{1}{\eta_{T+1}}B_{\psi}\left(z, a^\psi\right),
\end{aligned}
\end{equation*}
where the second ``$\leqslant$'' follows from the convexity of $B_{\psi}\left(z,\cdot\right)$. Let $\mu_t=\frac{\eta_{t+1}}{\eta_t}\leqslant 1$, 
\begin{equation*}
\begin{aligned}
\frac{1}{\eta_{t+1}}B_{\psi}\left(z, \widecheck{x}_{t+1}^\psi\right)-\frac{1}{\eta_{t}}B_{\psi}\left(z,\, a^\psi+\frac{\eta_t}{\eta_{t+1}}\left(\widecheck{x}_{t+1}^\psi-a^\psi\right)\right)
\leqslant\left(\frac{1}{\eta_{t+1}}-\frac{1}{\eta_{t}}\right)B_{\psi}\left(z, a^\psi\right)
\end{aligned}
\end{equation*}
is equivalent to 
\begin{equation*}
\begin{aligned}
B_{\psi}\left(z, \widecheck{x}_{t+1}^\psi\right)\leqslant
\mu_t B_{\psi}\left(z,\, a^\psi+\frac{1}{\mu_t}\left(\widecheck{x}_{t+1}^\psi-a^\psi\right)\right)
+\left(1-\mu_t\right)B_{\psi}\left(z, a^\psi\right).
\end{aligned}
\end{equation*}

For the second half of \cref{static-reduce}, 
\begin{equation*}
\begin{aligned}
&\sum_{t=1}^{T}\frac{1}{\theta_t}\left[B_{\psi}\left(z_t, \widecheck{x}_{t}^\psi\right)-B_{\psi}\left(z_t, \widecheck{x}_{t+1}^\psi\right)\right] \\
\leqslant\ &\frac{1}{\theta_1}B_{\psi}\left(z_1, \widecheck{x}_{1}^\psi\right)-\frac{1}{\theta_T}B_{\psi}\left(z_T, \widecheck{x}_{T+1}^\psi\right)
+\sum_{t=2}^{T}\frac{1}{\theta_t}\left[B_{\psi}\left(z_t, \widecheck{x}_{t}^\psi\right)-B_{\psi}\left(z_{t-1}, \widecheck{x}_{t}^\psi\right)\right] \\
\leqslant\ &
\frac{1}{\theta_{1}}B_{\psi}\left(z, a^\psi\right)+\sum_{t=2}^{T}\frac{1}{\theta_t}\left\langle \partial\psi\left(z_t\right)-\widecheck{x}_{t}^\psi, z_t-z_{t-1}\right\rangle,
\end{aligned}
\end{equation*}
where the last ``$\leqslant$'' follows from the convexity of $\psi$. 
\end{proof}

\section{Proof of \cref{auxiliary}}

This is an extension of Appendix~B of \cite{flaspohler2021online}. 

\begin{proof}
The instantaneous dynamic regret can be decomposed as the following form, 
\begin{equation*}
\begin{aligned}
\varphi_t\left(x_t\right)-\varphi_t\left(z_t\right)
\leqslant\left\langle x_t^*,x_t-z_t\right\rangle
=\underbrace{\left\langle x_t^*,x_t-y_t\right\rangle}_{\text{drift}}+\underbrace{\left\langle x_t^*,y_t-z_t\right\rangle}_{\text{auxiliary}}, \quad x_t^*\in\partial\varphi_t\left(x_t\right).
\end{aligned}
\end{equation*}
The regret bound for the auxiliary term is simply replacing $x_{t}$, $\widecheck{x}_{t}^\psi$ and $\widetilde{x}_{t}^\psi$ with $y_{t}$, $\widecheck{y}_{t}^\psi$ and $\widetilde{y}_{t}^\psi$ respectively. 
For the drift term, note that $f_t\left(y\right)\coloneqq\left\langle \eta_t\sum_{i=1}^{t-1}\theta_i x_i^* ,\, y\right\rangle+B_{\psi}\left(y, x_1^\psi\right)$ is $\phi$-convex according to \cref{affine-convex}, then 
\begin{equation*}
\begin{aligned}
\left(f_t\left(x_t\right)+\left\langle \eta_t\theta_t \widehat{y}_t^* , x_t\right\rangle\right)-\left(f_t\left(y_t\right)+\left\langle \eta_t\theta_t \widehat{y}_t^* , y_t\right\rangle\right)
&\geqslant\phi\left(\left\lVert x_t-y_t\right\rVert\right), \\
\left(f_t\left(y_t\right)+\left\langle \eta_t\theta_t \widehat{x}_t^* , y_t\right\rangle\right)-\left(f_t\left(x_t\right)+\left\langle \eta_t\theta_t \widehat{x}_t^* , x_t\right\rangle\right)
&\geqslant\phi\left(\left\lVert x_t-y_t\right\rVert\right).
\end{aligned}
\end{equation*}
Adding the above two inequalities and utilizing the Fenchel-Young inequality, we have 
\begin{equation*}
\phi\left(\left\lVert x_t-y_t\right\rVert\right)
\leqslant\phi^\star\left(\eta_t\theta_t\left\lVert \widehat{x}_t^*-\widehat{y}_t^*\right\rVert\right). 
\end{equation*}
Then $\forall\gamma>0$, 
\begin{equation*}
\begin{aligned}
\left\langle x_t^*,x_t-y_t\right\rangle
\leqslant \frac{1}{\gamma}\phi^\star\left(\gamma\left\lVert x_t^*\right\rVert\right)+\frac{1}{\gamma}\phi\left(\left\lVert x_t-y_t\right\rVert\right)
\leqslant \frac{1}{\gamma}\phi^\star\left(\gamma\left\lVert x_t^*\right\rVert\right)+\frac{1}{\gamma}\phi^\star\left(\eta_t\theta_t\left\lVert \widehat{x}_t^*-\widehat{y}_t^*\right\rVert\right). 
\end{aligned}
\end{equation*}
Thus, the term $\phi^\star\left(\xi\left\lVert x_t^*-\widehat{x}_t^*\right\rVert\right)$ in regret upper bounds is replaced by $\mathit{\Phi}_{\xi}\left(x_t^*,\widehat{x}_t^*\right)$, where 
\begin{equation*}
\begin{aligned}
\mathit{\Phi}_{\xi}\left(x^*,\widehat{x}^*\right)=\phi^\star\left(\xi\left\lVert x^*-\widehat{y}^*\right\rVert\right)+\inf_{\gamma>0}\left(\frac{1}{\gamma}\phi^\star\left(\gamma\xi\left\lVert x^*\right\rVert\right)+\frac{1}{\gamma}\phi^\star\left(\xi\left\lVert \widehat{x}^*-\widehat{y}^*\right\rVert\right)\right).
\end{aligned}
\end{equation*}

If $\phi=Q_\rho$, then 
\begin{equation*}
\begin{aligned}
\mathit{\Phi}_{\xi}\left(x^*,\widehat{x}^*\right)
=Q_\rho^\star\left(\xi\left\lVert x^*-\widehat{y}^*\right\rVert\right)+\inf_{\gamma>0}\left(\frac{1}{\gamma}Q_\rho^\star\left(\gamma\xi\left\lVert x^*\right\rVert\right)+\frac{1}{\gamma}Q_\rho^\star\left(\xi\left\lVert \widehat{x}^*-\widehat{y}^*\right\rVert\right)\right).
\end{aligned}
\end{equation*}
Let $f\left(\gamma\right)=\frac{1}{\gamma}Q_\rho^\star\left(\gamma\xi\left\lVert x^*\right\rVert\right)+\frac{1}{\gamma}Q_\rho^\star\left(\xi\left\lVert \widehat{x}^*-\widehat{y}^*\right\rVert\right)$. 
If $\xi\left\lVert \widehat{x}^*-\widehat{y}^*\right\rVert>\rho$, then 
\begin{equation*}
\begin{aligned}
\inf_{\gamma>0}f\left(\gamma\right)=\min\left\{\inf_{\gamma\xi\left\lVert x^*\right\rVert>\rho}f\left(\gamma\right),\,\, \inf_{\gamma>0,\,\gamma\xi\left\lVert x^*\right\rVert\leqslant\rho}f\left(\gamma\right)\right\}=\rho\xi\left\lVert x^*\right\rVert
\end{aligned}
\end{equation*}
since 
\begin{equation*}
\begin{aligned}
\inf_{\gamma\xi\left\lVert x^*\right\rVert>\rho}f\left(\gamma\right)
&=\rho\xi\left\lVert x^*\right\rVert+\inf_{\gamma\xi\left\lVert x^*\right\rVert>\rho}\frac{\rho}{\gamma}\left(\xi\left\lVert \widehat{x}^*-\widehat{y}^*\right\rVert-\rho\right)
=\rho\xi\left\lVert x^*\right\rVert, \\
\inf_{\gamma>0,\,\gamma\xi\left\lVert x^*\right\rVert\leqslant\rho}f\left(\gamma\right)
&=\inf_{\gamma>0,\,\gamma\xi\left\lVert x^*\right\rVert\leqslant\rho}\left(\frac{\gamma}{2}\left(\xi\left\lVert x^*\right\rVert\right)^2+\frac{\rho}{2\gamma}\left(2\xi\left\lVert \widehat{x}^*-\widehat{y}^*\right\rVert-\rho\right)\right) \\
&\geqslant\inf_{\gamma>0,\,\gamma\xi\left\lVert x^*\right\rVert\leqslant\rho}\left(\frac{\gamma}{2}\left(\xi\left\lVert x^*\right\rVert\right)^2+\frac{\rho^2}{2\gamma}\right)
\geqslant\rho\xi\left\lVert x^*\right\rVert. 
\end{aligned}
\end{equation*}
If $\xi\left\lVert \widehat{x}^*-\widehat{y}^*\right\rVert\leqslant\rho$, then 
\begin{equation*}
\begin{aligned}
\inf_{\gamma>0}f\left(\gamma\right)=\min\left\{ \inf_{\gamma>0,\,\gamma\xi\left\lVert x^*\right\rVert\leqslant\rho}f\left(\gamma\right),\,\,\inf_{\gamma\xi\left\lVert x^*\right\rVert>\rho}f\left(\gamma\right)\right\}=\xi^2\left\lVert x^*\right\rVert\left\lVert \widehat{x}^*-\widehat{y}^*\right\rVert
\end{aligned}
\end{equation*}
since 
\begin{equation*}
\begin{aligned}
\inf_{\gamma>0,\,\gamma\xi\left\lVert x^*\right\rVert\leqslant\rho}f\left(\gamma\right)
&=\inf_{\gamma>0,\,\gamma\xi\left\lVert x^*\right\rVert\leqslant\rho}\left(\frac{\gamma}{2}\left(\xi\left\lVert x^*\right\rVert\right)^2+\frac{1}{2\gamma}\left(\xi\left\lVert \widehat{x}^*-\widehat{y}^*\right\rVert\right)^2\right)
=\xi^2\left\lVert x^*\right\rVert\left\lVert \widehat{x}^*-\widehat{y}^*\right\rVert, \\
\inf_{\gamma\xi\left\lVert x^*\right\rVert>\rho}f\left(\gamma\right)
&=\rho\xi\left\lVert x^*\right\rVert + \inf_{\gamma\xi\left\lVert x^*\right\rVert>\rho}\frac{1}{2\gamma}\left(\xi^2\left\lVert \widehat{x}^*-\widehat{y}^*\right\rVert^2-\rho^2\right) \\
&=\frac{\rho}{2}\xi\left\lVert x^*\right\rVert + \frac{\xi\left\lVert x^*\right\rVert}{2\rho}\xi^2\left\lVert \widehat{x}^*-\widehat{y}^*\right\rVert^2
\geqslant \xi^2\left\lVert x^*\right\rVert\left\lVert \widehat{x}^*-\widehat{y}^*\right\rVert.
\end{aligned}
\end{equation*}
In summary, 
\begin{equation*}
\begin{aligned}
\inf_{\gamma>0}\left(\frac{1}{\gamma}Q_\rho^\star\left(\gamma\xi\left\lVert x^*\right\rVert\right)+\frac{1}{\gamma}Q_\rho^\star\left(\xi\left\lVert \widehat{x}^*-\widehat{y}^*\right\rVert\right)\right)
=\xi\left\lVert x^*\right\rVert\min\left\{\xi\left\lVert \widehat{x}^*-\widehat{y}^*\right\rVert, \rho\right\}.
\end{aligned}
\end{equation*}

If $\widehat{y}^*=\lambda\widehat{x}^*+\left(1-\lambda\right)x^*$, $\lambda=\min\left\{\frac{\left\lVert x^*\right\rVert}{\left\lVert x^*-\widehat{x}^*\right\rVert},\,1\right\}$, then  
\begin{equation*}
\begin{aligned}
&\mathit{\Phi}_{\xi}\left(x^*,\widehat{x}^*\right) \\
=\ &Q_\rho^\star\left(\xi\left\lVert x^*-\widehat{y}^*\right\rVert\right)+\xi\left\lVert x^*\right\rVert\min\left\{\xi\left\lVert \widehat{x}^*-\widehat{y}^*\right\rVert, \rho\right\} \\
=\ &Q_\rho^\star\big(\xi\min\left\{\left\lVert x^*-\widehat{x}^*\right\rVert, \left\lVert x^*\right\rVert\right\}\big)+\xi\left\lVert x^*\right\rVert\min\left\{\xi\left(\left\lVert x^*-\widehat{x}^*\right\rVert-\left\lVert x^*\right\rVert\right)_+, \rho\right\}\\
\leqslant\ & Q_\rho^\star\big(\xi\min\left\{\left\lVert x^*-\widehat{x}^*\right\rVert, \left\lVert x^*\right\rVert\right\}\big)
+\xi^2\left\lVert x^*\right\rVert\left(\left\lVert x^*-\widehat{x}^*\right\rVert-\left\lVert x^*\right\rVert\right)_+ \\
=\ &\left\{\begin{array}{ll}
\displaystyle\frac{1}{2}\xi^2\left\lVert x^*-\widehat{x}^*\right\rVert^2-\frac{1}{2}\xi^2\left(\left\lVert x^*-\widehat{x}^*\right\rVert-\left\lVert x^*\right\rVert\right)^2-\frac{1}{2}\left(\xi\left\lVert x^*\right\rVert-\rho\right)_+^2,&\left\lVert x^*-\widehat{x}^*\right\rVert>\left\lVert x^*\right\rVert \\[8pt]
\displaystyle\frac{1}{2}\xi^2\left\lVert x^*-\widehat{x}^*\right\rVert^2-\frac{1}{2}\left(\xi\left\lVert x^*-\widehat{x}^*\right\rVert-\rho\right)_+^2,&\left\lVert x^*-\widehat{x}^*\right\rVert\leqslant\left\lVert x^*\right\rVert
\end{array}\right. \\
=\ &\xi^2 Q_{\left\lVert x^*\right\rVert}^\star\left(\left\lVert x^*-\widehat{x}^*\right\rVert\right)-\frac{1}{2}\big(\xi\min\left\{\left\lVert x^*-\widehat{x}^*\right\rVert, \left\lVert x^*\right\rVert\right\}-\rho\big)_+^2 .
\end{aligned}
\end{equation*}
\end{proof}

\section{Proof of \cref{Negative-Entropy}}

The proof of \cref{Negative-Entropy} relies on the following lemmas. 

\begin{lemma}[Heine-Borel.~Theorem~5.5 of {\citealp{gamelin1999introduction}}]
\label{Heine-Borel}
Let $S$ be a subset of metric space $\mathbb{R}^n$, then $S$ is compact iff $S$ is closed and bounded.
\end{lemma}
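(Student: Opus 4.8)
The plan is to prove the two implications separately, leveraging the sequential characterization of compactness already recorded in \cref{Bolzano-Weierstrass}, so that the whole argument reduces to statements about convergent subsequences rather than manipulations of open covers.

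First I would dispatch the easy direction, \emph{compact $\Rightarrow$ closed and bounded}. For boundedness, the family of open balls $\left\{B\left(0, m\right)\right\}_{m\in\mathbb{N}_+}$ is an open cover of $S$; compactness yields a finite subcover, and the ball of largest radius in it contains $S$, so $S$ is bounded. For closedness I would argue sequentially: if $x_k\in S$ with $x_k\to x_0$ in $\mathbb{R}^n$, then by \cref{Bolzano-Weierstrass} the sequence $\left(x_k\right)$ admits a subsequence converging to a point of $S$; since the full sequence already converges to $x_0$, that subsequential limit must equal $x_0$, whence $x_0\in S$ and $S$ is closed.

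For the substantive direction, \emph{closed and bounded $\Rightarrow$ compact}, I would verify sequential compactness and then invoke \cref{Bolzano-Weierstrass}. Let $\left(x_k\right)$ be an arbitrary sequence in $S$. Boundedness of $S$ makes $\left(x_k\right)$ a bounded sequence in $\mathbb{R}^n$, so I would extract a convergent subsequence coordinate by coordinate: the first coordinates form a bounded real sequence admitting a convergent subsequence, then along that subsequence the second coordinates admit a further convergent subsequence, and so on through all $n$ coordinates. After $n$ extractions the resulting subsequence converges in every coordinate, hence converges in $\mathbb{R}^n$ to some $x_0$; closedness of $S$ forces $x_0\in S$. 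Thus every sequence in $S$ has a subsequence converging in $S$, and \cref{Bolzano-Weierstrass} delivers compactness.

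The main obstacle is precisely the core lemma of the reverse direction — that a bounded sequence in $\mathbb{R}^n$ has a convergent subsequence — because the Bolzano-Weierstra{\ss} statement available here is only the topological equivalence ``compact iff sequentially compact,'' not the metric content that bounded sets of $\mathbb{R}^n$ are sequentially precompact. I would therefore supply the one-dimensional nested-interval (bisection) argument and the finite coordinate-wise iteration explicitly; everything else is bookkeeping. Alternatively one could bypass sequences entirely, proving directly that a closed box in $\mathbb{R}^n$ is compact by bisecting against an arbitrary open cover and then observing that a relatively closed subset of a compact set is compact, but the sequential route is shorter given the tools already in hand.
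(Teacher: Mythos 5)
The paper does not prove this lemma at all: it is quoted verbatim as Theorem~5.5 of \citealp{gamelin1999introduction} and used only once, to conclude that the probability simplex $\bigtriangleup^{n}$ is compact in the proof of \cref{Negative-Entropy}. Your proposal is therefore strictly additional content, and it is correct. Both directions are sound: the cover-by-balls argument for boundedness, the uniqueness-of-limits argument for closedness, and the coordinate-wise extraction for the converse all go through, and you correctly route everything through \cref{Bolzano-Weierstrass} (compactness $\Leftrightarrow$ sequential compactness in a metric space), which is exactly the form of Bolzano-Weierstra{\ss} the paper makes available. You also correctly flag the one genuine gap in that toolkit --- the metric fact that a bounded sequence in $\mathbb{R}$ has a convergent subsequence is \emph{not} contained in \cref{Bolzano-Weierstrass} and must be supplied by a bisection argument --- which is the right diagnosis; without it the reverse direction would be circular. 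Two minor points worth making explicit if you write this out: the coordinate-wise limit argument implicitly uses that convergence in $\mathbb{R}^n$ is equivalent to coordinate-wise convergence, which holds for $\left\lVert\cdot\right\rVert_1$ (the norm the paper actually uses on $\mathbb{R}^{n+1}$) or any norm, since all norms on $\mathbb{R}^n$ are equivalent; and after each of the $n$ successive extractions you should pass to a single diagonal-free nested subsequence, which is unproblematic because the number of extractions is finite.
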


\begin{lemma}[Example 2.5 of {\citealp{shwartz2012online}}]
\label{Entropy-strongly-convex}
$\psi\left(w\right)=\left\langle w, \ln w\right\rangle+\chi_{\bigtriangleup^{n}}\left(w\right)$ is $1$-strongly-convex w.r.t $\left\lVert\cdot\right\rVert_1$ over the probability simplex $\bigtriangleup^{n}$.
\end{lemma}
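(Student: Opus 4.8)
The function $\psi$ takes the value $+\infty$ off $\bigtriangleup^{n}$, so the assertion concerns only its restriction to the simplex. The plan is to verify strong convexity through the second-order condition on the relative interior and then extend to the boundary by lower semicontinuity (\cref{def:Lower-Semicontinuous}). On the relative interior the smooth part $\sum_i w_i\ln w_i$ has gradient $\ln w+\mathbf{1}$ and diagonal Hessian $\nabla^2\psi\left(w\right)=\operatorname{diag}\left(1/w_1,\dots,1/w_{n+1}\right)$, while the indicator contributes only the first-order normal cone of the hyperplane $\left\{w:\sum_i w_i=1\right\}$; thus $1$-strong convexity with respect to $\left\lVert\cdot\right\rVert_1$ will follow once I establish the pointwise bound $\left\langle\nabla^2\psi\left(w\right)h,\,h\right\rangle\geqslant\left\lVert h\right\rVert_1^2$ for every interior $w$ and every $h$, and then integrate it along segments in $\bigtriangleup^{n}$.

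The heart of the argument is a single Cauchy--Schwarz estimate: $\left\lVert h\right\rVert_1^2=\bigl(\sum_i\tfrac{\lvert h_i\rvert}{\sqrt{w_i}}\sqrt{w_i}\bigr)^2\leqslant\bigl(\sum_i\tfrac{h_i^2}{w_i}\bigr)\bigl(\sum_i w_i\bigr)=\left\langle\nabla^2\psi\left(w\right)h,\,h\right\rangle$, where the final equality uses $\sum_i w_i=1$. Notably the tangent constraint $\sum_i h_i=0$ is not even needed, since the normalization $\sum_i w_i=1$ alone supplies the factor that closes the estimate. Integrating this bound through a second-order Taylor expansion with integral remainder yields $B_\psi\left(x,y^\psi\right)\geqslant\tfrac12\left\lVert x-y\right\rVert_1^2$, which is precisely the $\phi$-convexity of \cref{phi-convex} with $\phi=\tfrac12\left(\cdot\right)^2$. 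Equivalently, and more in keeping with the machinery of this paper, one can compute the generalized Bregman divergence directly: for $x,y\in\bigtriangleup^{n}$ and $y^\psi\in\partial\psi\left(y\right)$ the constant-shift freedom in $y^\psi$ cancels because $\sum_i\left(x_i-y_i\right)=0$, giving $B_\psi\left(x,y^\psi\right)=\sum_i x_i\ln\tfrac{x_i}{y_i}$, the Kullback--Leibler divergence, so that the lemma becomes exactly Pinsker's inequality $\sum_i x_i\ln\tfrac{x_i}{y_i}\geqslant\tfrac12\left\lVert x-y\right\rVert_1^2$.

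The main obstacle is not the Cauchy--Schwarz step, which is routine, but the two analytic points surrounding it. First, the second-order characterization of strong convexity must be invoked for the non-Euclidean norm $\left\lVert\cdot\right\rVert_1$, where the clean Hilbertian equivalence does not transfer verbatim, so the passage from the Hessian bound to the finite-difference inequality has to be carried out by explicit segment integration of the remainder rather than quoted. Second, the boundary of $\bigtriangleup^{n}$ must be controlled, since $1/w_i$ blows up there and $\psi$ is only lower semicontinuous. I would handle the boundary by applying the interior bound to the perturbations $\left(1-\varepsilon\right)x+\tfrac{\varepsilon}{n+1}\mathbf{1}$ and $\left(1-\varepsilon\right)y+\tfrac{\varepsilon}{n+1}\mathbf{1}$ and letting $\varepsilon\to0$, using continuity of $t\mapsto t\ln t$ at $t=0$ with the convention $0\ln0=0$ to recover the inequality in the limit.
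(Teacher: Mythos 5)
The paper does not actually prove this lemma --- it is imported by citation as Example~2.5 of \citealp{shwartz2012online}, and the standard proof there is precisely your primary route: the bound $\left\lVert h\right\rVert_1^2=\bigl(\sum_i\frac{\lvert h_i\rvert}{\sqrt{w_i}}\sqrt{w_i}\bigr)^2\leqslant\bigl(\sum_i\frac{h_i^2}{w_i}\bigr)\bigl(\sum_i w_i\bigr)$ with $\sum_i w_i=1$, integrated along segments, so your proposal is correct and matches the intended argument (and your observation that $B_\psi$ collapses to the Kullback--Leibler divergence, making the lemma Pinsker's inequality, is also right, with the constant-shift cancellation $\sum_i\left(x_i-y_i\right)=0$ handled correctly). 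The only simplification you missed is that $\partial\psi\left(y\right)=\varnothing$ whenever $y$ lies on the boundary of $\bigtriangleup^{n}$ (the one-sided derivative of $t\ln t$ diverges at $0$, so no finite subgradient exists there), hence the subgradient inequality is vacuous for boundary $y$ and your $\varepsilon$-perturbation is needed only for $x$, not for $y$.
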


\begin{lemma}[Table~2.1 of {\citealp{shwartz2012online}}]
\label{Entropy-conjugation}
$\psi\left(w\right)=\left\langle w, \ln w\right\rangle+\chi_{\bigtriangleup^{n}}\left(w\right)$ and 
$\psi^\star\left(w^*\right)=\ln\left\langle\mathbf{1}, \mathrm{e}^{w^*}\right\rangle$ are a pair of Fenchel conjugate functions, where $\mathbf{1}$ represents the all-ones vector in $\mathbb{R}^{n+1}$. 
\end{lemma}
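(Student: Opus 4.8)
The plan is to compute the Fenchel conjugate directly from \cref{def:Fenchel-Conjugate}. Unfolding the definition,
\[
\psi^\star\left(w^*\right)=\sup_{w\in E}\langle w^*,w\rangle-\psi\left(w\right)=\sup_{w\in\bigtriangleup^{n}}\left(\langle w^*,w\rangle-\langle w,\ln w\rangle\right),
\]
because the indicator $\chi_{\bigtriangleup^{n}}$ forces the supremum onto the probability simplex. Adopting the standard convention $0\ln 0=0$, the objective $g\left(w\right)\coloneqq\langle w^*,w\rangle-\langle w,\ln w\rangle$ extends continuously to all of $\bigtriangleup^{n}$; since $\bigtriangleup^{n}$ is closed and bounded in $\mathbb{R}^{n+1}$, it is compact by \cref{Heine-Borel}, so the supremum is attained.

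The key step is to identify the maximizer explicitly and collapse the objective. Writing $Z\coloneqq\langle\mathbf{1},\mathrm{e}^{w^*}\rangle$ and letting $p$ be the softmax vector with $p_i=\mathrm{e}^{w^*_i}/Z>0$, so that $w^*_i=\ln p_i+\ln Z$, I would substitute to obtain, for every $w\in\bigtriangleup^{n}$,
\[
g\left(w\right)=\sum_i w_i\left(\ln p_i+\ln Z\right)-\sum_i w_i\ln w_i=\ln Z-\sum_i w_i\ln\frac{w_i}{p_i}=\ln Z-D\left(w\,\|\,p\right),
\]
where $D\left(w\,\|\,p\right)$ is the relative entropy. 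By Gibbs' inequality $D\left(w\,\|\,p\right)\geqslant 0$, with equality iff $w=p$, whence $\sup_{w\in\bigtriangleup^{n}}g\left(w\right)=\ln Z=\ln\langle\mathbf{1},\mathrm{e}^{w^*}\rangle$, attained at the softmax $p$. This yields $\psi^\star\left(w^*\right)=\ln\langle\mathbf{1},\mathrm{e}^{w^*}\rangle$ for every $w^*\in E^*$, and since $Z>0$ is finite the conjugate is finite everywhere, i.e.\ $\dom\psi^\star=E^*$.

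The only real obstacle is the boundary bookkeeping together with Gibbs' inequality, both of which hinge on the concavity of $\ln$. Concretely, Gibbs' inequality follows from $\ln t\leqslant t-1$: summing $w_i\ln\left(p_i/w_i\right)\leqslant w_i\left(p_i/w_i-1\right)=p_i-w_i$ over $i$ gives $-D\left(w\,\|\,p\right)\leqslant\sum_i\left(p_i-w_i\right)=0$; terms with $w_i=0$ contribute nothing under $0\ln 0=0$, and since each $p_i>0$ the divergence is always finite, so no boundary term is lost. As a cross-check I would alternatively verify the maximizer by a Lagrangian argument: the stationarity condition $w^*_i-\ln w_i-1-\lambda=0$ forces $w_i\propto\mathrm{e}^{w^*_i}$, the multiplier $\lambda$ is fixed by $\langle\mathbf{1},w\rangle=1$, and concavity of $g$ (linear minus the convex negative entropy) certifies that this interior stationary point is the global maximum, recovering the same value $\ln Z$.
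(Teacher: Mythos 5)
Your proof is correct, but note that the paper does not prove \cref{Entropy-conjugation} at all: it is imported verbatim as a known result (Table~2.1 of \citealp{shwartz2012online}) and used as a black box in the proofs of \cref{Negative-Entropy} and \cref{ONES}. Your direct computation therefore supplies a derivation the paper delegates to the literature, and it is the standard one: restrict the supremum in \cref{def:Fenchel-Conjugate} to $\bigtriangleup^{n}$ via the indicator, reparametrize through the softmax vector $p$ to collapse the objective to $\ln Z - D\left(w\,\|\,p\right)$, and invoke Gibbs' inequality. The boundary bookkeeping is handled cleanly --- the convention $0\ln 0=0$ together with $p_i>0$ ensures no term of the divergence escapes to $\pm\infty$, and your elementary proof of Gibbs via $\ln t\leqslant t-1$ keeps the argument self-contained; the compactness appeal to \cref{Heine-Borel} is even dispensable, since you exhibit the maximizer $w=p$ explicitly. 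One small completion worth making explicit: the lemma asserts that $\psi$ and $\psi^\star$ form a \emph{pair} of conjugates, which also requires $\psi^{\star\star}=\psi$; this follows at no extra cost from \cref{Fenchel-Moreau}, because your observation that $g$ (hence $\psi$) is continuous on the compact simplex and $+\infty$ off the closed set $\bigtriangleup^{n}$ shows $\psi$ is proper, convex, and lower semicontinuous. With that one sentence added, your argument fully establishes the lemma, and as a bonus identifies $\partial\psi^\star\left(w^*\right)=\mathscr{N}\mathrm{e}^{w^*}$ as the unique maximizer, which is precisely the fact the paper extracts from this lemma in the proof of \cref{Negative-Entropy}.
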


\begin{proof}
Note that $\bigtriangleup^{n}$ is a bounded closed subset of $\mathbb{R}^{n+1}$ with metric $\left\lVert\cdot\right\rVert_1$. According to \cref{Heine-Borel}, $\bigtriangleup^{n}$ is a compact subset. 

\cref{Entropy-strongly-convex} is equivalent to 
\begin{equation*}
\begin{aligned}
\psi\left(w\right)-\psi\left(u\right)\geqslant\left\langle u^\psi, w-u\right\rangle+\frac{1}{2}\left\lVert w-u\right\rVert_1^2, \quad \forall u, w\in\bigtriangleup^{n},\quad\forall u^\psi\in\partial \psi\left(u\right). 
\end{aligned}
\end{equation*}
Remove the restriction on $w$ by adding $\chi_{\bigtriangleup^{n}}\left(w\right)$ on both sides of the above inequality, and note that $\left\lVert w-u\right\rVert_1\leqslant\left\lVert w\right\rVert_1+\left\lVert u\right\rVert_1=2$ for all  $w\in\bigtriangleup^{n}$, we obtain that 
\begin{equation*}
\begin{aligned}
B_{\psi}\left(w, u^\psi\right)\geqslant\frac{1}{2}\left\lVert w-u\right\rVert_1^2+\chi_{\left[-2, 2\right]}\left(\left\lVert w-u\right\rVert_1\right),\quad\forall u\in\bigtriangleup^{n}, \quad\forall u^\psi\in\partial \psi\left(u\right), 
\end{aligned}
\end{equation*}
which implies that $\psi$ is $Q_2$-convex. 

According to \cref{Entropy-conjugation}, we have that 
\begin{equation*}
\begin{aligned}
\partial\psi\left(w\right)&\ni\boldsymbol{1}+\ln w, \quad\forall  w\in\bigtriangleup^{n},\quad w>0, \\
\partial\psi^\star\left(w^*\right)&=\mathscr{N}\mathrm{e}^{w^*}.
\end{aligned}
\end{equation*}
Choose $\partial\varphi=\boldsymbol{1}+\ln$, then the strategy $S$ is instantiated as 
\begin{equation*}
w_{t}=\mathscr{N}\left(a\circ\mathrm{e}^{-\eta_t\left(\sum_{i=1}^{t-1}\theta_i w_i^*+\theta_t\widehat{w}_t^*\right)}\right). 
\end{equation*}
To complete the proof, it suffices to replace $w_i^*$ and $\widehat{w}_t^*$ with $\ell_i$ and $\widehat{\ell}_t$ respectively. 
\end{proof}

\section{Proof of \cref{ONES}}

\begin{proof}
$\forall u, w\in\bigtriangleup^{n}$, $w^\psi=\boldsymbol{1}+\ln w$ since $\partial\varphi=\boldsymbol{1}+\ln$. 
By \cref{Entropy-conjugation}, 
\begin{equation*}
\begin{aligned}
B_{\psi}\left(u, w^\psi\right)
=\left\langle u, \ln u\right\rangle+\ln\left\langle\mathbf{1}, \mathrm{e}^{w^\psi}\right\rangle-\left\langle u, w^\psi\right\rangle
=\left\langle u, \ln \frac{u}{w}\right\rangle. 
\end{aligned}
\end{equation*}
Note that $\bigtriangleup^{n}$ is compact and $\psi$ is $Q_2$-convex. 
To complete the proof, we just apply \cref{S-static}. 
\end{proof}

\section{Proof of \cref{squared-norm}}

The proof of \cref{squared-norm} relies on the following lemma. 

\begin{lemma}[Theorem~5.2 of {\citealp{brezis2010functional}}]
\label{lm:projection}
Let $C\neq\varnothing$ be a closed convex subset of Hilbert space $E$. $\forall x\in E$, $\exists ! x_0\in C$, such that $\left\lVert x-x_0\right\rVert=\inf\left\lVert x-C\right\rVert$. $x_0$ is called the projection of $x$ onto $C$ and is denoted by $x_0=P_C\left(x\right)$. 
Moreover, $\left\langle x-x_0, C-x_0\right\rangle\leqslant 0$. 
\end{lemma}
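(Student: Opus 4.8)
The plan is to establish the three assertions---existence, uniqueness, and the variational characterization---in sequence, with the parallelogram law doing essentially all the work. First I would set $d=\inf_{y\in C}\lVert x-y\rVert$, which is finite since $C\neq\varnothing$, and choose a minimizing sequence $(y_n)\subset C$ with $\lVert x-y_n\rVert\to d$. The heart of the existence argument is the parallelogram identity $\lVert a+b\rVert^2+\lVert a-b\rVert^2=2\lVert a\rVert^2+2\lVert b\rVert^2$, valid in any Hilbert space; applying it with $a=x-y_n$ and $b=x-y_m$ gives
\[
\lVert y_n-y_m\rVert^2=2\lVert x-y_n\rVert^2+2\lVert x-y_m\rVert^2-4\Bigl\lVert x-\tfrac{y_n+y_m}{2}\Bigr\rVert^2.
\]
Convexity of $C$ forces $\tfrac{y_n+y_m}{2}\in C$, so the subtracted term is at least $4d^2$; letting $n,m\to\infty$ drives the right-hand side to $2d^2+2d^2-4d^2=0$, so $(y_n)$ is Cauchy. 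Completeness of $E$ together with closedness of $C$ yields a limit $x_0\in C$, and continuity of the norm gives $\lVert x-x_0\rVert=d$. Uniqueness then comes for free from the same identity: if $x_0,x_1$ both attain $d$, setting $a=x-x_0$, $b=x-x_1$ gives $\lVert x_0-x_1\rVert^2\le 2d^2+2d^2-4d^2=0$, so $x_0=x_1$.

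For the variational inequality, I would fix $y\in C$ and again use convexity, noting that $x_0+t(y-x_0)\in C$ for every $t\in[0,1]$. Minimality of $x_0$ then yields
\[
\lVert x-x_0\rVert^2\le\lVert x-x_0-t(y-x_0)\rVert^2=\lVert x-x_0\rVert^2-2t\langle x-x_0,\,y-x_0\rangle+t^2\lVert y-x_0\rVert^2.
\]
Cancelling the common term, dividing by $t>0$, and letting $t\to 0^+$ leaves $\langle x-x_0,\,y-x_0\rangle\le 0$ for every $y\in C$, which is precisely the claimed inequality $\langle x-x_0,\,C-x_0\rangle\le 0$.

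The main obstacle is the existence step, since it is the only place that genuinely invokes completeness of the Hilbert space and closedness of $C$. The decisive insight is that the parallelogram law converts an \emph{almost}-minimizing sequence into a Cauchy sequence by pinning the midpoints back inside $C$; without the inner-product structure this conversion fails, and in a general normed space neither existence nor uniqueness need hold. Everything after existence---uniqueness and the first-order optimality condition---is a short and routine consequence.
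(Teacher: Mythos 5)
Your proof is correct, and it is essentially the classical argument behind the result the paper cites (Theorem~5.2 of Brezis) rather than proves: the paper invokes this lemma as known, and the standard proof in that reference proceeds exactly as you do, via the parallelogram law to make the minimizing sequence Cauchy, completeness plus closedness of $C$ for existence, the same identity for uniqueness, and the $t\to 0^+$ first-order argument for the variational inequality $\left\langle x-x_0, y-x_0\right\rangle\leqslant 0$. No gaps; nothing further is needed.
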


\begin{proof}
According to \cref{lm:projection}, $\forall x\in E$, $\exists ! x_0=P_C\left(x\right)\in C$, such that 
\begin{equation*}
\frac{1}{2}\left\lVert x-y\right\rVert^2+\chi_C\left(y\right)\geqslant\frac{1}{2}\left\lVert x-x_0\right\rVert^2,\quad \forall y\in E,
\end{equation*}
rearrange the above formula, we have 
\begin{equation*}
\frac{1}{2}\left\lVert y\right\rVert^2+\chi_C\left(y\right)-\frac{1}{2}\left\lVert x_0\right\rVert^2\geqslant\left\langle x, y-x_0\right\rangle,\quad \forall y\in E,
\end{equation*}
thus, $x\in\partial\psi\left(x_0\right)$, which shows that $\partial\psi\left(C\right)=E$. 

Moreover, we have a stronger result that $P_C=\partial\psi^\star$, or equivalently, $x_0$ is the unique image of $x$ over $\partial\psi^\star$. 
Suppose by contradiction that there is some $x_0'\neq x_0$ such that $x_0'\in\partial\psi^\star\left(x\right)$ (and obviously $x_0'\in C$), by \cref{Symmetry-of-subdifferential}, it follows that $x\in\partial\psi\left(x_0'\right)$, that is, 
\begin{equation*}
\frac{1}{2}\left\lVert y\right\rVert^2+\chi_C\left(y\right)-\frac{1}{2}\left\lVert x_0'\right\rVert^2\geqslant\left\langle x, y-x_0'\right\rangle,\quad \forall y\in E,
\end{equation*}
then $\forall y\in C$, $\left\lVert x-y\right\rVert\geqslant\left\lVert x-x_0'\right\rVert$, which implies that $x_0'=P_C\left(x\right)=x_0$, which is a contradiction. 

Now we prove that $\psi$ is $Q_\rho$-convex. 
Note that 
\begin{equation*}
\begin{aligned}
\psi^\star\left(x^*\right)
&=\sup_{x\in E}\left\langle x^*, x\right\rangle-\frac{1}{2}\left\lVert x\right\rVert^2-\chi_C\left(x\right)
=\frac{1}{2}\left\lVert x^*\right\rVert^2+\sup_{x\in C}\left\langle x^*, x\right\rangle-\frac{1}{2}\left\lVert x\right\rVert^2-\frac{1}{2}\left\lVert x^*\right\rVert^2 \\
&=\frac{1}{2}\left\lVert x^*\right\rVert^2-\frac{1}{2}\inf_{x\in C}\left\lVert x^*-x\right\rVert^2
=\frac{1}{2}\left\lVert x^*\right\rVert^2-\frac{1}{2}\left\lVert x^*-P_C\left(x^*\right)\right\rVert^2. 
\end{aligned}
\end{equation*}
$\forall z\in C$, $\forall z^\psi\in\partial\psi\left(z\right)$, we have that $z=P_C\left(z^\psi\right)$, and according to the cosine rule, 
\begin{equation*}
\begin{aligned}
\left\lVert z^\psi-y\right\rVert^2=\left\lVert z^\psi-z\right\rVert^2+\left\lVert y-z\right\rVert^2-2\left\langle z^\psi-z, y-z\right\rangle. 
\end{aligned}
\end{equation*}
Let $y\in C$, then by \cref{lm:projection}, 
\begin{equation*}
\begin{aligned}
\left\lVert z^\psi-y\right\rVert^2\geqslant\left\lVert z^\psi-z\right\rVert^2+\left\lVert y-z\right\rVert^2, 
\end{aligned}
\end{equation*}
rearrange the above inequality, and note that $\chi_C\left(y\right)\geqslant\chi_\rho\left(\left\lVert y-z\right\rVert\right)$, 
\begin{equation*}
\begin{aligned}
\frac{1}{2}\left\lVert y\right\rVert^2+\chi_C\left(y\right)-\frac{1}{2}\left\lVert z\right\rVert^2 - \left\langle z^\psi, y-z\right\rangle
&\geqslant\frac{1}{2}\left\lVert y-z\right\rVert^2+\chi_\rho\left(\left\lVert y-z\right\rVert\right),
\end{aligned}
\end{equation*}
where 
\begin{equation*}
\begin{aligned}
-\frac{1}{2}\left\lVert z\right\rVert^2+\left\langle z^\psi, z\right\rangle=-\frac{1}{2}\left\lVert z^\psi\right\rVert^2+\frac{1}{2}\left\lVert z^\psi-P_C\left(z^\psi\right)\right\rVert^2=\psi^\star\left(z^\psi\right). 
\end{aligned}
\end{equation*}
Therefore, $B_\psi\left(y, z^\psi\right)\geqslant Q_\rho\left(\left\lVert y-z\right\rVert\right)$. 

Finally, choose $\partial\varphi$ as the identity map, since $\forall x\in C$, $x\in\partial\varphi\left(x\right)$, and according to $P_C=\partial\psi^\star$ and the uniqueness of the projection, $\si$ and $\sii$ are respectively instantiated into the following forms, 
\begin{equation*}
\begin{aligned}
\widetilde{x}_t&=P_C\left(a-\eta_t\sum_{i=1}^{t-1}\theta_i x_i^*\right), \\
x_t&=P_C\left(\widetilde{x}_t -\eta_t\theta_t \widehat{x}_t^*\right),
\end{aligned}
\end{equation*}
and
\begin{equation*}
\begin{aligned}
\widetilde{x}_t&=P_C\left(\widetilde{x}_{t-1}-\theta_{t-1} x_{t-1}^*\right), \\
x_t&=P_C\left(\widetilde{x}_t -\theta_t \widehat{x}_t^*\right).
\end{aligned}
\end{equation*}
\end{proof}

\section{Proof of \cref{OLP}}
\label{pf:OLP}

\begin{proof}
Since $\partial\varphi$ is the identity map, $\widetilde{x}_t^\psi=\widetilde{x}_t=P_C\left(\widecheck{x}_t^\psi\right)$, $\widecheck{x}_t^\psi=a-\eta_t\sum_{i=1}^{t-1}\theta_i x_i^*$. 
Note that 
\begin{equation*}
\begin{aligned}
B_\psi\left(z,x\right)=\frac{1}{2}\left\lVert z-x\right\rVert^2, \quad \forall x, z\in C. 
\end{aligned}
\end{equation*}
According to \cref{lm:projection}, 
\begin{equation*}
\begin{aligned}
B_{\psi}\left(X_{t+1}, \widetilde{x}_t^\psi\right)-B_{\psi}\left(X_{t+1}, \widecheck{x}_t^\psi\right)&\leqslant\left\langle \widetilde{x}_t^\psi-\widecheck{x}_t^\psi,\widetilde{x}_t-X_{t+1}\right\rangle \\
&=\left\langle P_C\left(\widecheck{x}_t^\psi\right)-\widecheck{x}_t^\psi,P_C\left(\widecheck{x}_t^\psi\right)-X_{t+1}\right\rangle\leqslant 0. 
\end{aligned}
\end{equation*}
To complete the proof, it suffices to substitute the above settings together with \cref{static-reduce} into \cref{S-I}. 
\end{proof}

\section{Basis of Monotone Operators}
\label{BoMO}

All definitions and lemmas in this appendix are compiled from \cite{romano1993potential}, with some minor modifications.

\begin{definition}[Monotone and Cyclic Monotone]
\label{def:Monotone}
Let $C\subset E$, $C\neq\varnothing$, $M\colon\! C\rightrightarrows E^*$, $n\in\mathbb{N}$ and $n\geqslant 2$. $M$ is $n$-cyclically monotone if $\forall\left(x_i,x_i^*\right)\in\gra M$, $i=1, 2, \cdots, n$, 
\begin{equation*}
\sum_{i=1}^n \left\langle x_i^*, x_{i+1}-x_i\right\rangle\leqslant 0,\quad x_{n+1}\equiv x_1.
\end{equation*}
$M$ is monotone iff $M$ is $2$-cyclically monotone; $M$ is  cyclically monotone iff $\forall n\geqslant 2$, $M$ is $n$-cyclically monotone. 
\end{definition}

\begin{remark}
\label{Monotone-equivalent}
The definition of $n$-cyclically monotone  is equivalent to the following statement, $\forall\left(x_i,x_i^*\right)\in\gra M$, $i=1, 2, \cdots, n$, 
\begin{equation*}
\sum_{i=1}^n \left\langle x_i^*, x_{i}-x_{i-1}\right\rangle\geqslant 0,\quad x_{0}\equiv x_n,
\end{equation*}
which is to rearrange the subscripts in original definition in reverse order. 
\end{remark}

\begin{definition}[Integral]
\label{def:Integrals}
$C\neq\varnothing$ is a subset of $E$, $M\colon\! C\rightrightarrows E^*$ is monotone, $\overrightarrow{x y}\colon\!t\in\left[0, 1\right]\mapsto x+t\left(y-x\right)\in C$. The integral of $M$ along $\overrightarrow{x y}$ is defined as 
\begin{equation*}
\int_{\overrightarrow{x y}}\left\langle M \alpha,\,d\alpha\right\rangle\coloneqq\int_0^1 \left\langle M\left(x+t\left(y-x\right)\right), y-x\right\rangle\,d t,\quad\text{and}\quad\int_{\overrightarrow{x y}+\overrightarrow{y z}}=\int_{\overrightarrow{x y}}+\int_{\overrightarrow{y z}}.
\end{equation*}
\end{definition}

\begin{remark}
The integral is well-defined since $m\left(t\right)=\left\langle M\left(x+t\left(y-x\right)\right), y-x\right\rangle$ is a non-decreasing function on $\left[0, 1\right]$ and every non-decreasing function on $\left[0, 1\right]$ is integrable \citep[Theorem~33.1 of][]{ross2013elementary}.
\end{remark}

\begin{lemma}
\label{integral-sup}
Let $M\colon\! E\rightrightarrows E^*$ be a monotone operator, and let $\mathit{\Gamma}\subset\dom M$ be a certain finite-length oriented polyline from $x_0$ to $x$. 
Then 
\begin{equation*}
\begin{aligned}
\int_{\mathit{\Gamma}}\left\langle M\alpha, \,d \alpha\right\rangle
=\sup_{n,x_i,x_i^*}\sum_{i=0}^{n}\left\langle x_{i}^*, x_{i+1}-x_{i}\right\rangle, \quad x_{n+1}\equiv x,
\end{aligned}
\end{equation*}
where the supremum operator traverses all possible decompositions of $\mathit{\Gamma}=\overrightarrow{x_0x_1}+\overrightarrow{x_1x_2}+\cdots+\overrightarrow{x_{n}x}$, $n\in\mathbb{N}+$. 
\end{lemma}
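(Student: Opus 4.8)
The plan is to reduce everything to a single segment and then recognize the decomposition sums as Riemann sums of a non-decreasing scalar function. First I would use the additivity of the integral from \cref{def:Integrals}: writing $\Gamma=\overrightarrow{y_0 y_1}+\cdots+\overrightarrow{y_{k-1} y_k}$ with $y_0=x_0$, $y_k=x$ for the corner-to-corner segments of $\Gamma$, we get $\int_{\Gamma}=\sum_{j}\int_{\overrightarrow{y_{j-1} y_j}}$. On the right-hand side of the claimed identity, any admissible decomposition must place a breakpoint at every corner $y_j$ (otherwise a straight segment would leave $\Gamma$), so a decomposition of $\Gamma$ is precisely a choice of sub-partition on each segment, with a tag $x_i^*$ attached to every vertex except the terminal point $x$. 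Because the choices on distinct segments are independent, and the tag sitting at each shared corner $y_j$ is accounted to the following segment (as its left endpoint), the overall supremum factors as $\sum_j$ of the per-segment suprema. Hence it suffices to prove the identity for a single segment $\overrightarrow{x_0 x}$, where the terminal endpoint carries no tag.

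For the single-segment case I would fix $v=x-x_0$, parametrize $\alpha(t)=x_0+t v$ for $t\in[0,1]$, and let $m(t)$ be the non-decreasing selection of $\langle M\alpha(t),v\rangle$ that defines $\int_0^1 m$ (cf.\ the remark after \cref{def:Integrals}). The key consequence of monotonicity of $M$ is the sharper ordering $\sup\langle M\alpha(s),v\rangle\le\inf\langle M\alpha(t),v\rangle$ whenever $s<t$. A decomposition corresponds to $0=t_0<\cdots<t_{n+1}=1$ with tags $x_i^*\in M\alpha(t_i)$ for $i\le n$, and the decomposition sum becomes the left-endpoint Riemann sum $\sum_{i=0}^{n}(t_{i+1}-t_i)\langle x_i^*,v\rangle$.

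For the inequality ``$\ge$'' is trivial direction the ``$\le$'' one: I would use $\langle x_i^*,v\rangle\le\sup\langle M\alpha(t_i),v\rangle\le\inf\langle M\alpha(t),v\rangle\le m(t)$ for every $t\in(t_i,t_{i+1})$, so that $(t_{i+1}-t_i)\langle x_i^*,v\rangle\le\int_{t_i}^{t_{i+1}}m$; summing shows every decomposition sum is at most $\int_0^1 m$. For the reverse, I would choose tags realizing $\langle x_i^*,v\rangle=m(t_i)$ (possible since $m$ is a selection of $\langle M\alpha(t_i),v\rangle$) and control the gap by the telescoping estimate $0\le\int_0^1 m-\sum_i(t_{i+1}-t_i)m(t_i)\le\sum_i(t_{i+1}-t_i)\big(m(t_{i+1})-m(t_i)\big)\le\lVert P\rVert\,(m(1)-m(0))$, which tends to $0$ as the mesh $\lVert P\rVert$ shrinks. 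Thus the supremum is at least $\int_0^1 m$, and combining the two inequalities gives the single-segment identity; the additivity step then finishes the proof.

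The hard part will be handling the multivaluedness of $M$: the quantity $\langle M\alpha(t),v\rangle$ is genuinely set-valued, so I must verify that a left-endpoint tag may be pushed up to the full supremum $\sup\langle M\alpha(t_i),v\rangle$ and yet still lie below the integrand on the entire open interval to its right, which is exactly what the strict ordering coming from monotonicity guarantees, and that the terminal point $x$ is never tagged, so no possibly unbounded $\sup\langle M x,v\rangle$ enters and all tags stay finite and bounded by $m(1)$. The bookkeeping that every decomposition refines the corners of $\Gamma$ and that each shared corner is tagged by the following segment is routine, but it must be stated carefully to justify the factorization of the supremum in the first paragraph.
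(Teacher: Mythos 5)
Your proof is correct; note, though, that the paper itself contains no proof of \cref{integral-sup} to compare against --- like the rest of \cref{BoMO} it is stated as imported from \cite{romano1993potential} --- so your argument stands as a self-contained substitute rather than an alternative to a proof in the text. The structure is sound: additivity of the integral plus independence of per-segment tag choices reduces the claim to a single segment; on a segment, decomposition sums are exactly left-endpoint Riemann sums $\sum_i (t_{i+1}-t_i)\langle x_i^*,v\rangle$ of a non-decreasing selection $m$; the ordering $\sup\langle M\alpha(s),v\rangle\leqslant\inf\langle M\alpha(t),v\rangle$ for $s<t$ (which is precisely monotonicity restricted to the segment) gives every decomposition sum $\leqslant\int_0^1 m$; and choosing tags with $\langle x_i^*,v\rangle=m(t_i)$ together with the mesh bound $\lVert P\rVert\,\big(m(1)-m(0)\big)$ gives the reverse inequality, all quantities being finite because $\Gamma\subset\dom M$ and tags are continuous functionals evaluated at $v$. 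Two pieces of the bookkeeping you flag should be made explicit. First, a decomposition is only forced to contain the \emph{genuine} corners of $\Gamma$: if consecutive segments of $\Gamma$ are collinear, a decomposition may skip that corner, and then the per-segment factorization does not literally apply. The clean fix is to observe that refining a decomposition never decreases its sum --- insert a breakpoint $w$ with tag $w^*$ inside $\overrightarrow{x_i x_{i+1}}$, split the term $\langle x_i^*, x_{i+1}-x_i\rangle$ at $w$, and use $\langle w^*-x_i^*,\,x_{i+1}-w\rangle\geqslant 0$, which follows from monotonicity of $M$ because $x_{i+1}-w$ and $w-x_i$ are positive multiples of the same vector --- so the supremum over all decompositions equals the supremum over corner-refining ones, and your factorization into per-segment suprema goes through. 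Second, the selection-independence of $\int_0^1 m$ that you borrow from the remark after \cref{def:Integrals} is itself a consequence of your ordering: the sets $\langle M\alpha(t),v\rangle$ have pairwise disjoint interiors as $t$ varies, so they are non-degenerate for at most countably many $t$, and any two selections agree almost everywhere; stating this closes the only gap in appealing to ``the'' integrand $m$.
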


\begin{definition}[Conservative Operator]
\label{def:Conservativity}
A monotone operator $M$ is said to be conservative if 
\begin{equation*}
\oint_{\mathit{\Gamma}}\left\langle M\alpha, \,d \alpha\right\rangle=0,
\end{equation*}
for every closed polyline $\mathit{\Gamma}\subset\dom M$ with finite length.
\end{definition}

\begin{lemma}
\label{cyclically-monotone-conservative}
Let $M\colon\! E\rightrightarrows E^*$ be a multivalued map, then 

\textup{(1)} $M$ is cyclically monotone $\Longrightarrow M$ is conservative; 

\textup{(2)} $M$ is conservative and $\dom M$ is convex $\Longrightarrow M$ is cyclically monotone. 
\end{lemma}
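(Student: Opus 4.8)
The plan is to reduce both implications to \cref{integral-sup}, which expresses the path integral of a monotone operator as a supremum of discrete cyclic-type sums $\sum_i\langle x_i^*,x_{i+1}-x_i\rangle$ over finite decompositions of the polyline. Each such sum is exactly of the shape appearing in \cref{def:Monotone}, so the whole argument becomes a comparison between these discrete sums and the defining cyclic-monotonicity inequality.

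For~(1), I would start from an arbitrary closed finite-length polyline $\mathit{\Gamma}\subset\dom M$ running from a point $x_0$ back to $x_0$, and apply \cref{integral-sup} with $x=x_0$ to write $\oint_{\mathit{\Gamma}}\langle M\alpha,d\alpha\rangle=\sup\sum_{i=0}^{n}\langle x_i^*,x_{i+1}-x_i\rangle$ with $x_{n+1}\equiv x_0$. Every admissible summand is a cyclic sum over the $n+1$ points $x_0,\dots,x_n$ with $(x_i,x_i^*)\in\gra M$, hence is $\leqslant 0$ by $(n+1)$-cyclic monotonicity; taking the supremum gives $\oint_{\mathit{\Gamma}}\leqslant 0$. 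To upgrade this to equality I would run the identical argument on the reversed polyline $\mathit{\Gamma}^{-1}$, which is again a closed finite-length polyline in $\dom M$, obtaining $\oint_{\mathit{\Gamma}^{-1}}\leqslant 0$. Since reversing orientation negates the integral --- immediate from \cref{def:Integrals} via the substitution $t\mapsto 1-t$ on each segment, together with the additivity relation --- we have $\oint_{\mathit{\Gamma}^{-1}}=-\oint_{\mathit{\Gamma}}$, so $\oint_{\mathit{\Gamma}}\geqslant 0$ as well. Combining the two bounds yields $\oint_{\mathit{\Gamma}}=0$, which is conservativity by \cref{def:Conservativity}.

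For~(2), I would fix $n\geqslant 2$ and arbitrary $(a_i,a_i^*)\in\gra M$ for $i=1,\dots,n$, and exploit convexity of $\dom M$ to guarantee that the closed polyline $\mathit{\Gamma}=\overrightarrow{a_1a_2}+\cdots+\overrightarrow{a_na_1}$ lies entirely in $\dom M$. Conservativity gives $\oint_{\mathit{\Gamma}}=0$, and \cref{integral-sup} then bounds this integral below by any single admissible decomposition; choosing the one whose breakpoints are the vertices $a_1,\dots,a_n$ with covectors $a_i^*$ yields $0=\oint_{\mathit{\Gamma}}\geqslant\sum_{i=1}^{n}\langle a_i^*,a_{i+1}-a_i\rangle$ with $a_{n+1}\equiv a_1$. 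This is precisely the $n$-cyclic monotonicity inequality of \cref{def:Monotone}, and since $n$ was arbitrary, $M$ is cyclically monotone.

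The main obstacle I anticipate is the orientation-reversal identity $\oint_{\mathit{\Gamma}^{-1}}=-\oint_{\mathit{\Gamma}}$ used in~(1): because $M$ is multivalued the integrand is a priori set-valued, so I must argue via the monotone (hence a.e.\ single-valued and Riemann-integrable) scalar function $m(t)=\langle M(x+t(y-x)),y-x\rangle$ on each segment that the reparametrization cleanly flips the sign. The remaining care-points are purely bookkeeping: checking that \cref{integral-sup} indeed applies to a closed polyline by taking $x=x_0$, and that the finite decomposition selected in~(2) is admissible, i.e.\ its vertices lie on $\mathit{\Gamma}$ and its covectors belong to the corresponding values of $M$.
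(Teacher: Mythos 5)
You should first note a point of comparison that does not exist: the paper never proves \cref{cyclically-monotone-conservative}. It sits in \cref{BoMO}, which the paper describes as compiled from \cite{romano1993potential}, and unlike \cref{Conservative-convex} and \cref{Potential} it is stated without any argument (as is \cref{integral-sup}, on which you rely). So your proposal is not an alternative route but a reconstruction of a proof the paper omits, and it is correct. In part (1), cyclic monotonicity entails monotonicity, so \cref{integral-sup} legitimately applies to a closed finite-length polyline $\mathit{\Gamma}\subset\dom M$ with endpoint $x=x_0$; every admissible decomposition sum is then a cyclic sum in the sense of \cref{def:Monotone} over the points $x_0,\dots,x_n$, hence nonpositive, giving $\oint_{\mathit{\Gamma}}\left\langle M\alpha,\,d\alpha\right\rangle\leqslant 0$. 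The orientation-reversal identity you single out as the main obstacle does hold and for exactly the reason you give: on each segment, monotonicity makes any selection of $m\left(t\right)=\left\langle M\left(x+t\left(y-x\right)\right), y-x\right\rangle$ non-decreasing and single-valued off a countable set, so the integral of \cref{def:Integrals} is selection-independent and the substitution $t\mapsto 1-t$ negates it segment by segment; since $\mathit{\Gamma}^{-1}$ is again a closed finite-length polyline in $\dom M$, the two bounds force $\oint_{\mathit{\Gamma}}=0$, which is \cref{def:Conservativity}. In part (2), your use of the hypotheses is exactly right: conservativity presupposes monotonicity (so \cref{integral-sup} applies), convexity of $\dom M$ is precisely what places the cycle $\overrightarrow{a_1a_2}+\cdots+\overrightarrow{a_na_1}$ inside $\dom M$, and lower-bounding the vanishing loop integral by the single admissible decomposition whose breakpoints are the vertices $a_i$ with covectors $a_i^*\in M a_i$ yields $\sum_{i=1}^{n}\left\langle a_i^*, a_{i+1}-a_i\right\rangle\leqslant 0$ with $a_{n+1}\equiv a_1$, i.e., $n$-cyclic monotonicity for arbitrary $n\geqslant 2$. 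The only caveat worth recording is that your proof inherits whatever trust is placed in \cref{integral-sup}, which the paper also leaves unproved; a fully self-contained treatment would need to establish that supremum formula as well.
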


\begin{definition}[Potential]
\label{def:Potential}
Let $C\neq\varnothing$ be a closed convex subset of $E$, and let $M\colon\! C\rightrightarrows E^*$ be a conservative operator. $\varphi$ is a potential of $M$ if 
\begin{equation*}
\begin{aligned}
\varphi\left(x\right)-\varphi\left(x_0\right)=\int_{\gamma\left(x_0,x\right)} \left\langle M\alpha, \,d\alpha\right\rangle, \quad\forall x, x_0\in C, \quad\text{and}\quad
\varphi\left(y\right)=+\infty,\quad\forall y\notin C,
\end{aligned}
\end{equation*}
where $\gamma\left(x_0,x\right)$ denotes an arbitrary finite-length oriented polyline in $E$ from $x_0$ to $x$.
\end{definition}

\begin{lemma}
\label{Conservative-convex}
Let $C\neq\varnothing$ be a closed convex subset of $E$, and let $M\colon\! C\rightrightarrows E^*$ be a conservative operator. If $\varphi$ is a potential of $M$, then $\varphi$ is convex and lower semicontinuous. 
\end{lemma}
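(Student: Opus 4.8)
The plan is to reduce this statement to \cref{subdifferential-convexity} by showing that every pair in $\gra M$ is also a subgradient pair of $\varphi$, i.e.\ $\gra M\subseteq\gra\partial\varphi$. Once this containment is in hand we have $C=\dom M\subseteq\dom\partial\varphi\subseteq\dom\varphi=C$, so $\dom\varphi=\dom\partial\varphi=C$, and \cref{subdifferential-convexity} immediately yields that $\varphi$ is convex and lower semicontinuous. Two preliminary facts are needed. First, $\dom\varphi=C$: by \cref{def:Potential} $\varphi$ equals $+\infty$ off $C$, while for $x_0,x\in C$ the difference $\varphi(x)-\varphi(x_0)$ is a finite integral (the integrand is non-decreasing, hence integrable, on $[0,1]$), so $\varphi$ is finite throughout $C$. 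Second, $\dom M=C$: the potential integrates $M$ along polylines joining arbitrary points of $C$, which forces $C\subseteq\dom M$.

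First I would fix an arbitrary $\left(x_0,x_0^*\right)\in\gra M$ and verify the subgradient inequality $\varphi(y)-\varphi(x_0)\geqslant\left\langle x_0^*,y-x_0\right\rangle$ for all $y\in E$. For $y\notin C$ this is trivial since $\varphi(y)=+\infty$. For $y\in C$, convexity of $C$ guarantees that the straight segment $\overrightarrow{x_0 y}$ lies in $C=\dom M$, so by path-independence (conservativity) I may evaluate the potential along this segment:
\[
\varphi(y)-\varphi(x_0)=\int_{\overrightarrow{x_0 y}}\left\langle M\alpha,\,d\alpha\right\rangle=\int_0^1\left\langle M\left(x_0+t\left(y-x_0\right)\right),\,y-x_0\right\rangle\,dt .
\]

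The core estimate is then a direct application of monotonicity. For $t\in(0,1]$ and any selection $\alpha_t^*\in M\left(x_0+t\left(y-x_0\right)\right)$, \cref{def:Monotone} applied to the pairs $\left(x_0,x_0^*\right)$ and $\left(x_0+t\left(y-x_0\right),\alpha_t^*\right)$ gives $t\left\langle\alpha_t^*-x_0^*,\,y-x_0\right\rangle\geqslant 0$, hence $\left\langle\alpha_t^*,\,y-x_0\right\rangle\geqslant\left\langle x_0^*,\,y-x_0\right\rangle$. Thus the integrand dominates the constant $\left\langle x_0^*,y-x_0\right\rangle$ for almost every $t$, and integrating over $[0,1]$ yields exactly $\varphi(y)-\varphi(x_0)\geqslant\left\langle x_0^*,y-x_0\right\rangle$. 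This shows $x_0^*\in\partial\varphi(x_0)$, completing the containment $\gra M\subseteq\gra\partial\varphi$.

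I expect the main obstacle to be bookkeeping around the multivalued integrand rather than anything deep: one must check that the lower bound $\left\langle\alpha_t^*,y-x_0\right\rangle\geqslant\left\langle x_0^*,y-x_0\right\rangle$ holds independently of the measurable selection used to define the integral, which is guaranteed because, per the remark following \cref{def:Integrals}, the integrand is non-decreasing and hence essentially single-valued. Conservativity enters only to justify integrating along the convenient straight segment; notably, no appeal to cyclic monotonicity (\cref{cyclically-monotone-conservative}) is required, since plain monotonicity already delivers the subgradient inequality.
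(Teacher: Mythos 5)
Your proof is correct, but it takes a genuinely different route from the paper's. The paper fixes $x_0\in C$ and, invoking \cref{integral-sup} together with path-independence, rewrites $\varphi\left(x\right)-\varphi\left(x_0\right)$ as a supremum of functions that are affine in $x$ (one per decomposition tuple $\left(n,x_i,x_i^*\right)$), so that $\varphi$ is exhibited directly as a superior envelope of continuous affine functions restricted to the closed convex set $C$; convexity and lower semicontinuity follow at once, without ever producing a subgradient of $\varphi$. You instead produce subgradients: plain monotonicity along the segment $\overrightarrow{x_0 y}$ gives the pointwise bound $\left\langle\alpha_t^*,y-x_0\right\rangle\geqslant\left\langle x_0^*,y-x_0\right\rangle$ for a.e.\ $t$, integration yields the subgradient inequality, hence $\gra M\subseteq\gra\partial\varphi$, and you then quote \cref{subdifferential-convexity} as a black box. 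Both arguments ultimately rest on the same envelope mechanism (the cited lemma is itself proved that way), but yours buys a stronger intermediate fact of independent interest --- the subdifferential of the potential extends $M$, a Rockafellar-type statement --- and avoids \cref{integral-sup} entirely, at the cost of needing $\dom\partial\varphi$ to equal $C$ exactly. That exactness is precisely where your preliminary claim $\dom M=C$ matters; it is not a gap relative to the paper, since the paper's own proof likewise assumes that polylines in $C$ lie in $\dom M$ (the hypothesis of \cref{integral-sup}), i.e., that the conservative operator is nonempty-valued on all of $C$, which the well-definedness of the potential in \cref{def:Potential} presupposes.
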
 

\begin{proof}
$\forall x, x_0\in C$, let $\gamma\left(x_0,x\right)$ be an arbitrary finite-length oriented polyline in $C$ from $x_0$ to $x$. 
According to \cref{integral-sup}, we have 
\begin{equation*}
\begin{aligned}
\varphi\left(x\right)-\varphi\left(x_0\right)=\int_{\gamma\left(x_0,x\right)} \left\langle M\alpha, \,d\alpha\right\rangle=\sup_{n,x_i,x_i^*}\left(\left\langle x_{n}^*, x-x_{n}\right\rangle+\sum_{i=0}^{n-1}\left\langle x_{i}^*, x_{i+1}-x_{i}\right\rangle\right).
\end{aligned}
\end{equation*}
The subsequent proof is similar to that of \cref{subdifferential-convexity}. 
Indeed, for each fixed tuple $\left(n,x_i,x_i^*\right)$, the function $x\mapsto\left\langle x_{n}^*, x-x_{n}\right\rangle+\sum_{i=0}^{n-1}\left\langle x_{i}^*, x_{i+1}-x_{i}\right\rangle$ is linear (and thus convex and lower semicontinuous). 
It follows that the superior envelope of these functions is convex and lower semicontinuous. 
Note that $C$ is closed and convex, 
therefore we obtain that $\varphi$ is convex and lower semicontinuous. 
\end{proof}

\begin{lemma}
\label{Potential}
Let $C\neq\varnothing$ be a convex subset of $E$. 
If $\dom\varphi=\dom\partial\varphi=C$, then $\partial\varphi$ is conservative, and the potential of $\partial\varphi$ can be chosen as $\varphi$, i.e., 
\begin{equation*}
\begin{aligned}
\varphi\left(x\right)-\varphi\left(z\right)=\int_{\gamma\left(z,x\right)}\left\langle\partial\varphi\left(\alpha\right),\,d\alpha\right\rangle, \quad\forall x,z\in C,
\end{aligned}
\end{equation*}
where $\gamma\left(z,x\right)$ denotes an arbitrary finite-length oriented polyline in $C$ from $z$ to $x$. 
\end{lemma}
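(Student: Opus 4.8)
The plan is to promote $\partial\varphi$ from a subdifferential to a cyclically monotone operator, invoke \cref{cyclically-monotone-conservative} to obtain conservativity, and then compute the path integral by squeezing $\varphi\left(x\right)-\varphi\left(z\right)$ between Riemann sums built from the subgradient inequality. First I would record that, by \cref{def:Subdifferential} and the equivalence recalled in the remark after it, every $\left(y,y^*\right)\in\gra\partial\varphi$ obeys $\varphi\left(w\right)-\varphi\left(y\right)\geqslant\left\langle y^*,w-y\right\rangle$ for all $w\in E$. Summing this inequality around any cycle $\left(x_i,x_i^*\right)\in\gra\partial\varphi$, $i=1,\dots,n$, $x_{n+1}\equiv x_1$, the telescoping left-hand side collapses to $\varphi\left(x_1\right)-\varphi\left(x_1\right)=0$, so $\sum_{i=1}^n\left\langle x_i^*,x_{i+1}-x_i\right\rangle\leqslant 0$; thus $\partial\varphi$ is cyclically monotone in the sense of \cref{def:Monotone}, and \cref{cyclically-monotone-conservative}\,(1) makes it conservative. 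Conservativity renders the integral path-independent, so it is enough to prove the identity along the straight segment $\overrightarrow{zx}$ (which lies in $C$ by convexity) and then propagate it to an arbitrary polyline.

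For the segment I would set $p_t=z+t\left(x-z\right)$ and, choosing selections $p_t^{*}\in\partial\varphi\left(p_t\right)$, write $m\left(t\right)=\left\langle p_t^{*},x-z\right\rangle$. Monotonicity of $\partial\varphi$ forces $m$ to be non-decreasing, hence bounded and integrable on $\left[0,1\right]$ (remark after \cref{def:Integrals}), and $\int_{\overrightarrow{zx}}\left\langle\partial\varphi\left(\alpha\right),d\alpha\right\rangle=\int_0^1 m\left(t\right)\,dt$ by \cref{def:Integrals}. On the uniform partition $t_i=i/\left(n+1\right)$ the subgradient inequality at the left endpoint of each subinterval gives $\varphi\left(p_{t_{i+1}}\right)-\varphi\left(p_{t_i}\right)\geqslant\left\langle p_{t_i}^{*},p_{t_{i+1}}-p_{t_i}\right\rangle$, while applying it with the two points exchanged gives the reverse bound involving $p_{t_{i+1}}^{*}$. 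Telescoping the sums sandwiches $\varphi\left(x\right)-\varphi\left(z\right)$ between the left Riemann sum $L_n=\frac{1}{n+1}\sum_{i=0}^n m\left(t_i\right)$ and the right Riemann sum $R_n=\frac{1}{n+1}\sum_{i=0}^n m\left(t_{i+1}\right)$. Since $R_n-L_n=\frac{1}{n+1}\left(m\left(1\right)-m\left(0\right)\right)\to 0$ and both sums converge to $\int_0^1 m$, the squeeze yields $\varphi\left(x\right)-\varphi\left(z\right)=\int_{\overrightarrow{zx}}\left\langle\partial\varphi\left(\alpha\right),d\alpha\right\rangle$.

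To finish, for an arbitrary polyline $z=q_0\to\dots\to q_m=x$ I would split $\int_{\gamma\left(z,x\right)}$ into segment integrals via the additivity in \cref{def:Integrals}, replace each by $\varphi\left(q_{j+1}\right)-\varphi\left(q_j\right)$ using the segment result, and telescope to $\varphi\left(x\right)-\varphi\left(z\right)$, matching \cref{def:Potential}. The hard part will be the analytic core of the segment step: because $\partial\varphi$ is multivalued, $m$ is pinned down only away from the at most countably many jumps of a monotone function, so I must check that every admissible selection produces Riemann sums with the common limit $\int_0^1 m$. The telescoping formula for $R_n-L_n$, together with the finiteness of $m$ at the endpoints $z,x\in\dom\partial\varphi$ guaranteeing boundedness, is precisely what keeps the squeeze immune to this selection ambiguity; the remaining manipulations are routine telescoping of the subgradient inequality.
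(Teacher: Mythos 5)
Your proof is correct, and its first two stages coincide with the paper's: you telescope the subgradient inequality around a cycle to get cyclic monotonicity of $\partial\varphi$ (the paper's step 1), and you invoke \cref{cyclically-monotone-conservative} to get conservativity (step 2). Where you genuinely diverge is the analytic core, namely the segment identity $\varphi\left(x\right)-\varphi\left(z\right)=\int_{\overrightarrow{zx}}\left\langle\partial\varphi\left(\alpha\right),d\alpha\right\rangle$. The paper sets $F\left(\lambda\right)=\varphi\left(z+\lambda\left(x-z\right)\right)$ and $f\left(\lambda\right)=\left\langle\partial\varphi\left(z+\lambda\left(x-z\right)\right),x-z\right\rangle$ and runs through one-dimensional real analysis: $F$ is convex, hence of bounded variation and almost everywhere differentiable with $F'=f$ a.e.\ by \cref{Lebesgue}, $F$ is Lipschitz hence absolutely continuous, and \cref{Lebesgue-Vitali} (the fundamental theorem of calculus for absolutely continuous functions) yields $\int_0^1 f=F\left(1\right)-F\left(0\right)$. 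Your two-sided Riemann-sum squeeze replaces this measure-theoretic machinery with nothing beyond the subgradient inequality applied at both endpoints of each subinterval, plus Riemann integrability of the monotone function $m$; it is more elementary and self-contained. It also buys something the paper's route leaves implicit: since the squeeze pins $\int_0^1 m$ to the selection-independent value $\varphi\left(x\right)-\varphi\left(z\right)$ for \emph{every} admissible selection $m$, well-definedness of the multivalued integral along the segment comes out as a byproduct rather than resting solely on the remark after \cref{def:Integrals}. Your concluding telescoping over an arbitrary polyline in $C$ is likewise sound (every vertex lies in $\dom\varphi$, so all terms are finite), and it delivers the polyline identity directly, whereas the paper obtains it from path-independence; both routes are legitimate.
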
 

\begin{proof}
The proof of \cref{Potential} is derived from the following steps, 
\begin{align*}
\dom\varphi=\dom\partial\varphi=C \,
\xLongrightarrow{~\hypertarget{step1}{\text{step 1}}~}~& \partial\varphi\colon\! C\rightrightarrows E^* \textup{~is cyclically monotone} \\
\xLongleftrightarrow{~\hypertarget{step2}{\text{step 2}}~}~& \partial\varphi\colon\! C\rightrightarrows E^* \textup{~is conservative} \\
\xLongrightarrow{~\hypertarget{step3}{\text{step 3}}~}~& \textup{the potential of~} \partial\varphi \textup{~can be chosen as~} \varphi.
\end{align*}

The proof of \hyperlink{step1}{step 1} is as follows. $\forall n\geqslant 2$, $\forall \left(x_i,x_i^*\right)\in \gra\partial\varphi$, $i=1, 2, \cdots, n$, $x_{n+1}\equiv x_1$, 
\begin{equation*}
\varphi\left(x_{i+1}\right)-\varphi\left(x_{i}\right)\geqslant\left\langle x_i^*, x_{i+1}-x_i\right\rangle,
\end{equation*}
thus, 
\begin{equation*}
\sum_{i=1}^{n}\left\langle x_i^*, x_{i+1}-x_i\right\rangle\leqslant\sum_{i=1}^{n}\varphi\left(x_{i+1}\right)-\varphi\left(x_{i}\right)=0.
\end{equation*}

\hyperlink{step2}{Step 2} holds directly from \cref{cyclically-monotone-conservative}.

The proof of \hyperlink{step3}{step 3} relies on the following lemmas.

\begin{lemma}[Lebesgue.~Theorem~4.4 of {\citealp{komornik2016lectures}}]
\label{Lebesgue}
Every non-increasing or non-decreasing function $F\colon\!\left[a, b\right]\rightarrow\mathbb{R}$ is a.e. differentiable.
\end{lemma} 

\begin{lemma}[Lebesgue-Vitali.~Theorem~6.5 of {\citealp{komornik2016lectures}}]
\label{Lebesgue-Vitali}
Let $f\colon\!\left[a, b\right]\rightarrow\mathbb{R}$. If $F\colon\!\left[a, b\right]\rightarrow\mathbb{R}$ is absolutely continuous, has bounded variation, and $F'=f$ a.e. Then $f$ is integrable and 
\begin{equation*}
\int_a^b f\left(x\right)\,d x=F\left(b\right)-F\left(a\right).
\end{equation*}
\end{lemma} 

Let 
\begin{equation*}
\begin{aligned}
F\left(\lambda\right)=\varphi\left(z+\lambda\left(x-z\right)\right),\quad 
f\left(\lambda\right)=\left\langle\partial\varphi\left(z+\lambda\left(x-z\right)\right), x-z\right\rangle,\quad\lambda\in\left[0, 1\right],\quad\forall x, z\in C,
\end{aligned}
\end{equation*}
then $F$ is convex on $\left[0, 1\right]$, that is, $\exists \tau\in\left[0, 1\right]$, such that $F$ is non-increasing on $\left[0, \tau\right]$ and non-decreasing on $\left[\tau, 1\right]$. Thus, $F$ has bounded variation, and $F'=\partial F$ a.e. according to \cref{Lebesgue}. Since $\forall \lambda, \mu\in\left[0, 1\right]$, 
\begin{equation*}
\begin{aligned}
F\left(\mu\right)-F\left(\lambda\right)
&=\varphi\left(z+\mu\left(x-z\right)\right)-\varphi\left(z+\lambda\left(x-z\right)\right) \\
&\geqslant \left\langle\partial\varphi\left(z+\lambda\left(x-z\right)\right), x-z\right\rangle\left(\mu-\lambda\right)
=f\left(\lambda\right)\left(\mu-\lambda\right)
\end{aligned}
\end{equation*}
we have $f\left(\lambda\right)\subset\partial F\left(\lambda\right)$, and then $F'=f$ a.e.

Note that 
\begin{equation*}
\begin{aligned}
| F\left(\lambda\right)-F\left(\mu\right)|
\leqslant\max\left\{| f\left(\lambda\right)|, | f\left(\mu\right)|\right\} | \lambda-\mu |
\leqslant\max\left\{|\sup f\left(0\right)|, |\inf f\left(1\right)|\right\} | \lambda-\mu |, 
\end{aligned}
\end{equation*}
$F$ is Lipschitz continuous, then $F$ is absolutely continuous. According to \cref{Lebesgue-Vitali}, and note that $\partial\varphi$ is conservative, we have 
\begin{equation*}
\begin{aligned}
\int_{\gamma\left(z,x\right)}\left\langle\partial\varphi\left(\alpha\right),\,d\alpha\right\rangle
=\int_{\overrightarrow{z x}}\left\langle\partial\varphi\left(\alpha\right),\,d\alpha\right\rangle=\int_0^1 f\left(\lambda\right)\,d\lambda
=F\left(1\right)-F\left(0\right)=\varphi\left(x\right)-\varphi\left(z\right),
\end{aligned}
\end{equation*}
which completes the proof. 
\end{proof}

\end{document}